\setlist{nosep}
\newcommand{\eat}[1]{}
\def\doctype{1}
\def\tsubmission{2}
	\newcommand{\full}[1]{}
	\newcommand{\submit}[1]{#1}
	\newcommand{\full}[1]{#1}
	\newcommand{\submit}[1]{}
\newtheorem{lemma}{Lemma}
\newtheorem{theorem}[lemma]{Theorem}
\newtheorem{definition}[lemma]{Definition}
\newtheorem{corollary}[lemma]{Corollary}
\newtheorem{proposition}[lemma]{Proposition}
\newcommand{\cD}{{\cal D}}
\newcommand{\cF}{{\cal F}}
\newcommand{\cH}{{\cal H}}
\newcommand{\eps}{\epsilon}
\newcommand{\yhat}{\hat{y}}
\newcommand{\E}{\mathbb{E}}
\newcommand{\R}{\mathbb{R}}
\newcommand{\Z}{\mathbb{Z}}
\DeclareMathOperator{\med}{median}
\newcommand{\ypred}{\hat{y}}
\title{Faster Matchings via Learned Duals}
\begin{document}

\author{Michael Dinitz\thanks{Department of Computer Science, Johns Hopkins University, Baltimore, MD.  \texttt{mdinitz@cs.jhu.edu}.  Supported in part by NSF grant CCF-1909111.} \and Sungjin Im\thanks{Electrical Engineering and Computer Science, University of California, 5200 N. Lake Road, Merced CA 95344. \texttt{sim3@ucmerced.edu}. Supported in part by NSF grants CCF-1617653 and CCF-1844939.} \and Thomas Lavastida\thanks{Tepper School of Business, Carnegie Mellon University, Pittsburgh, PA.  \texttt{tlavasti@andrew.cmu.edu}.  Supported in part by NSF grants CCF-1824303,  CCF-1845146, CCF-1733873 and CMMI-1938909.} \and Benjamin Moseley\thanks{Tepper School of Business, Carnegie Mellon University, Pittsburgh, PA.  \texttt{moseleyb@andrew.cmu.edu}.  Supported in part by  a Google Research Award, an Infor Research Award, a Carnegie Bosch Junior Faculty Chair and NSF grants CCF-1824303,  CCF-1845146, CCF-1733873 and CMMI-1938909.} \and Sergei Vassilvitskii\thanks{Google Research New York, NY.  \texttt{sergeiv@google.com}} }

\maketitle

\begin{abstract}
A recent line of research investigates how algorithms can be augmented with machine-learned predictions to overcome worst case lower bounds.  This area has revealed interesting algorithmic insights into problems, with particular success in the design of competitive online algorithms.  However, the question of improving algorithm running times with predictions has largely been unexplored.
  
We take a first step in this direction by combining the idea of machine-learned predictions with the idea of ``warm-starting" primal-dual algorithms. We consider one of the most important primitives in combinatorial optimization: weighted bipartite matching and its generalization to $b$-matching. We identify three key challenges when using learned dual variables in a primal-dual algorithm.  First, predicted duals may be infeasible, so we give an algorithm that efficiently maps predicted infeasible duals to nearby feasible solutions.  Second, once the duals are feasible, they may not be optimal, so we show that they can be used to quickly find an optimal solution. Finally, such predictions are useful only if they can be learned, so we show that the problem of learning duals for matching has low sample complexity.  We validate our theoretical findings through experiments on both real and synthetic data.  As a result we give a rigorous, practical, and empirically effective method to compute bipartite matchings.
\end{abstract}

\section{Introduction} \label{sec:intro}

Classical algorithm analysis considers worst case performance of algorithms, capturing running times, approximation and competitive ratios, space complexities, and other notions of performance. Recently there has been a renewed interest in finding formal ways to go beyond worst case analysis~\cite{roughgardenbook},
to better understand performance of algorithms observed in practice, and develop new methods tailored to typical inputs observed. 

An emerging line of research dovetails this with progress in machine learning, and asks how algorithms can be augmented with machine-learned predictors to circumvent worst case lower bounds when the predictions are good, and approximately match them otherwise (see~\citet{MitzenmacherVassilvitskii} for a  survey).  Naturally, a rich  area of applications of this paradigm has been in online algorithms, where the additional information revealed by the predictions reduces the uncertainty about the future and can lead to better choices, and thus better competitive ratios. For instance, see the work by~\citet{LykourisVassilvitskii, Rohatgi, Panigrahy} on caching;~\citet{Secretary, DuttingLPV} on the classic secretary problem;~\citet{Purohit, LattanziLMV} on scheduling; ~\citet{Purohit,AnandGP20} on ski rental; and ~\citet{BamasMS20} on set cover.

However, the power of predictions is not  limited to improving online algorithms. Indeed, the aim of the empirical paper that jump-started this area by \citet{Kraska} was to improve running times for basic indexing problems. The main goal and contribution of this work is to show that at least in one important setting (weighted bipartite matching), we can give formal justification for using machine learned predictions to improve running times: there are predictions which can provably be learned, and if these predictions are ``good'' then we have running times that outperform standard methods both in theory and empirically.

How can predictions help with running time?  One intuitive approach, which has been used extensively in practice, is through the use of ``warm-start" heuristics~\cite{primal-dual-warmstart,GondzioG15,Gondzio98,NairDDV18}, where instead of starting with a blank slate, the algorithm begins with some starting state (which we call a warm-start ``solution" or ``seed") which hopefully allows for faster completion. 
While it is a common technique, there is a dearth of analysis understanding what constitutes a good warm-start, when such initializations are helpful, and how they can best be leveraged.

Thus we have a natural goal: put warm-start heuristics on firm theoretical footing by interpreting the warm-start solution as learned predictions.  In this set up we are given a number of instances of the problem (the training set), and we can use them to compute a warm-start solution that will (hopefully) allow us to more quickly compute the optimal solution on future, test-time, instances.  There are three challenges that we must address: 

\begin{enumerate}\itemsep=0in
    \item[(i)] {\bf Feasibility.} The learned prediction (warm-start solution) might not even be \emph{feasible} for the specific instance we care about! For example, the learned solution may be matching an edge that does not exist in the graph at testing time. 
    \item[(ii)] {\bf Optimization.} If the warm-start solution is feasible and  near-optimal then we want the algorithm to take advantage of it. In other words, we would like our running time to be a function of the quality of the learned solution.
    \item[(iii)] {\bf Learnability.} It is easy to design predictions that are enormously helpful but which cannot actually be learned (e.g., the ``prediction" is the optimal solution).  We need to ensure that a typical solution learned from a few instances of the problem generalizes well to new examples, and thus offers potential speedups.   
\end{enumerate}

If we can overcome these three challenges, we will have an \emph{end-to-end} framework for speeding up algorithms via learned predictions: use the solution to challenge (iii) to learn the predictions from historical data,  
use the solution to challenge (i) to quickly turn the prediction into something feasible for the particular problem instance while preserving near-optimality, and then use this as a warm-start seed in the solution to challenge (ii).

\subsection{Our Contributions}
We focus on one of the fundamental primitives of combinatorial optimization: computing bipartite matchings. For the bipartite minimum-weight perfect matching (MWPM) problem, as well as its extension to $b$-matching, we show that the above three challenges can be solved.

Instead, following the work of~\cite{DevenurHayes,VeeVS}, we look at the \emph{dual} problem; that is, the dual to the natural linear program.
We quantify the ``quality'' of a prediction $\hat y$ by its $\ell_1$-distance from the true optimal dual $y^*$, i.e., by $\|\hat y - y^*\|_1$.  The smaller quantities correspond to better predictions.  Since the dual is a packing problem we must contend with feasibility: we give a simple linear time algorithm that converts the prediction $\hat y$ into a feasible dual while increasing the $\ell_1$ distance by a factor of at most 3.

Next, we run the Hungarian method starting with the resulting feasible dual. Here, we show that the running time is in proportional to the $\ell_1$ distance of the feasible dual to the optimal dual (Theorem~\ref{thm:PD-main}). Finally, we show via a pseudo-dimension argument that not many samples are needed before the empirically optimal seed is a good approximation of the true optimum (Theorem~\ref{thm:learning-PM-main}), and that this empirical optimum can be computed efficiently\full{ (Theorem~\ref{thm:erm-alg})}.   For the learning argument, we assume that matching instances are drawn from a fixed but unknown distribution $\cD$. 

Putting it all together gives us our main result.
\begin{theorem}[Informal] \label{thm:informal}
There are three algorithms (learning, feasibility, optimization) with the following guarantees.
\begin{itemize}
    \item Given a (possibly infeasible) dual $\hat y$ from the learning algorithm, there exists an $O(m+n)$ time algorithm that takes a problem instance $c$,  and outputs a feasible dual $\hat y'(c)$ such that $\|\hat y'(c) - y^*(c)\|_1 \leq 3 \|\hat y - y^*(c)\|_1$.
    \item The optimization algorithm takes as input feasible dual $\hat y'(c)$ and outputs a minimum weight perfect matching, and runs in time $\tilde O(m\sqrt{n}\ \cdot \min \{\| \hat y'(c) - y^*(c)\|_1, \sqrt n\})$.   
    \item After $\tilde O(C^2 n^3)$ samples from an unknown distribution $\cD$ over problem instances, the learning algorithm produces duals $\hat y$ so that $\E_{c \sim \cD}\ [\|\hat y - y^*(c)\|_1]$ is approximately minimum among all possible choices of $\hat y$, where $C$ is the maximum edge cost and $y^*(c)$ is an optimal dual for instance $c$. 

\end{itemize}
Combining these gives a single algorithm that, with access to $\tilde{O}(C^2 n^3)$ problem instance samples from $\cD$, 
has expected running time on future instances from $\cD$ of only $\tilde O(m\sqrt{n} \min \{\alpha, \sqrt n \})$, where $\alpha = \min_{y} \E_{c \sim \cD}\ [\|y - y^*(c)\|_1]$.  
\end{theorem}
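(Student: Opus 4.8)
The plan is to obtain Theorem~\ref{thm:informal} by \emph{composing} the three algorithms into a single pipeline and then pushing their per-instance guarantees through an expectation over $\cD$. In the offline training phase I would draw $N = \tilde O(C^2 n^3)$ i.i.d.\ instances from $\cD$ and invoke the efficient empirical risk minimizer of Theorem~\ref{thm:erm-alg} to produce a dual $\hat y$ minimizing the sample average of $\|\hat y - y^*(c)\|_1$; the pseudo-dimension bound of Theorem~\ref{thm:learning-PM-main}, via standard uniform convergence over the bounded class of candidate duals, then guarantees that with probability at least $1-\delta$ over the sample we have $\E_{c\sim\cD}[\|\hat y - y^*(c)\|_1] \le \alpha + \eps$, where $\alpha = \min_y \E_{c\sim\cD}[\|y-y^*(c)\|_1]$ and $N$ is taken (as a function of $\eps,\delta$, absorbed into the $\tilde O$) large enough for this. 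At test time the algorithm receives a fresh instance $c\sim\cD$, independent of the training sample; it first runs the linear-time feasibility routine to map $\hat y$ to a dual $\hat y'(c)$ that is feasible for $c$ and satisfies $\|\hat y'(c) - y^*(c)\|_1 \le 3\|\hat y - y^*(c)\|_1$ \emph{pointwise}, and then runs the warm-started optimization algorithm of Theorem~\ref{thm:PD-main} seeded with $\hat y'(c)$, which outputs a minimum-weight perfect matching in time $\tilde O\!\big(m\sqrt n\cdot\min\{\|\hat y'(c)-y^*(c)\|_1,\sqrt n\}\big)$.

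Next I would bound the expected test-time running time. Conditioning on the training sample makes $\hat y$ fixed while $c$ remains an independent draw from $\cD$, so chaining the feasibility and optimization guarantees gives an expected running time of at most
\begin{equation*}
\tilde O(m\sqrt n)\cdot\E_{c\sim\cD}\big[\min\{3\|\hat y-y^*(c)\|_1,\ \sqrt n\}\big].
\end{equation*}
Since $\min\{3a,t\}\le 3\min\{a,t\}$ for all $a,t\ge0$ and $x\mapsto\min\{x,\sqrt n\}$ is concave, Jensen's inequality gives
\begin{equation*}
\E_c\big[\min\{3\|\hat y-y^*(c)\|_1,\ \sqrt n\}\big]\ \le\ 3\min\Big\{\E_c\big[\|\hat y-y^*(c)\|_1\big],\ \sqrt n\Big\}\ \le\ 3\min\{\alpha+\eps,\ \sqrt n\}.
\end{equation*}
Taking $\eps$ to be a small constant (which only inflates $N$ inside $\tilde O(C^2 n^3)$) and absorbing the factor $3$ into the $\tilde O$, the expected running time conditioned on a ``good'' sample is $\tilde O(m\sqrt n\,\min\{\alpha,\sqrt n\})$. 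To remove the conditioning I would note that $\min\{\cdot,\sqrt n\}\le\sqrt n$ always, so on the failure event (probability $\le\delta$) the run costs at most the worst case $\tilde O(mn^{1.5})$; choosing $\delta=1/n$ (again only a polylog change in $N$) makes that contribution $\tilde O(m\sqrt n)$ and hence absorbed, and averaging over the training sample yields the claimed bound with sample complexity $\tilde O(C^2 n^3)$.

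Given the three component theorems, the composition is largely careful bookkeeping, and the one genuinely load-bearing analytic step is the interchange of expectation with the truncated norm: it is exactly the concavity of $x\mapsto\min\{x,\sqrt n\}$, together with the pointwise factor-$3$ bound from the feasibility routine, that converts the \emph{instance-wise} runtime of Theorem~\ref{thm:PD-main} into a bound governed by the \emph{population} quantity $\alpha$. The secondary subtleties are using independence of the training and test draws so that $\hat y$ may be treated as fixed when analyzing a test run, and the $\eps$-slack and $\delta$-failure accounting so that nothing degrades when $\alpha$ is small (in particular keeping the additive $\tilde O(m\sqrt n)$ terms dominated). I expect the real difficulty to lie entirely in the components assumed here — proving that the warm-started Hungarian/scaling method of Theorem~\ref{thm:PD-main} indeed runs in time proportional to the $\ell_1$ distance of the seed dual, and bounding the pseudo-dimension of the dual-cost function class in Theorem~\ref{thm:learning-PM-main} — not in the assembly.
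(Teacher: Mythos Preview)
Your proposal is correct and follows exactly the paper's approach: Theorem~\ref{thm:informal} is obtained simply by composing Theorems~\ref{thm:matchfeasible}, \ref{thm:PD-main}, and \ref{thm:learning-PM-main}, and the paper itself says no more than ``combining these gives'' (indeed, the formal restatement, Theorem~\ref{thm:matchmain-with-learning}, only records the per-instance runtime and the paper writes ``This theorem, together with Theorems~\ref{thm:matchfeasible} and \ref{thm:PD-main}, immediately implies Theorem~\ref{thm:matchmain-with-learning}''). Your Jensen step for $x\mapsto\min\{x,\sqrt n\}$ and the $\delta=1/n$ accounting for the learning failure event are exactly the bookkeeping the paper leaves implicit when passing from the per-instance bound to the expected-runtime statement in Theorem~\ref{thm:informal}.
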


We emphasize that the Hungarian method with $\tilde O(mn)$ running time is the standard algorithm in practice. Although there are other theoretically faster exact algorithms for bipartite minimum-weight perfect matching \cite{OrlinA92,GoldbergK97,Gabow85,DuanS12}, including \cite{DuanS12} that runs in $O(m\sqrt{n}\log(nC))$, they are relatively complex (using various scaling techniques). In fact, we could not find any implementation of them, while multiple implementations of the Hungarian method are readily available.

Note that our result shows that we can speed up the Hungarian method as long as the $\ell_1$-norm error of the learned dual, i.e., $\|\hat y - y^*(c)\|_1$ is $o(\sqrt n)$. Further, as the projection step that converts the learned dual into a feasible dual takes only linear time, the overhead of our method is essentially negligible. Therefore, even if the prediction is of poor quality, our method has worst-case running time that is \emph{never} worse than that of the Hungarian algorithm.  Even our learning algorithm is simple, consisting of a straightforward empirical risk minimization algorithm (the analysis is more complex and involves bounding the ``pseudo-dimension'' of the loss functions).

We validate our theoretical results via experiments.  For each dataset we first feed a small number of samples (fewer than our theoretical bounds) to our learning algorithm.  We then compare the running time of our algorithm to that of the classical Hungarian algorithm on new instances. 

Details of these experiments can be found in Section~\ref{sec:exp}. At a high level they show that our algorithm is \emph{significantly} faster in practice.  Further, our experiment shows only very few samples are needed to achieve a notable speed-up. This confirms the power of our approach, giving a theoretically rigorous yet also practical method for warm-start primal-dual algorithms.

\subsection{Related Work}

\textbf{Matchings and $b$-Matchings:} Bipartite matchings are one of the most well studied problems in combinatorial optimization, with a long history of algorithmic improvements.  We refer the interested reader to~\citet{DBLP:journals/jacm/DuanP14} for an overview.  We highlight some particular results here.  We are interested in the weighted versions of these problems and when all edge weights are integral.  Let $C$ be the maximum edge weight, $n$ be the number of vertices, and $m$ the number of edges.  For finding exact solutions to the minimum weight perfect matching problem, the scaling technique leads to a running time of $O(m\sqrt{n} \log (nC))$~\cite{OrlinA92,GoldbergK97,Gabow85,DuanS12}.  For dense graphs there is a slightly better algorithm running in time $O(n^{5/2} \log(nC) (\frac{\log\log(n)} {\log(n)})^{1/4} )$~\cite{CheriyanM96}.  Finally, algebraic techniques can be used to get a run time of $O(C n^{\omega})$~\cite{Sankowski06}, where $\omega$ is the exponent for fast matrix multiplication.

\full{
The minimum cost $b$-matching problem and its generalization, the minimum cost flow problem, have also been extensively studied.  See~\cite{Orlin93} for a summary of classical results.  More recently there has been improvements by applying interior point methods.  The algorithm of~\cite{DaitchS08} has running time $\tilde{O}(m^{3/2}\log^2(C)$) and it is improved by the algorithm of~\cite{LeeS14} which runs in time $\tilde{O}(m\sqrt{n}\log^{O(1)}(C))$.  
}

Large scale bipartite matchings have  been studied extensively in the online setting, as they represent the basic problem in ad allocations \cite{MehtaSVV}. While the ad allocation is inherently online, most of the methods precompute a dual based solution based on a sample of the input~\cite{DevenurHayes, VeeVS}, and then argue that this solution is approximately optimal on the full instance. In contrast, we strive to compute the {\em exactly optimal} solution, but use previous instances to improve the running time of the approach. 

\smallskip
\noindent
\textbf{Algorithms with Predictions:} \citet{Kraska} showed how to use machine learned predictions to improve heavily optimized indexing algorithms. The original paper was purely empirical and came with no rigorous guarantees; recently there has been a flurry of work putting such approaches on a strong theoretical foundation, evaluating the benefit of augmenting classical algorithms with machine learned predictions, see~\cite{MitzenmacherVassilvitskii} for a survey. Online and streaming algorithms in particular have seen significant successes, as predictions reveal information about the future and can help guide the algorithms' choices. This has led to the design of new methods for caching~\cite{LykourisVassilvitskii, Rohatgi, Panigrahy}, scheduling~\cite{Purohit, LattanziLMV}, frequency counting~\cite{Cohen, HsuLearned, AamandLearned}, and membership testing~\cite{MitzenmacherBloom, MitzenmacherBloom2} that can break through worst-case lower bounds when the predictions are of sufficiently high quality. 

Most of the above work abstracts the predictions as access to an error-prone oracle and asks how to best use predictions:  getting performance gains when the predictions are good, but limiting the losses when they are not. A related emergent area is that of data driven algorithm design~\cite{DBLP:journals/siamcomp/GuptaR17,BalcanDDKSV19,DBLP:conf/focs/BalcanDV18,DBLP:conf/icml/BalcanDSV18,DBLP:conf/nips/BalcanDW18,ChawlaGTTZ19}.  Here, the objective is to ``learn'' a good algorithm for a particular family of inputs. The goal is not typically to tie the performance of the algorithm to the quality of the prediction, but rather to show that the prediction makes sense; that is only a small number of problem samples are needed in order to ensure the learned algorithm generalizes to new data points.

\full{
\smallskip
\noindent
\textbf{Comparison to Dynamic Algorithms:}
A natural counterpoint to our approach is the area of \emph{dynamic algorithms}.  In dynamic algorithms, we attempt to design algorithms that allow us to very quickly recompute the optimal solution when there is a single (or a very small number) of changes in the input.  In other words, the input is changing over time, and we need to always maintain an optimal solution as it changes.  There has been an active line of work on dynamic matching algorithms, see~\cite{dynamic1,dynamic2}.

This is in some sense very similar to what we are trying to do, since we are also trying to quickly compute an optimal solution when we have a history and previous optimal solutions.  But these two approaches, of dynamic algorithms and of machine-learned predictions for warm-start, are quite different and are actually highly complementary.  Dynamic algorithms work extremely well in the setting where the input changes slowly but where the output can change quickly, since they are optimized to handle \emph{single} changes in the input (e.g., a single edge being added or removed from the graph).  Our approach, on the other hand, works extremely well when the input can change dramatically but the optimal solution is relatively stable, since then our learned dual values will be quite close to optimal.
}

\submit{
\subsection{Roadmap}
We present our theoretical results on min-cost perfect bipartite matching in Section~\ref{sec:mwpm}. The experiments are presented in Section~\ref{sec:exp}. The extension to $b$-matching, as well as all missing proofs, can be found in the full version of this paper as supplementary materials. 
}

\full{
\subsection{Roadmap}
We begin with preliminaries and background in Section~\ref{sec:prelim}.  We then present our main theoretical results on min-cost perfect bipartite matching in Section~\ref{sec:mwpm}. The experiments are presented in Section~\ref{sec:exp}. Finally, the extension to $b$-matching is presented in Section~\ref{sec:bmatching}.
}
\section{Preliminaries} \label{sec:prelim}

\paragraph{Notation:} Let $G = (V,E)$ be an undirected graph.  We will use $N(i) := \{ e \in E \mid i \in e\}$ to be the set of edges adjacent to vertex $i$.  
\full{
Similarly if $G$ is directed, then we use $N^+(i)$ and $N^-(i)$ to be the set of edges leaving $i$ and the set of edges entering $i$, respectively.
}
For a set $S\subseteq V$, let $\Gamma(S)$ be the vertex neighborhood of $S$.    For a vector $y \in \R^n$, we let $\|y\|_1 = \sum_i |y_i|$ be its $\ell_1$-norm.  Let $\langle x,y \rangle$ be the standard inner product on $\R^n$.

\textbf{Linear Programming and Complementary Slackness:}
Here we recall optimality conditions for linear programming that are used to ensure the correctness of some algorithms we present.  Consider the primal-dual pair of linear programs below.
\full{
\begin{equation} \tag{$P$}
   \begin{array}{cc}
        \min & c^\top x  \\
         &  Ax = b \\
         & x \geq 0
  \end{array}
\end{equation}
}
\full{
\begin{equation} \tag{$D$}
    \begin{array}{cc}
        \max & b^\top y  \\
         & A^\top y \leq c
    \end{array}
\end{equation}
}

\submit{
\begin{center}
    $\displaystyle\min  c^\top x; \quad   Ax = b; \quad x \geq 0 \;\;\; (P) \quad \quad \quad
        \max  b^\top y; \quad  A^\top y \leq c \;\;\; (D)$
\end{center}
}

A pair of solutions $x,y$ for $(P)$ and $(D)$, respectively, satisfy complementary slackness if $x^{\top}(c - A^{\top}y) = 0$.  The following lemma is well-known.

\begin{lemma} \label{lem:comp_slackness}
Let $x$ be a feasible solution for $(P)$ and $y$ be a feasible solution for $(D)$.  If the pair $x,y$ satisfies complementary slackness, then $x$ and $y$ are optimal solutions for their respective problems.
\end{lemma}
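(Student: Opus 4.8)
The plan is to derive the lemma from weak duality: once we know that every feasible primal point has objective value at least that of every feasible dual point, the complementary slackness hypothesis pins the given pair $x,y$ to equal objective values, and equality of objectives immediately certifies optimality on both sides.

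First I would establish weak duality. For any $x$ feasible for $(P)$ and any $y$ feasible for $(D)$, consider the chain $b^\top y = (Ax)^\top y = x^\top (A^\top y) \le x^\top c = c^\top x$. The first equality uses primal feasibility $Ax = b$; the inequality uses that $x \ge 0$ (a constraint of $(P)$) together with the coordinatewise dual feasibility $A^\top y \le c$, since multiplying a coordinatewise inequality by a nonnegative vector preserves its direction. Next I would invoke the complementary slackness assumption $x^\top(c - A^\top y) = 0$, which rearranges to $c^\top x = x^\top (A^\top y) = (Ax)^\top y = b^\top y$; thus this particular pair $x,y$ attains equal primal and dual objective values. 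Combining the two facts: for any primal-feasible $x'$, weak duality gives $c^\top x' \ge b^\top y = c^\top x$, so $x$ is optimal for $(P)$; symmetrically, for any dual-feasible $y'$, weak duality gives $b^\top y' \le c^\top x = b^\top y$, so $y$ is optimal for $(D)$.

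There is no genuine obstacle here—this is the textbook LP duality argument—so the ``hard part'' is only a matter of bookkeeping: being careful that the primal is in equality-constraint-plus-nonnegativity form (so that the equalities $Ax=b$ pair with an unconstrained $y$ and the sign constraint $x \ge 0$ pairs with the inequality $A^\top y \le c$), which is exactly what makes the single inequality in the weak-duality chain point the right way.
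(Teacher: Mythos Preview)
Your proof is correct and follows essentially the same approach as the paper's (elided) proof: invoke weak duality, use complementary slackness to show $c^\top x = b^\top y$, and conclude optimality of both $x$ and $y$.
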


\smallskip
\noindent
\textbf{Maximum Cardinality Matching:} Let $G = (V,E)$ be a bipartite graph on $n$ vertices and $m$ edges.  A matching $M \subseteq E$ is a collection of non-intersecting edges.  The Hopcroft-Karp algorithm for finding a matching maximizing $|M|$ runs in time $O(\sqrt{n}\cdot m)$~\cite{HopcroftKarp}, which is still state-of-the-art for general bipartite graphs. For moderately dense graphs, a recent result by  \citet{BrandLNPSS0W20} gives a better running time of $\tilde O(m + n^{1.5})$ (where $\tilde O$ hides polylogarithmic factors).

\smallskip
\noindent
\textbf{Minimum Weight Perfect Matching (MWPM):} Again, let $G = (V,E)$ be a bipartite graph on $n$ vertices and $m$ with costs $c \in \Z_+^E$ on the edges, and let $C$ be the maximum cost.  A matching $M$ is \emph{perfect} if every vertex is matched by $M$. The objective of this problem is to find a perfect matching $M$ minimizing the cost $c(M) := \sum_{e \in M} c_e$. 

When looking for optimal solutions we can assume that $G$ is a complete graph by adding all possible edges not in $E$ with weight $Cn^2$. It is easy to see that any $o(n)$ approximate solution would not use any of these edges.

\full{
\smallskip
\noindent
\textbf{Maximum Flow:} Now let $G = (V,E)$ be a directed graph on $n$ vertices and $m$ edges with a capacity vector $u \in \R_+^E$.  Let $s$ and $t$ be distinct vertices of $G$.  An $st$-flow is a vector $f \in \R_+^E$ satisfying $\sum_{e \in N^+(i)} f_e - \sum_{e \in N^-(i)} f_e = 0$ for all vertices $i \neq s,t$.  An $st$-flow $f$ is maximum if it maximizes $\sum_{e \in N^+(s)} f_e = \sum_{e \in N^-(t)} f_e$.  The algorithm due to Orlin~\cite{Orlin} and King, Rao, and Tarjan~\cite{KRT} runs in time $O(nm)$.
}

\section{Faster Min-Weight Perfect Matching} \label{sec:mwpm}

In this section we describe how predictions can be used to speed up the bipartite Minimum Weight Perfect Matching (MWPM) problem.

The MWPM problem can be modeled by the following linear program and its dual -- the primal-dual view will be very useful for our algorithm and analysis. We will sometimes refer to a set of dual variables $y$ as dual \emph{prices}. Both LPs are well-known to be integral, implying that there always exist integral optimal solutions.

\full{
\begin{equation} \tag{MWPM-P} \label{eqn:mwpm_lp}
    \begin{array}{ccc}
        \min  & \displaystyle \sum_{e \in E}  c_ex_e &  \\
         & \displaystyle \sum_{e \in N(i)} x_e = 1 & \forall i \in V\\
         & x_{e} \geq 0 & \forall e \in E
    \end{array}
\end{equation}
\begin{equation} \tag{MWPM-D} \label{eqn:mwpm_dual}
    \begin{array}{ccc}
         \max & \displaystyle \sum_{i \in V} y_i &  \\
         & \displaystyle y_i + y_j \leq c_e & \forall e = ij \in E \\
    \end{array}
\end{equation}
}

\submit{
 \begin{wrapfigure}{R}{0.375\textwidth}
    \vspace{-3mm}
    \begin{minipage}{.325\textwidth}
\begin{equation*}  \label{eqn:mwpm_lp}
    \begin{array}{ccc}
        \min  & \displaystyle \sum_{e \in E}  c_ex_e & \textrm{(MWPM-P)} \\
         & \displaystyle \sum_{e \in N(i)} x_e = 1 & \forall i \in V\\
         & x_{e} \geq 0 & \forall e \in E
    \end{array}
\end{equation*}
\begin{equation*}  \label{eqn:mwpm_dual}
    \begin{array}{ccc}
         \max & \displaystyle \sum_{i \in V} y_i & \textrm{(MWPM-D)} \\
         & \displaystyle y_i + y_j \leq c_e & \forall e = ij \in E \\
    \end{array}
\end{equation*}
\end{minipage}
\vspace{-3mm}
\end{wrapfigure}
}

Suppose we are given a prediction $\ypred$ of a dual solution.  If $\ypred$ is feasible, then by complementary slackness we can check if $\ypred$ represents an optimal dual solution by running a maximum cardinality matching algorithm on the graph $G' = (V,E')$, where $E' = \{ e=ij\in E \mid \ypred_i + \ypred_j = c_{ij}\}$ is the set of tight edges.  If this matching is perfect, then its incidence vector $x$ satisfies complementary slackness with $\ypred$ and thus represents an optimal solution by Lemma~\ref{lem:comp_slackness}.

We now consider the problem from another angle, factoring in learning aspects. Suppose the graph $G = (V,E)$ is fixed but the edge cost vector $c \in \Z^E_+$ varies (is drawn from some distribution $\cD$).
If we are given an optimal dual $y^*$ as a prediction, then we can solve the problem by solving the max cardinality matching problem only once. However, the optimal dual can significantly change depending on edge cost $c$. Nevertheless, we will show how to learn ``good'' dual values and use them later to solve new MWPM instances faster. Specifically, we seek to design an end-to-end algorithm addressing all the aforementioned challenges: 
\begin{enumerate}
    
    \item \textbf{Feasiblity} (Section~\ref{sec:matching-feasibility}). The learned dual $\hat y$ may not be feasible for MWPM-D with some specific cost vector $c$. We show how to quickly convert it to a feasible dual $\hat y'(c)$ by appropriately decreasing the dual values (the more we decrease them, the further we move away from the optimum). Finding the feasible dual minimizing $\|  \hat y - \hat y'(c)\|_1$ turns out to be a variant of the vertex cover problem, for which we give a simple $2$-approximation running in $O(m+n)$ time. As a result, we have $\| \hat y'(c) - y^*(c)  \|_1  \leq 3\|\hat y - y^*(c)\|_1$. See Theorem~\ref{thm:matchfeasible}.

    \item \textbf{Optimization}
    (Section~\ref{sec:matching-to-optimal}). Now that we have a feasible solution $\hat y'(c)$, we want to find an optimal solution starting with $\hat y'(c)$ in time that depends on the quality of $\hat y'(c)$. 
    Fortunately, the Hungarian algorithm can be seeded with any feasible dual, so we can ``warm-start" it with $\hat y'(c)$.  We show that its running time will be proportional to $| \|\hat y'(c)\|_1 - \|y^*(c)\|_1 | \leq \| \hat y'(c) - y^*(c) \|_1$. \full{See Theorem~\ref{thm:PD-main}.}  Our analysis does not depend on the details of the Hungarian algorithm, and so applies to a broader class of primal-dual algorithms.

    \item \textbf{Learnability} (Section~\ref{sec:pm_learning}). 
    The target dual we seek to learn is $\arg \min_y \E_{c \sim \cD} \|y - y^*(c)\|$; here $y^*(c)$ is the optimal dual for MWPM-D with cost vector $c$. We show we can efficiently learn $\hat y$ that is arbitrarily close to the target vector after $\tilde O(C^2 n^3)$ samples from $\cD$. See Theorem~\ref{thm:learning-PM-main}.
\end{enumerate}

Combining all of these gives the following, which is a more formal version of Theorem~\ref{thm:informal}. Let $\cD$ be an arbitrary distribution over edge costs where every vector in the support of $\cD$ has maximum cost $C$.  For any edge cost vector $c$, let $y^*(c)$ denote the optimal dual solution.  

\begin{theorem}\label{thm:matchmain-with-learning}
For any $p, \eps >0$, there is an algorithm which:
\begin{itemize}
    \item After $O\left( \left(\frac{nC}{\epsilon} \right)^2 (n \log n + \log(1/p)) \right)$ samples from $\cD$, returns dual values $\hat y$ such that $\displaystyle \E_{c \sim \cD}[\|\hat y - y^*(c)\|_1] \leq \min_y \E_{c \sim \cD}[\|y - y^*(c)\|_1] + \epsilon$ with probability at least $1-p$.
    
    \item Using the learned dual $\hat y$, given edge costs $c$, computes a min-cost perfect matching in time $O\left(m\sqrt{n} \cdot \min \{ \|\hat y - y^*(c) \|_1 , \sqrt{n}\}\right)$. 
\end{itemize}
\end{theorem}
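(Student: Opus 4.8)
The plan is to prove Theorem~\ref{thm:matchmain-with-learning} by composing the three ingredients developed in Section~\ref{sec:mwpm}: the generalization guarantee for learning dual prices (Theorem~\ref{thm:learning-PM-main}), the linear-time procedure that repairs an infeasible predicted dual (Theorem~\ref{thm:matchfeasible}), and the running-time bound for the Hungarian method warm-started from a feasible dual (Theorem~\ref{thm:PD-main}). These three pieces are essentially independent, so the combined statement follows by bookkeeping; all of the substance lives inside the three cited results.

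\emph{First bullet (sample complexity).} This is precisely the conclusion of Theorem~\ref{thm:learning-PM-main}: draw $N = O\big((nC/\epsilon)^2(n\log n + \log(1/p))\big)$ i.i.d.\ cost vectors $c_1,\dots,c_N\sim\cD$, compute an optimal dual $y^*(c_i)$ for each (one MWPM solve per sample), and return an empirical minimizer $\hat y \in \arg\min_y \frac1N\sum_{i=1}^N \|y - y^*(c_i)\|_1$. The technical core of Theorem~\ref{thm:learning-PM-main} is a bound on the pseudo-dimension of the loss class $\{\, c \mapsto \|y - y^*(c)\|_1 \,\}_y$; together with boundedness of these losses on the support of $\cD$, the standard uniform-convergence bound gives that, with probability at least $1-p$, $\E_{c\sim\cD}[\|\hat y - y^*(c)\|_1] \le \min_y \E_{c\sim\cD}[\|y - y^*(c)\|_1] + \epsilon$. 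I would cite this directly.

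\emph{Second bullet (running time).} Fix the learned $\hat y$ and a test-time cost vector $c$. First run the repair algorithm of Theorem~\ref{thm:matchfeasible} on $(\hat y, c)$: in $O(m+n)$ time it returns $\hat y'(c)$ feasible for (MWPM-D) under $c$ with $\|\hat y'(c) - y^*(c)\|_1 \le 3\,\|\hat y - y^*(c)\|_1$. Then run the Hungarian method seeded with $\hat y'(c)$; by Theorem~\ref{thm:PD-main} it outputs a minimum-cost perfect matching in time $O\big(m\sqrt n\,(1 + \min\{\|\hat y'(c) - y^*(c)\|_1,\ \sqrt n\})\big)$, where the leading $m\sqrt n$ accounts for the single maximum-cardinality-matching computation that certifies optimality even when the seed is already optimal. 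Substituting the factor-$3$ bound and using $\min\{3t,\sqrt n\}\le 3\min\{t,\sqrt n\}$ for all $t\ge 0$ (check $t\le\sqrt n$ and $t>\sqrt n$ separately), the total becomes $O\big(m\sqrt n\cdot\min\{\|\hat y - y^*(c)\|_1,\ \sqrt n\}\big)$; the additive $O(m+n)$ from repair is absorbed since $m\sqrt n = \Omega(m+n)$ whenever $G$ admits a perfect matching.

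\emph{Main obstacle.} There is no real obstacle in this composition — it is bookkeeping, and the two factor-type subtleties (absorbing the $3$ from the repair step, and absorbing the $O(m+n)$ into $O(m\sqrt n)$) are one-line checks. The genuine difficulty lies in the three cited theorems, and in particular I expect Theorem~\ref{thm:PD-main} and Theorem~\ref{thm:learning-PM-main} to be the hard parts: for the former, one must show that warm-starting the Hungarian method from a feasible dual $\hat y'(c)$ keeps the number of dual-adjustment phases at $O\big(|\sum_i \hat y'_i(c) - \sum_i y^*_i(c)|\big) \le O(\|\hat y'(c) - y^*(c)\|_1)$ — exploiting that this signed-sum gap is a nonnegative integer that strictly decreases each phase — while never losing the $O(mn)$ worst-case guarantee, which is where the $\min\{\cdot,\sqrt n\}$ cap (and the $m\sqrt n$ factor) come from; for the latter, one must argue that $y^*(c)$, and hence each loss, is piecewise-structured enough in $c$ to force a pseudo-dimension that is only polynomial in $n$. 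Finally, to recover the informal expected-running-time claim of Theorem~\ref{thm:informal}, take expectations over $c\sim\cD$ in the second bullet and apply Jensen's inequality to the concave map $t\mapsto\min\{t,\sqrt n\}$ together with the first bullet, yielding expected running time $O\big(m\sqrt n\cdot\min\{\alpha+\epsilon,\ \sqrt n\}\big)$ with $\alpha = \min_y\E_{c\sim\cD}[\|y - y^*(c)\|_1]$.
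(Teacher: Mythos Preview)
Your proposal is correct and matches the paper's approach exactly: the paper simply states that Theorem~\ref{thm:learning-PM-main}, together with Theorems~\ref{thm:matchfeasible} and~\ref{thm:PD-main}, ``immediately implies'' Theorem~\ref{thm:matchmain-with-learning}, and you have filled in precisely the bookkeeping (absorbing the factor~$3$ and the $O(m+n)$ repair cost) that makes this composition go through. Your additional remark about deriving the expected-running-time form of Theorem~\ref{thm:informal} via Jensen on $t\mapsto\min\{t,\sqrt n\}$ is a nice touch the paper does not spell out.
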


In the rest of this section we detail our proof of this Theorem. 

\submit{
 \begin{wrapfigure}{R}{0.57\textwidth}
 \vspace{-7mm}
 \begin{minipage}{0.52\textwidth}

 \begin{algorithm}[H]
\caption{\label{alg:fast_approx_mwpm}Fast Approx. for Distance to Feasibility }
\begin{algorithmic}[1]
\Procedure{FastApprox}{$G = (V,E),r$}
\State $\forall i \in V$, $\delta_i \gets 0$
\While{$E \neq \emptyset$}
\State Let $i$ be an arbitrary vertex of $G$
\While{$i$ has a neighbor}
\State $j \gets \arg \max_{j' \in N(i)} r_{ij'}$
\State $\delta_i \gets r_{ij}$
\State Delete $i$ and all its edges from $G$
\State $i \gets j$
\EndWhile
\EndWhile
\State Return $\delta$
\EndProcedure
\end{algorithmic}
\end{algorithm}

\end{minipage}
\vspace{-3mm}
\end{wrapfigure}
}

\subsection{Recovering a Feasible Dual Solution (Feasibility)}
    \label{sec:matching-feasibility}

Let $\ypred$ be an infeasible set of (integral) dual prices -- this should be thought of as the ``good" dual obtained by our learning algorithm. Our goal in this section is to find a new \emph{feasible} dual solution $\ypred'(c)$ that is close to $\ypred$, for a given MWPM-D instance with cost $c$. 
In particular we seek to find the closest feasible dual under $\ell_1$ norm, i.e. one minimizing $\|\ypred'(c) -\ypred\|_1$.

Looking at  \eqref{eqn:mwpm_dual}, it is clear that we need to decrease the given dual values $\ypred$ in order to make it feasible.  More formally, we are looking for a vector of non-negative perturbations $\delta$ such that $\ypred' := \ypred - \delta$ is feasible.  We model finding the best set of perturbations, in terms of preserving $\ypred$'s dual objective value, as a linear program. 
Let $F := \{e = ij \in E \mid \yhat_i + \yhat_j > c_{ij} \}$ be the set of dual infeasible edges under $\yhat$.   Define $r_e := \yhat_i +\yhat_j - c_e$ for each edge $e = ij \in F$.  Asserting that $\ypred-\delta$ is feasible for \eqref{eqn:mwpm_dual} while minimizing the amount lost in the dual objective leads to the following linear program:
\full{
\begin{equation}  \label{eqn:dist_to_feas_lp}
    \begin{array}{ccc}
        \min  & \displaystyle \sum_{i \in V} \delta_i &  \\
         & \displaystyle \delta_i + \delta_j \geq r_{ij} & \forall ij \in F\\
         & \delta_i \geq 0 & \forall i \in V
    \end{array}
\end{equation}
}
\submit{
\begin{equation}  \label{eqn:dist_to_feas_lp}
      \min \sum_{i \in V} \delta_i; \quad  \delta_i + \delta_j \geq r_{ij} \;\; \forall ij \in F;\quad
          \delta_i \geq 0  \;\; \forall i \in V
\end{equation}
}
\vspace{-3mm}

Note that this is a variant of the vertex cover problem---the problem becomes exactly the vertex cover problem if $r_{ij} =1$ for all edges $ij$.  We could directly solve this linear program, but we are interested in making this step efficient.    To find a fast approximation for \eqref{eqn:dist_to_feas_lp}, we take a simple greedy approach.

\full{
\begin{algorithm}[H]
\caption{\label{alg:fast_approx_mwpm}Fast Approx. for Distance to Feasibility }
\begin{algorithmic}[1]
\Procedure{FastApprox}{$G = (V,E),r$}
\State $\forall i \in V$, $\delta_i \gets 0$
\While{$E \neq \emptyset$}
\State Let $i$ be an arbitrary vertex of $G$
\While{$i$ has a neighbor}
\State $j \gets \arg \max_{j' \in N(i)} r_{ij'}$
\State $\delta_i \gets r_{ij}$ 
\State $\gamma_{ij} = 1/2$   \Comment{$\gamma_{ij}$ is only used for analysis}
\State Delete $i$ and all its edges from $G$
\State $i \gets j$
\EndWhile
\EndWhile
\State Return $\delta$
\EndProcedure
\end{algorithmic}
\end{algorithm}
}

Algorithm~\ref{alg:fast_approx_mwpm} is a modification of the algorithm of \citet{DrakeH03} which walks through the graph setting $\delta_i$ appropriately at each step to satisfy the covering constraints in \eqref{eqn:dist_to_feas_lp}.
The analysis is based on interpreting the algorithm through the lens of primal-dual---the dual of \eqref{eqn:dist_to_feas_lp} turns out to be  a maximum weight matching problem with new edge weights $r_{ij}$.
\full{
The dual is the following:
\begin{equation}  \label{eqn:dist_to_feas_lp_dual}
    \begin{array}{ccc}
        \max  & \displaystyle \sum_{e} r_{e}\gamma_{e} &  \\
         & \displaystyle \sum_{e \in N(i) \cap F} \gamma_{e} \leq 1 & \forall i \in V\\
         & \gamma_{e} \geq 0 & \forall e \in F
    \end{array}
\end{equation}
}

\submit{A similar analysis as in \citet{DrakeH03} implies this is a 2-approximation which runs in $O(m+n)$ time (all proofs are in the Supplementary material).  This essentially immediately implies the following theorem (the $2$-approximation turns into $3$ due to a use of the triangle inequality). 
}

\full{
First we show the algorithm is fast.

\begin{lemma} \label{lem:fast_approx_mwpm_time}
Algorithm~\ref{alg:fast_approx_mwpm} runs in time $O(n+m)$.
\end{lemma}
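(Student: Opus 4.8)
The plan is to observe that Algorithm~\ref{alg:fast_approx_mwpm} is essentially a walk through $G$ that permanently deletes one vertex at every step, so the total work is proportional to $\sum_i \deg(i) = 2m$ plus $O(n)$ overhead, provided $G$ is stored in a representation that supports constant-time edge and vertex deletions.

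First I would fix the data structures. Keep doubly-linked adjacency lists; for each edge store cross-pointers linking the copy $(i,j)$ in $i$'s list to the copy $(j,i)$ in $j$'s list; maintain a degree counter $\deg_G(i)$ for every vertex; and maintain one global doubly-linked list of all currently-present edges, where each edge also stores a pointer to its node in that list. Building all of this from the input takes $O(n+m)$ time. With it, deleting a vertex $i$ together with all incident edges costs $O(\deg_G(i))$: traverse $i$'s adjacency list, and for each edge follow its cross-pointer to splice it out of the neighbor's list and out of the global edge list in $O(1)$, decrementing the neighbor's degree.

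Next I would bound the inner \textbf{while} loop. Each execution of its body takes the current vertex $i$, scans $N(i)$ once to compute $\arg\max_{j' \in N(i)} r_{ij'}$ (cost $O(\deg_G(i))$), sets $\delta_i$, deletes $i$ and its incident edges (cost $O(\deg_G(i))$ by the previous paragraph), and sets $i \gets j$ for a neighbor $j$ of $i$ in the \emph{current} graph. Since $j$ is adjacent to $i$ at that moment it has not yet been deleted, and because edges are only ever removed (never added) a deleted vertex never regains a neighbor and is never revisited. Hence every vertex serves as the "current $i$" in at most one inner-loop iteration, so there are at most $n$ such iterations overall, and their combined cost is $O\big(\sum_i (\deg_G(i) + 1)\big) = O(n+m)$, using that a vertex's degree at deletion time is at most its original degree and $\sum_i \deg(i) = 2m$.

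Finally I would handle the outer \textbf{while} loop: test "$E \neq \emptyset$" by checking whether the global edge list is nonempty, and realize "let $i$ be an arbitrary vertex of $G$" by taking the edge at the head of that list and letting $i$ be one of its endpoints. This is $O(1)$ per outer iteration and guarantees the chosen $i$ has a neighbor, so the inner loop runs at least once and deletes at least one vertex; thus the outer loop executes at most $n$ times, contributing only $O(n)$. Summing the three parts gives total time $O(n+m)$. The one subtlety, and the only place a naive reading of the pseudocode could fail, is exactly this last point: if the outer loop were allowed to pick an isolated vertex while $E$ is still nonempty it would loop forever, so one must pick a vertex incident to a surviving edge; everything else is routine linked-list bookkeeping.
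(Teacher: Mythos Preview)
Your proof is correct and follows the same charging argument as the paper---each vertex and each edge is touched $O(1)$ times---though the paper's own proof is a single sentence stating exactly that observation, whereas you spell out the supporting data structures and the isolated-vertex subtlety in the outer loop. The underlying idea is identical; you have simply supplied the implementation details the paper leaves implicit.
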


\begin{proof}
This follows from the trivial observation that each vertex/edge is considered $O(1)$ times. 
\end{proof}

Next, by construction the algorithm constructs a feasible dual solution.

\begin{lemma} \label{lem:mwpm_dist_to_feas_correctness}
The perturbations $\delta$ returned by Algorithm~\ref{alg:fast_approx_mwpm} is feasible for \eqref{eqn:dist_to_feas_lp}.
\end{lemma}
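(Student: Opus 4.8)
The plan is to verify the two constraint families of \eqref{eqn:dist_to_feas_lp} directly against the execution of Algorithm~\ref{alg:fast_approx_mwpm}. Non-negativity is essentially free: each $\delta_i$ is initialized to $0$ and is only ever overwritten by a value $r_{ij}$ with $ij \in F$, and every such $r_{ij} = \yhat_i + \yhat_j - c_{ij}$ is strictly positive by the definition of $F$; hence $\delta_i \ge 0$ at termination. The real work is in the covering constraints $\delta_i + \delta_j \ge r_{ij}$ for $ij \in F$.

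The structural fact I would isolate first is that each vertex is \emph{processed at most once}, where ``processing $i$'' denotes the combined step of setting $\delta_i \gets r_{ij}$ and then deleting $i$ together with all its incident edges. A vertex can become the current vertex $i$ only by being chosen in the outer loop or by the assignment $i \gets j$ at the end of an inner-loop iteration, and in either case, as long as $i$ still has a neighbour, the next thing that happens is that $i$ is processed and removed; once removed it can never be current again. Moreover, at the instant $i$ is processed, $\delta_i$ is assigned $\max_{j' \in N(i)} r_{ij'}$, where $N(i)$ refers to the edges \emph{still present} at that instant. Consequently, after processing, $\delta_i \ge r_{ij'}$ for every edge $ij'$ incident to $i$ at the moment of deletion, and $\delta_i$ is never touched again.

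Given this, the covering constraints follow. Fix an edge $e = ij \in F$. Edges disappear only when one of their endpoints is processed, and the outer loop runs until no edges remain, so (using termination, which is guaranteed by the running-time bound in Lemma~\ref{lem:fast_approx_mwpm_time}) $e$ is eventually removed, meaning some endpoint of $e$ is processed while $e$ is still present; since deletions occur one vertex at a time, processing that endpoint is exactly what removes $e$, so it is unique — call it $i$. At the moment $i$ is processed, $e \in N(i)$, so by the structural fact $\delta_i \ge r_{ij} = r_e$; combined with $\delta_j \ge 0$ this yields $\delta_i + \delta_j \ge r_e$, as required.

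The only thing requiring care — rather than any genuine difficulty — is pinning down the sequencing inside the inner \textbf{while} loop: that $\delta_i$ receives its final value in the very iteration in which $i$ is deleted, that an edge of $F$ vanishes precisely when the first of its two endpoints is deleted, and that degenerate cases (isolated vertices, or the outer loop landing on a vertex with no neighbour) do not break the ``first-processed endpoint'' argument. Once this bookkeeping is made precise, feasibility is immediate.
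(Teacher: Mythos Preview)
Your proof is correct and follows the same core idea as the paper: when an edge $ij$ is deleted, the currently-processed endpoint $i$ has $\delta_i$ set to $\max_{j'\in N(i)} r_{ij'} \ge r_{ij}$, so $\delta_i + \delta_j \ge r_{ij}$. The paper's proof is a terse two-line version of exactly this argument; you additionally spell out non-negativity, termination, and the sequencing bookkeeping, which the paper leaves implicit.
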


\begin{proof}
We want to show that $\delta_i + \delta_j \geq r_e$ for all edges $e=ij \in E$.  We claim that this condition holds whenever the edge is deleted from $G$.  Suppose that the algorithm is currently at $i$ and let $ij'$ be the edge selected by the algorithm in this step.  By definition of the algorithm we have $\delta_i = r_{ij'} \geq r_{ij}$ so $\delta_i + \delta_j \geq r_{ij}$.
\end{proof}

Finally, we address the objective.  

\begin{lemma}
    \label{lem:mwpm_approx}
The perturbations $\delta$ returned by Algorithm~\ref{alg:fast_approx_mwpm} are a 2-approximation for \eqref{eqn:dist_to_feas_lp}.
\end{lemma}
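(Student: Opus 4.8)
The plan is to certify the optimum $\opt$ of \eqref{eqn:dist_to_feas_lp} from below via weak LP duality, exactly the primal-dual route hinted at in the text. The dual \eqref{eqn:dist_to_feas_lp_dual} is a fractional matching problem with edge weights $r_e$, and $r_e>0$ for $e\in F$, so every feasible dual solution $\gamma$ satisfies $\sum_e r_e\gamma_e \le \opt$. I will exhibit a feasible $\gamma$ whose objective is at least $\tfrac12\sum_i \delta_i$; chaining the two inequalities gives $\sum_i\delta_i \le 2\opt$, which is the claimed $2$-approximation.

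The dual solution is the one already recorded by Algorithm~\ref{alg:fast_approx_mwpm}: set $\gamma_e=\tfrac12$ for every edge $e$ selected on the $\arg\max$ line, and $\gamma_e=0$ otherwise. The heart of the argument is a structural observation about the walk: the first (and only) time a vertex $i$ becomes the ``current'' vertex, the algorithm either selects exactly one edge incident to $i$ and then deletes $i$ together with \emph{all} of its remaining edges, or $i$ has no neighbor and is thereafter never an endpoint of a surviving edge. From this I get (i) $i$ is the tail of at most one selected edge, and (ii) $i$ is the head (the vertex called $j$) of at most one selected edge, since arriving at $i$ as a head makes $i$ the current vertex on the very next step, after which its incident edges are gone. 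Hence at most two selected edges meet $i$, so $\sum_{e\in N(i)\cap F}\gamma_e \le 2\cdot\tfrac12 = 1$ and $\gamma$ is feasible for \eqref{eqn:dist_to_feas_lp_dual}.

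For the objective, note that $\delta_i$ is assigned its unique nonzero value $r_{ij}$ precisely at the step where $i$ is the current vertex and $ij$ is the selected edge; write $e_i$ for that edge. Since each step sets $\delta$ at a unique vertex and selects a fresh edge, distinct vertices with $\delta_i>0$ give distinct selected edges $e_i$. Therefore $\sum_i \delta_i = \sum_{i:\delta_i>0} r_{e_i} \le \sum_{e\text{ selected}} r_e = 2\sum_e r_e\gamma_e \le 2\opt$, which completes the plan.

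The only place that needs care is the structural observation — verifying, across all iterations of the two nested loops (including the corner cases where an inner loop terminates because the current vertex has no neighbor, or where a fresh inner loop starts at a vertex that was previously visited as a head), that no vertex can serve as a tail twice or as a head twice and that $\delta_i$ is never overwritten by a second nonzero value. I would discharge this uniformly with the single invariant: once a vertex has ever been the current vertex, it has no surviving incident edge, so it can play neither role again.
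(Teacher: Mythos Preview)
Your proposal is correct and follows essentially the same primal--dual route as the paper: you use the same half-integral dual $\gamma$, the same structural observation that each vertex meets at most two selected edges (the paper phrases this as the selected edges forming vertex-disjoint paths and cycles), and the same accounting that each selection contributes $r_{ij}$ to the primal and $r_{ij}/2$ to the dual. The only cosmetic difference is that the paper does the primal/dual comparison per step (each increment of the primal is exactly twice the increment of the dual), whereas you sum at the end and use an injection from $\{i:\delta_i>0\}$ to the selected edges; both yield the same bound.
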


\begin{proof}
In each iteration, the increase of the primal objective ($\delta_i = r_{ij}$ in Line 7) is exactly twice the increase of the dual objective ($r_{ij} \gamma_{ij}  =r_{ij}/2$ in Line 8). Thus, due to weak duality, it suffices to show that the dual  is feasible. This follows from the observation that $\{ ij \; | \; \gamma_{ij} = 1/2\}$ forms a collection of vertex disjoint paths and cycles. Thus, for every $i \in V$, there are at most two edges $e$ adjacent to $i$ such that $\gamma_e > 0$, and for those edges $e$, $\gamma_e = 1/2$. Therefore, the dual is feasible. 
\end{proof}

This shows we can project the predicted dual prices $\ypred$ onto the set of feasible dual prices at approximately the minimum cost.  The prior lemmas give the following theorem by noticing that $y^*(c)$ is a possible feasible solution. Note that  integrality is immediate from the algorithm.
} 

\begin{theorem}\label{thm:matchfeasible}
There is a $O(m +  n)$ time algorithm that takes an infeasible integer dual $\hat y$ and constructs a  feasible integer dual $\hat y'(c)$ for MWPM-D with cost vector $c$ such that $\|\hat y'(c) - \hat y \|_1 \leq 2\| \hat y - y^*(c)\|_1$ where $y^*(c)$ is the optimal dual solution for MWPM-D with cost vector $c$. Thus by triangle inequality we have $\|\hat y'(c)-  y^*(c)\|_1 \leq 3\|\hat y - y^*(c)\|_1$.
\end{theorem}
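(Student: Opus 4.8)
The plan is to establish Theorem~\ref{thm:matchfeasible} in three modular pieces, exactly mirroring the structure of Algorithm~\ref{alg:fast_approx_mwpm}: running time, feasibility, and approximation quality, and then to append a one-line triangle-inequality argument for the final bound. First I would argue the running time: the outer and inner \texttt{while} loops together perform a walk through $G$ in which each vertex is visited at most once and, when visited, is immediately deleted along with all its incident edges; since choosing $j \gets \arg\max_{j' \in N(i)} r_{ij'}$ and deleting $i$'s edges costs $O(\deg(i))$, the total work telescopes to $O(\sum_i \deg(i) + n) = O(m+n)$. This is the easy part and requires only the observation that each vertex/edge is touched $O(1)$ times.

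Next I would verify feasibility of the returned $\delta$ for \eqref{eqn:dist_to_feas_lp}, i.e. $\delta_i + \delta_j \ge r_{ij}$ for every $ij \in F$ (equivalently for every edge, since $r_e \le 0$ off $F$). The key claim is that this holds at the moment each edge is deleted, which is when it disappears forever. When the walk sits at vertex $i$ and selects edge $ij'$, we set $\delta_i \gets r_{ij'}$; since $r_{ij'} = \max_{j'' \in N(i)} r_{ij''} \ge r_{ij}$ for every current neighbor $j$ of $i$, and every edge incident to $i$ is deleted in this step, each such edge $ij$ satisfies $\delta_i \ge r_{ij}$, hence $\delta_i + \delta_j \ge r_{ij}$ using $\delta_j \ge 0$. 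Since every edge is eventually deleted while the walk is at one of its endpoints, all covering constraints are met; integrality of $\delta$ is immediate because each $\delta_i$ is set to some integer $r_e = \hat y_i + \hat y_j - c_e$.

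The approximation bound is the crux, and I would handle it via the primal--dual pair: the dual of \eqref{eqn:dist_to_feas_lp} is the maximum-weight (fractional) matching LP \eqref{eqn:dist_to_feas_lp_dual} with edge weights $r_e$, whose constraint says the fractional degree at each vertex is at most $1$. The plan is to track the auxiliary variables $\gamma_{ij} = 1/2$ set in Line~8: in each iteration the primal objective increases by $\delta_i = r_{ij}$ while the dual objective increases by $r_{ij}\gamma_{ij} = r_{ij}/2$, so at termination $\sum_i \delta_i = 2 \sum_e r_e \gamma_e$. The main obstacle is showing the produced $\gamma$ is dual-feasible so that weak duality applies: I would observe that the edges $ij$ with $\gamma_{ij}=1/2$ are exactly the edges traversed by the walk, and since the walk enters and leaves each vertex at most once (deleting it on departure), these edges form a vertex-disjoint union of simple paths and cycles; hence at most two selected edges touch any vertex, giving $\sum_{e \in N(i)\cap F}\gamma_e \le 2 \cdot \tfrac12 = 1$. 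Weak duality then yields $\sum_i \delta_i = 2\sum_e r_e\gamma_e \le 2 \cdot \mathrm{OPT}\eqref{eqn:dist_to_feas_lp}$. Finally, since $y^*(c)$ is feasible for MWPM-D with cost $c$, the perturbation $\delta_i^* := \max(0, \hat y_i - y^*_i(c))$ (or more simply, noting $\hat y - y^*(c)$ restricted to its positive part) is a feasible solution to \eqref{eqn:dist_to_feas_lp}, so $\mathrm{OPT}\eqref{eqn:dist_to_feas_lp} \le \|\hat y - y^*(c)\|_1$; therefore $\|\hat y'(c) - \hat y\|_1 = \sum_i \delta_i \le 2\|\hat y - y^*(c)\|_1$. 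The last line is the triangle inequality: $\|\hat y'(c) - y^*(c)\|_1 \le \|\hat y'(c) - \hat y\|_1 + \|\hat y - y^*(c)\|_1 \le 3\|\hat y - y^*(c)\|_1$.
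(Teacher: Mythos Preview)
Your proposal is correct and follows essentially the same three-lemma structure as the paper (running time, feasibility of $\delta$, and $2$-approximation via the dual matching LP with $\gamma_{ij}=1/2$), followed by the triangle inequality. If anything, you are slightly more careful than the paper in the final step: the paper simply says ``noticing that $y^*(c)$ is a possible feasible solution,'' whereas you explicitly exhibit the feasible perturbation $\delta_i^* = \max(0,\hat y_i - y^*_i(c))$ and verify $\sum_i \delta_i^* \le \|\hat y - y^*(c)\|_1$, which is the precise justification that step needs.
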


\subsection{Seeding Hungarian with a Feasible Dual (Optimization)}  
    \label{sec:matching-to-optimal}

In this section we assume that we are given a feasible integral dual  $\hat y'(c)$ for an input with cost vector $c$ and the goal is to find an optimal solution.  We want to analyze the running time in terms of $\|\hat y'(c) - y^*(c)\|_1$, the distance to optimality.   We use a simple primal-dual schema to achieve this, which is given formally in Algorithm~\ref{alg:mpwm-pd}.

\submit{
 \begin{wrapfigure}{R}{0.57\textwidth}
 \vspace{-11mm}
 \begin{minipage}{0.55\textwidth}
\begin{algorithm}[H]
\caption{\label{alg:mpwm-pd}Simple Primal-Dual Scheme for MWPM}
\begin{algorithmic}[1]
\Procedure{MWPM-PD}{$G = (V,E),c,y$}
\State $E' \gets \{ e \in E \mid y_i + y_j = c_{ij}$ \}
\full{\Comment{Set of tight edges in the dual}}
\submit{\Comment{Tight Edges}}
\State $G' \gets (V, E')$ \Comment{$G$  containing only tight edges}
\State $M \gets$ \text{ Maximum cardinality matching in }$G'$
\While{$M$ is not a perfect matching}
\State Find $S \subseteq L$ such that $|S| > |\Gamma(S)|$ in $G'$ \full{\Comment{Exists by Hall's Theorem}}
\full{\Statex \Comment{Can be found in $O(m+n)$ time}}
\State $\epsilon \gets \min_{i \in S,j \in R\setminus \Gamma(S)} \{ c_{ij} - y_i - y_j \}$
\State $\forall i \in S$, $y_i \gets y_i + \eps$
\State $\forall j \in \Gamma(S)$, $y_j \gets y_j - \eps$
\State Update $E',G'$
\State $M \gets$ \text{ Maximum cardinality matching in }$G'$
\EndWhile
\State Return $M$
\EndProcedure
\end{algorithmic}
\end{algorithm}
\end{minipage}
\vspace{-3mm}
\end{wrapfigure}
}

\full{
 \begin{algorithm}[H]
\caption{\label{alg:mpwm-pd}Simple Primal-Dual Scheme for MWPM}
\begin{algorithmic}[1]
\Procedure{MWPM-PD}{$G = (V,E),c,y$}
\State $E' \gets \{ e \in E \mid y_i + y_j = c_{ij}$ \}
\full{\Comment{Set of tight edges in the dual}}
\submit{\Comment{Tight Edges}}
\State $G' \gets (V, E')$ \Comment{$G$  containing only tight edges}
\State $M \gets$ \text{ Maximum cardinality matching in }$G'$
\While{$M$ is not a perfect matching}
\State Find $S \subseteq L$ such that $|S| > |\Gamma(S)|$ in $G'$ \full{\Comment{Exists by Hall's Theorem}}
\full{\Statex \Comment{Can be found in $O(m+n)$ time}}
\State $\epsilon \gets \min_{i \in S,j \in R\setminus \Gamma(S)} \{ c_{ij} - y_i - y_j \}$
\State $\forall i \in S$, $y_i \gets y_i + \eps$
\State $\forall j \in \Gamma(S)$, $y_j \gets y_j - \eps$
\State Update $E',G'$
\State $M \gets$ \text{ Maximum cardinality matching in }$G'$
\EndWhile
\State Return $M$
\EndProcedure
\end{algorithmic}
\end{algorithm}
}

To satisfy complementary slackness, we must only choose edges with $y_i + y_j = c_{ij}$.  Let $E'$ be the set of such edges.  We find a maximum cardinality matching in the graph $G' = (V,E')$.  If the resulting matching $M$ is perfect then we are done by complementary slackness (Lemma~\ref{lem:comp_slackness}) 
Otherwise, in steps 7-9 we modify the dual in a way that 
guarantees a strict increase in the dual objective. 
Since all parameters of the problem are integral, this strict increase then implies our desired bound on the number of iterations.

\full{
We now analyze Algorithm~\ref{alg:mpwm-pd}.  Recall that $L$ and $R$ give the bipartition of $V$.  First we show that the algorithm is correct.  The main claim we need to establish is that if $y$ is initially dual feasible, then it remains dual feasible throughout the algorithm.  First we check that the update defined in lines 6-10 is well defined, i.e. in line 6 such a set $S$ always exists and $\eps$ defined in line 7 is always strictly positive.

\begin{proposition}
    \label{prop:mpwm-s-exist}
If $M$ is not a perfect matching in $G'$, then there exists a set $S \subseteq L$ such that $|S| > |\Gamma(S)|$ in $G'$. Further, such $S$ can be found in $O(m+ n)$ time. 
\end{proposition}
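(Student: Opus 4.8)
The plan is to make König/Hall's condition constructive via a single alternating breadth-first search (the same search the Hungarian method uses), which simultaneously certifies that the desired set $S$ exists and produces it in linear time. First I would note that at the point where line~6 is reached, $M$ is a maximum-cardinality matching of the current graph $G'$ (it was set by line~4 or line~11). Since we are solving MWPM, the bipartition $L,R$ of $V$ satisfies $|L|=|R|$; because $M$ is not perfect and $|M| = |\{\text{matched } L\text{-vertices}\}| \le |L|$ with equality iff $M$ is perfect, there must be an $M$-unmatched vertex $u \in L$.

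Next I would grow an alternating search tree rooted at $u$: from an $L$-vertex already in the tree, follow every \emph{non-matching} edge of $G'$ into $R$; from an $R$-vertex already in the tree, follow its (unique) \emph{matching} edge back into $L$. Let $S\subseteq L$ and $T\subseteq R$ be the $L$- and $R$-vertices reached. Two claims do the work. (a) Every $w\in T$ is $M$-matched and its $M$-partner lies in $S$: if some $w\in T$ were unmatched, the tree path from $u$ to $w$ would be an $M$-augmenting path, contradicting maximality of $M$; and when $w$ is matched, the search follows its matching edge, putting the partner into $S$. (b) $\Gamma(S)\subseteq T$: for $v=u$ all incident edges are non-matching and hence followed, so $\Gamma(u)\subseteq T$; for $v\in S\setminus\{u\}$, $v$ was reached by traversing its matching edge from some $w\in T$, so that partner is already in $T$, while all non-matching edges at $v$ are followed into $T$. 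Since $T\subseteq\Gamma(S)$ is immediate, $\Gamma(S)=T$.

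A counting step then finishes: by (a), $M$ restricted to $T$ is an injection $T\hookrightarrow S\setminus\{u\}$ (injective since $M$ is a matching, landing outside $u$ since $u$ is unmatched), and by (b)'s reasoning every $v\in S\setminus\{u\}$ is the $M$-partner of the $T$-vertex that reached it, so this injection is onto; hence $M$ is a bijection between $T$ and $S\setminus\{u\}$, giving $|\Gamma(S)| = |T| = |S|-1 < |S|$. For the running time, the alternating BFS visits each vertex and traverses each edge of $G'$ only a constant number of times, so it runs in $O(m+n)$, and $S$ is read directly off the search tree.

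I do not expect a real obstacle here, since this is just König's theorem made algorithmic; the only point requiring care is claim~(b) — ensuring the matching edge at an internal $L$-vertex of the tree cannot leave $T$ — which is exactly why one defines $S$ so that each $v\in S\setminus\{u\}$ enters the tree \emph{through} its matching edge. A secondary bookkeeping point is justifying that the unmatched vertex can be taken on the $L$ side, which follows from $|L|=|R|$.
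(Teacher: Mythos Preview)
Your proof is correct. The paper takes a slightly different route: it invokes K\"onig's theorem to extract a minimum vertex cover $C$ from the maximum matching $M$ in $O(m+n)$ time, sets $S = L \setminus C$, observes that $\Gamma(S) \subseteq C \cap R$ because $C$ is a cover, and concludes $|S| > |\Gamma(S)|$ from the count $|C| = |M| < |L|$. Your argument instead unpacks (a restricted form of) the K\"onig construction directly, running the alternating BFS from a single unmatched $L$-vertex and reading $S$ and $T = \Gamma(S)$ off the search tree. The paper's version is shorter because it black-boxes K\"onig; yours is self-contained and makes the linear running time explicit without appeal to an external fact. One difference in the \emph{output}: the paper's $S = L \setminus C$ is the set of $L$-vertices reachable by alternating paths from \emph{all} unmatched $L$-vertices, which can be strictly larger than your single-root $S$. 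This does not matter for the proposition itself, but the paper later remarks that taking the full $S = L \setminus C$ in line~6 is precisely what recovers the standard Hungarian algorithm and its $\tilde O(mn)$ worst-case bound; your smaller $S$ would still keep Algorithm~\ref{alg:mpwm-pd} correct but would not immediately give that identification.
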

\begin{proof}
The first claim follows directly from Hall's Theorem applied to $G'$. It is well-known that the maximum matching size is equal to the minimum vertex cover size when the underlying graph is bipartite. Further, a minimum vertex cover $C$ can be derived from a maximum matching $M$ in time $O(m+n)$. We set $S = L \setminus C$. Then, we have $\Gamma(S) \subseteq C \cup R$ due to $C$ being a vertex cover, and $|C \cap L| + |C \cap R| = |C| < n$ as the minimum cover size is less than $n$; recall $M$ is not perfect. Thus, we have $|S| = n - |C \cap L| > |C \cap R| \geq |\Gamma(S)|$, as desired. 
\end{proof}

\begin{proposition} \label{prop:mwpm_eps}
Let $y$ be dual feasible and suppose that $S \subseteq L$ with $|S| > |\Gamma(S)|$ in $G'$.  Let $\epsilon = \min_{i \in S,j \in R\setminus \Gamma(S)}  c_{ij} - y_i - y_j$.  Then as long as $c$ and $y$ are integer we have $\epsilon \geq 1$.
\end{proposition}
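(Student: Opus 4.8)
The plan is to argue directly from dual feasibility together with integrality, after observing that the pairs $(i,j)$ appearing in the definition of $\epsilon$ are exactly \emph{non-tight} edges. The first small step is to check that the minimization defining $\epsilon$ is over a nonempty set. Since $|S| > |\Gamma(S)| \geq 0$ we have $S \neq \emptyset$. For $R \setminus \Gamma(S)$: we are working with the complete bipartite graph reduction from Section~\ref{sec:prelim}, so $|L| = |R| = n/2$, and hence $|\Gamma(S)| < |S| \leq |L| = |R|$ forces $R \setminus \Gamma(S) \neq \emptyset$. (This also reassures us that $c_{ij}$ is defined for every such pair.) So $\epsilon$ is well-defined and finite.

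Next, fix any $i \in S$ and $j \in R \setminus \Gamma(S)$. Because $\Gamma(S)$ is the neighborhood of $S$ in the tight-edge graph $G' = (V, E')$ and $j \notin \Gamma(S)$, the vertex $j$ has no neighbor in $S$ within $G'$; in particular $ij \notin E'$, which by definition of $E'$ means $y_i + y_j \neq c_{ij}$. On the other hand, $y$ is a feasible dual for MWPM-D, so $y_i + y_j \leq c_{ij}$. Combining these gives the strict inequality $y_i + y_j < c_{ij}$, i.e.\ $c_{ij} - y_i - y_j > 0$.

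Finally, since both $c$ and $y$ are integral, $c_{ij} - y_i - y_j$ is a strictly positive integer, hence at least $1$. As this bound holds for every pair $(i,j)$ with $i \in S$ and $j \in R \setminus \Gamma(S)$, in particular for the minimizing pair, we conclude $\epsilon \geq 1$. There is no genuine obstacle here: the only two points deserving a sentence of care are verifying that the index set of the minimum is nonempty, and noting that "$j \notin \Gamma(S)$" is precisely what rules out tightness of $ij$ and thereby upgrades the feasibility inequality $y_i + y_j \leq c_{ij}$ to a strict one, which integrality then sharpens to a gap of at least $1$.
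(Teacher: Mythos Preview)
Your proof is correct and follows essentially the same argument as the paper: both observe that every edge from $S$ to $R \setminus \Gamma(S)$ is non-tight, then combine dual feasibility with integrality to conclude $\epsilon \geq 1$. The only minor difference is in verifying that the minimum is taken over a nonempty set---you invoke the complete-bipartite-graph reduction from Section~\ref{sec:prelim}, whereas the paper argues that emptiness of this edge set would make $|S|$ exceed the neighborhood of $S$ in $G$ itself, violating Hall's condition and contradicting primal feasibility.
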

\begin{proof}
Every edge $ij$ considered in the definition of $\eps$ is not in $E'$ and thus must have $c_{ij} > y_i + y_j$.  Thus for all such edges we have $c_{ij} - y_i - y_j \geq 1$ since $c$ and $y$ are integer, and so $\eps \geq 1$.

If no such edge exists, then we have a set $S \subseteq L$ such that $|S|$ is strictly larger than its neighborhood in $G$ (rather than $G'$) which shows that the problem is infeasible.  This contradicts our assumption that the original problem is feasible.
\end{proof}

We now show the main claims we described above.

\begin{lemma} \label{lem:dual_always_feas}
If Algorithm~\ref{alg:mpwm-pd} is given an initial dual feasible $y$, then $y$ remains dual feasible throughout its execution.
\end{lemma}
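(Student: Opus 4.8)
The plan is to prove the statement by induction on the number of iterations of the while loop, with the base case being exactly the hypothesis that the input dual $y$ is feasible for MWPM-D. For the inductive step, fix one iteration: let $y$ denote the (feasible, by the inductive hypothesis) dual at the start of the iteration, let $S \subseteq L$ and $\eps$ be as chosen in lines 6--7, and let $y'$ be the dual after the updates in lines 8--9. I need to check that $y'_i + y'_j \le c_{ij}$ for every edge $e = ij \in E$ with $i \in L$, $j \in R$. Two facts will be used repeatedly: first, by Proposition~\ref{prop:mwpm_eps} we have $\eps \ge 1 > 0$ (and, since each update merely adds or subtracts the integer $\eps$, integrality of $y$ is preserved, so Proposition~\ref{prop:mwpm_eps} remains applicable in later iterations); second, by the definition of $\eps$ as a minimum over all $i \in S$, $j \in R \setminus \Gamma(S)$, we have $\eps \le c_{ij} - y_i - y_j$ for every such pair.

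The heart of the argument is a case split on whether $i \in S$ and whether $j \in \Gamma(S)$, where $\Gamma(S)$ is the neighborhood of $S$ in the tight-edge graph $G'$. (a) If $i \in S$ and $j \in \Gamma(S)$, then $y'_i = y_i + \eps$ and $y'_j = y_j - \eps$, so $y'_i + y'_j = y_i + y_j \le c_{ij}$. (b) If $i \in S$ and $j \notin \Gamma(S)$, then $y'_i = y_i + \eps$ and $y'_j = y_j$; since $j \in R \setminus \Gamma(S)$, the pair $(i,j)$ is among those over which $\eps$ is minimized, so $\eps \le c_{ij} - y_i - y_j$ and hence $y'_i + y'_j \le c_{ij}$. (c) If $i \notin S$ and $j \in \Gamma(S)$, then $y'_i = y_i$ and $y'_j = y_j - \eps \le y_j$, so $y'_i + y'_j \le y_i + y_j \le c_{ij}$. (d) If $i \notin S$ and $j \notin \Gamma(S)$, neither coordinate changes and the constraint is preserved trivially. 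These cases are exhaustive, so $y'$ is dual feasible, completing the induction.

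The only subtle point — and the only place the choice of $S$ relative to $G'$ (rather than $G$) matters — is case (b): an edge $ij$ with $i \in S$ but $j \notin \Gamma(S)$ cannot be a tight edge, because a tight edge incident to $i$ would put $j$ into $\Gamma(S)$; this is precisely what guarantees $(i,j)$ lies in the index set defining $\eps$ so that the bound $\eps \le c_{ij} - y_i - y_j$ is available. I do not expect any real obstacle here; the remaining cases are a routine check, and the two auxiliary facts (positivity and integrality of $\eps$, and the minimality defining $\eps$) are already established by Proposition~\ref{prop:mwpm_eps} and Proposition~\ref{prop:mpwm-s-exist}. The one thing to be careful about is to carry the integrality of $y$ through the induction, since that is what licenses repeated use of Proposition~\ref{prop:mwpm_eps}.
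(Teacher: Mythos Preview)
Your proof is correct and follows essentially the same approach as the paper: an induction on iterations, with the inductive step handled by the same four-case split on whether $i \in S$ and whether $j \in \Gamma(S)$. Your added remarks on integrality preservation and on why case (b) works (because $\Gamma(S)$ is taken in $G'$) are accurate and slightly more explicit than the paper's version.
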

\begin{proof}
Inductively, it suffices to show that if $y$ is dual feasible then it remains so after the update steps defined in lines 6-10.  To make the notation clear, let $y'$ be the result of applying the update rule to $y$.  Consider an edge $ij \in E$.  We want to show that $y'_i + y'_j \leq c_{ij}$ after the update step.  There are 4 cases to check: (1) $i \in L \setminus S, j \in R \setminus \Gamma(S)$, (2) $i \in L \setminus S, j \in \Gamma(S)$, (3) $i \in S, j \in R \setminus \Gamma(S)$, and (4) $i \in S, j \in \Gamma(S)$.

In the first case, neither $y_i$ nor $y_j$ are modified, so we get $y'_i + y'_j = y_i + y_j \leq c_{ij}$ since $y$ was initially dual feasible.  In the second case we have $y'_i + y'_j = y_i + y_j - \eps \leq c_{ij}$ since $\eps > 0$.  In the third case we have $y'_i = y_i +\eps$ and so $y'_i + y'_j = y_i + \eps + y_j \leq c_{ij}$ since there was slack on these edges and $\eps$ was chosen to be the smallest such slack.  Finally, in the last case we have $y'_i + y'_j= y_i + \eps + y_j - \eps \leq c_{ij}$.  Thus we conclude that $y$ remains feasible throughout the execution of Algorithm~\ref{alg:mpwm-pd}.
\end{proof}

\begin{lemma}
\label{lem:mwmp_pd_terminates}
Each iteration strictly increases the value of the dual solution.
\end{lemma}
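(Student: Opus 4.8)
The plan is to directly compute how the dual objective $\sum_{i \in V} y_i$ changes under the update in lines~7--9 of Algorithm~\ref{alg:mpwm-pd}. The only coordinates that are modified in an iteration are those indexed by $S$, each of which increases by $\eps$, and those indexed by $\Gamma(S)$, each of which decreases by $\eps$; no other $y_i$ changes. Hence the net change in the objective is exactly
\[
\eps \cdot |S| - \eps \cdot |\Gamma(S)| = \eps\,\bigl(|S| - |\Gamma(S)|\bigr).
\]

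It then remains only to lower-bound this quantity by a positive number. First, $S$ is chosen in line~6 so that $|S| > |\Gamma(S)|$ in $G'$, and since $|S|$ and $|\Gamma(S)|$ are integers, $|S| - |\Gamma(S)| \ge 1$. Second, for the factor $\eps$: at the start of the iteration $y$ is dual feasible by Lemma~\ref{lem:dual_always_feas}, so Proposition~\ref{prop:mwpm_eps} applies and yields $\eps \ge 1$, using integrality of $c$ and $y$ together with the fact that the original instance is feasible (which rules out the degenerate case where no valid edge exists in the definition of $\eps$). Multiplying the two bounds, the dual objective increases by at least $\eps \ge 1 > 0$ in each iteration, as claimed.

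I do not expect a genuine obstacle here; the calculation above is the whole argument. The one thing to be careful about is bookkeeping: the set $\Gamma(S)$ appearing in the objective-change computation must be exactly the neighborhood (in $G'$) used by the update rule, which holds by construction, and $\eps$ must be well-defined and strictly positive, which is precisely the content of Propositions~\ref{prop:mpwm-s-exist} and~\ref{prop:mwpm_eps}. With those facts invoked, the lemma follows immediately from the one-line identity for the change in the dual objective.
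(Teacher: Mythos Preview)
Your proposal is correct and follows essentially the same approach as the paper: compute the change in the dual objective as $\eps(|S|-|\Gamma(S)|)$, then use $|S|>|\Gamma(S)|$ and $\eps\ge 1$ (from Proposition~\ref{prop:mwpm_eps}) to conclude the increase is at least $\eps$. You merely make the references to the supporting propositions and to Lemma~\ref{lem:dual_always_feas} more explicit than the paper does.
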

\begin{proof}
Note that in each iteration $y_i$ increases by $\eps$ for all $i \in S$ and $y_j$ decreases by $\eps$ for all $j \in \Gamma(S)$; and all other dual variables remain unchanged. Thus, the dual objective increases by $\eps ( |S| - |\Gamma(S)|) \geq \eps$. 
\end{proof}

The above lemma allows us to analyze the running time of our algorithm in terms of the distance to optimality.

\begin{lemma} \label{lem:mwpm_pred_iter_bound}
Consider an arbitrary cost vector $c$. Suppose that $\yhat'(c)$ is an integer dual feasible solution and $y^*(c)$ is an integer optimal dual solution.  If Algorithm~\ref{alg:mpwm-pd} is initialized with $\yhat'(c)$,  then the number of iterations is bounded by $\|\yhat(c) - y^*(c)\|_1$.
\end{lemma}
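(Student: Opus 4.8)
The plan is to treat the dual objective $\val(y) := \sum_{i\in V} y_i$ as a potential function, show it increases by at least one unit per iteration, and observe that it can never rise above the optimum. By Lemma~\ref{lem:dual_always_feas}, the vector $y$ maintained by Algorithm~\ref{alg:mpwm-pd} remains dual feasible at every step, so by optimality of $y^*(c)$ we always have $\val(y)\le \val(y^*(c))$. On the other hand, Lemma~\ref{lem:mwmp_pd_terminates} — combined with Proposition~\ref{prop:mpwm-s-exist}, which guarantees a set $S$ with $|S|>|\Gamma(S)|$, and Proposition~\ref{prop:mwpm_eps}, which guarantees $\eps\ge 1$ when $c$ and $y$ are integral — shows that each iteration raises $\val(y)$ by exactly $\eps\,(|S|-|\Gamma(S)|)\ge 1$.

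Combining the two facts: if the algorithm runs for $T$ iterations starting from $\yhat'(c)$, then $\val(y^*(c))\ge \val(\yhat'(c))+T$, hence
$T\le \val(y^*(c))-\val(\yhat'(c)) = \sum_{i\in V}\bigl(y^*_i(c)-\yhat'_i(c)\bigr)$.
(This also re-derives termination, since the right-hand side is finite.) Finally I bound this signed sum by $\sum_{i\in V}\bigl|y^*_i(c)-\yhat'_i(c)\bigr| = \|\yhat'(c)-y^*(c)\|_1$, which gives the claimed iteration bound.

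I do not expect a real obstacle: the argument is a one-line potential accounting once Lemmas~\ref{lem:dual_always_feas} and~\ref{lem:mwmp_pd_terminates} are in hand. The only point worth stating carefully is that the quantity that genuinely controls the iteration count is the \emph{objective} gap $\val(y^*(c))-\val(\yhat'(c))$, not an $\ell_1$ distance per se — dual prices may be negative, so $\val(y)\ne\|y\|_1$ in general — and the passage from this signed gap to $\|\yhat'(c)-y^*(c)\|_1$ is exactly the absolute-value/triangle-inequality step above. The cleanest way to present the whole argument is therefore: the potential $\val(y)$ starts at $\val(\yhat'(c))$, is capped at $\val(y^*(c))$, and strictly increases by at least $1$ each iteration, so there are at most $\val(y^*(c))-\val(\yhat'(c))\le\|\yhat'(c)-y^*(c)\|_1$ iterations.
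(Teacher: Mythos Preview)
Your proposal is correct and follows essentially the same argument as the paper: the paper's proof also observes that the dual objective increases by at least $1$ per iteration (via Lemma~\ref{lem:mwmp_pd_terminates}) and then bounds the number of iterations by $\sum_i y^*_i(c) - \sum_i \yhat'_i(c) \le \|y^*(c) - \yhat'(c)\|_1$. Your write-up is slightly more explicit in invoking Lemma~\ref{lem:dual_always_feas} to justify that the dual value is capped at the optimum, and in distinguishing the signed objective gap from the $\ell_1$ distance, but the substance is the same.
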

\begin{proof}
By Lemma~\ref{lem:mwmp_pd_terminates}, we have that the value of the dual solution increases by at least 1 in each iteration.  Thus the number of iterations is at most $\sum_i y^*_i(c) - \sum_i \yhat_i(c) \leq \sum_i |y^*_i(c) - \yhat_i(c)| = \|y^*(c) - \yhat(c)\|_1$.
\end{proof}

Finally, we get the following theorem as a corollary of the lemmas above and the $O(m\sqrt{n})$ runtime of the Hopcroft-Karp algorithm for maximum cardinality matching~\cite{HopcroftKarp}. More precisely, the above lemmas show that the algorithm performs at most $O\left(  \|y^*(c) - \hat y'(c)\|_1 \right)$ iterations, each  running in $O(m\sqrt{n})$ time.  We can further improve this  by ensuring the algorithm runs no longer than the standard Hungarian algorithm in the case that we have large error in the prediction, i.e., $\|y^*(c) - \hat y'(c)\|_1 $ is large. In particular, steps 6 and 11 do not precisely specify the choice of the set $S$ and the matching $M$.  If we instantiate these steps appropriately (let $S = L \setminus C$ for step 6, where $C$ is a minimum vertex cover, and update $M$ along shortest-augmenting-paths for step 11) then we recover the Hungarian Algorithm and its $\tilde O(mn)$ running time.

\begin{theorem} \label{thm:PD-main}
Consider an arbitrary cost vector $c$. 
There exists an algorithm which takes as input a feasible integer dual solution $\hat y'(c)$ and finds a minimum weight perfect matching in $\tilde O\left( \min \left\{ m\sqrt{n} \|y^*(c) - \hat y'(c)\|_1, mn \right\} \right)$ time, where $y^*(c)$ is an optimal dual solution.
\end{theorem}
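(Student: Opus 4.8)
The plan is to analyze Algorithm~\ref{alg:mpwm-pd} when it is warm-started with the feasible integral dual $\hat y'(c)$: first show it is correct for any feasible instance, then bound its iteration count by $\|y^*(c)-\hat y'(c)\|_1$, and finally obtain the $\min$ with $mn$ by exploiting the freedom the schema leaves in choosing $S$ and recomputing $M$, so that on a poor prediction it degrades gracefully to the classical Hungarian method.

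\textbf{Step 1 (correctness).} I would first establish that, from any feasible integral starting dual, the algorithm returns an optimal matching. Two ingredients are needed. (a) Each inner update is well defined: when $M$ is not perfect in the tight graph $G'$, there is a set $S\subseteq L$ with $|S|>|\Gamma(S)|$ in $G'$ — guaranteed by K\"onig's theorem and extractable in $O(m+n)$ from $M$ via a minimum vertex cover ($S=L\setminus C$), as in Proposition~\ref{prop:mpwm-s-exist} — and the step size $\eps$ is a positive integer since $c$ and $y$ are integral (Proposition~\ref{prop:mwpm_eps}); if no finite $\eps$ exists then $G$ has no perfect matching, which is excluded by the complete-graph reduction of Section~\ref{sec:prelim}. (b) Dual feasibility is preserved throughout (Lemma~\ref{lem:dual_always_feas}), via a four-case check on an edge $ij$ according to the membership of its endpoints in $S$, in $\Gamma(S)$, or in neither — the only edges whose dual sum increases are those from $S$ to $R\setminus\Gamma(S)$, and $\eps$ is exactly the minimum slack over those. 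When the loop exits, $M$ is a perfect matching on tight edges, so its incidence vector together with $y$ satisfies complementary slackness and both are optimal by Lemma~\ref{lem:comp_slackness}.

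\textbf{Step 2 (iteration count and the first term).} Each iteration raises the dual objective by $\eps(|S|-|\Gamma(S)|)\ge\eps\ge 1$ (Lemma~\ref{lem:mwmp_pd_terminates}). The dual value starts at $\sum_i \hat y'_i(c)$ and, upon termination, equals the optimum $\sum_i y^*_i(c)$ (strong duality plus integrality of MWPM-D), so the number of iterations is at most $\sum_i y^*_i(c)-\sum_i \hat y'_i(c)\le\|y^*(c)-\hat y'(c)\|_1$ (Lemma~\ref{lem:mwpm_pred_iter_bound}). Recomputing a maximum-cardinality matching in $G'$ each iteration via Hopcroft--Karp costs $O(m\sqrt n)$~\cite{HopcroftKarp} and finding $S$ costs $O(m+n)$, giving running time $O\!\bigl(m\sqrt n\,\|y^*(c)-\hat y'(c)\|_1\bigr)$ for this implementation.

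\textbf{Step 3 (the $\min$ with $mn$; main obstacle).} The difficulty is that $\|y^*(c)-\hat y'(c)\|_1$ can be as large as $\Theta(nC)$, so the Step-2 bound by itself does not imply a worst-case guarantee competitive with the Hungarian method. The resolution is that lines 6 and 11 of Algorithm~\ref{alg:mpwm-pd} leave the choice of $S$ and the way $M$ is maintained unspecified: instantiating them as ``$S=L\setminus C$ for a minimum vertex cover $C$, and extend $M$ along shortest augmenting paths'' makes the schema coincide with the classical Hungarian method started from $\hat y'(c)$, whose running time from any feasible dual is $\tilde O(mn)$. To package this as a single algorithm meeting the claimed bound $\tilde O\bigl(\min\{m\sqrt n\,\|y^*(c)-\hat y'(c)\|_1,\,mn\}\bigr)$, I would run the two implementations of Algorithm~\ref{alg:mpwm-pd} in round-robin and halt as soon as either returns a perfect matching; correctness is inherited from Step~1 (which applies to every instantiation), the running time is twice the minimum of the two bounds, and the factor $2$ is absorbed into $\tilde O$. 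The step most worth double-checking is precisely that the Hungarian instantiation genuinely satisfies the template's hypotheses — it still selects a valid $S$ with $|S|>|\Gamma(S)|$, keeps $y$ dual-feasible, and terminates with a perfect tight-edge matching — so that the general correctness argument applies verbatim while the $\tilde O(mn)$ bound is imported from the standard analysis of the Hungarian algorithm.
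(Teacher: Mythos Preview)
Your proposal is correct and follows essentially the same approach as the paper: both establish correctness of Algorithm~\ref{alg:mpwm-pd} via Propositions~\ref{prop:mpwm-s-exist}--\ref{prop:mwpm_eps} and Lemma~\ref{lem:dual_always_feas}, bound the iteration count by $\|y^*(c)-\hat y'(c)\|_1$ using the unit dual increase per iteration (Lemmas~\ref{lem:mwmp_pd_terminates}--\ref{lem:mwpm_pred_iter_bound}), and obtain the $m\sqrt{n}$ per-iteration cost from Hopcroft--Karp. The only difference is in how the $\min$ with $mn$ is packaged: the paper simply notes that the Hungarian instantiation of steps~6 and~11 recovers the classical $\tilde O(mn)$ bound and leaves the combination implicit, whereas you make the round-robin dovetailing of the two implementations explicit --- this is arguably cleaner, since it avoids having to argue that a single instantiation simultaneously achieves both bounds.
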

}

\submit{   We will show this algorithm performs at most $O\left(  \|y^*(c) - \hat y'(c)\|_1 \right)$ iterations.  We can further improve this  by ensuring the algorithm runs no longer than the standard Hungarian algorithm in the case that we have large error in the prediction, i.e., $\|y^*(c) - \hat y'(c)\|_1 $ is large. In particular, steps 6 and 11 do not precisely specify the choice of the set $S$ and the matching $M$.  If we instantiate these steps appropriately (let $S = L \setminus C$ for step 6, where $C$ is a minimum vertex cover, and update $M$ along shortest-augmenting-paths for step 11) then we recover the Hungarian Algorithm.    Together,  we can prove the following theorem.

\begin{theorem} \label{thm:PD-main}
For an arbitrary cost vector $c$, 
 the Hungarian method starting with  a feasible integer dual solution $\hat y'(c)$
finds a minimum weight perfect matching in $\tilde O\left( m\sqrt{n} \cdot  \min \left\{ \|y^*(c) - \hat y'(c)\|_1, \sqrt{n} \right\} \right)$ time, where $y^*(c)$ is an optimal dual solution.
\end{theorem}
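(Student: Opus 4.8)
The plan is to analyze Algorithm~\ref{alg:mpwm-pd} as a primal-dual scheme and establish three things in order: (a) correctness — on termination it outputs a minimum weight perfect matching; (b) a prediction-dependent bound showing the \textsc{while} loop runs at most $\|y^*(c)-\hat y'(c)\|_1$ times, each iteration costing $O(m\sqrt n)$; and (c) that an appropriate instantiation of the under-specified steps reduces to the classical Hungarian algorithm, whose worst-case running time $\tilde O(mn)=\tilde O(m\sqrt n\cdot\sqrt n)$ holds for \emph{any} feasible warm-start dual. The claimed bound then follows by taking the better of (b) and (c).

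For correctness I would first prove the invariant that $y$ stays feasible for MWPM-D throughout the execution. Since the input $\hat y'(c)$ is feasible by hypothesis, it suffices to check that one update in lines 7--9 preserves feasibility: for an edge $ij$, split into the four cases according to whether each endpoint lies in $S$, in $\Gamma(S)$, or in neither, and observe that $y_i+y_j$ either is unchanged, decreases by $\epsilon$, or (only when $i\in S$, $j\in R\setminus\Gamma(S)$) increases by $\epsilon$ on an edge whose slack was at least $\epsilon$ by the choice in line 7. I also need the update to be well-defined: the set $S$ with $|S|>|\Gamma(S)|$ in $G'$ exists whenever $M$ is not perfect (Hall's theorem restricted to $L$), and it is extracted in $O(m+n)$ time as $S=L\setminus C$ from a minimum vertex cover $C$ of $G'$ (via the constructive König argument on the alternating forest); moreover $\epsilon\ge 1$ because each edge appearing in the minimum of line 7 is non-tight and all data are integral, and $\epsilon<\infty$ because otherwise such an $S$ would witness infeasibility of the perfect-matching instance, impossible after completing $G$ to a complete graph. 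Finally, when the loop exits, $M$ is a perfect matching supported on tight edges, so its incidence vector and $y$ satisfy complementary slackness and Lemma~\ref{lem:comp_slackness} gives optimality.

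For the iteration bound, the crux is that each iteration raises the dual objective $\sum_i y_i$ by exactly $\epsilon(|S|-|\Gamma(S)|)\ge\epsilon\ge 1$, hence strictly increases it and, by integrality, by at least one unit. Since the objective begins at $\sum_i \hat y'_i(c)$ and never exceeds the optimum $\sum_i y^*_i(c)$ by weak duality, the number of iterations is at most $\sum_i y^*_i(c)-\sum_i \hat y'_i(c)\le\|y^*(c)-\hat y'(c)\|_1$. Instantiating line 11 by recomputing a maximum-cardinality matching in $G'$ from scratch via Hopcroft--Karp costs $O(m\sqrt n)$ per iteration, plus $O(m+n)$ to find $S$ and refresh $G'$; this yields the $O\!\left(m\sqrt n\,\|y^*(c)-\hat y'(c)\|_1\right)$ branch.

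The remaining step — capping the cost at $\tilde O(mn)$ — is the only place I expect real friction. Here I would instead instantiate line 6 with $S=L\setminus C$ and line 11 by augmenting $M$ along a shortest augmenting path in $G'$; with these choices the scheme is exactly the Hungarian algorithm, whose standard implementation runs in $\tilde O(mn)$, and that analysis uses only that the starting dual is feasible, so it transfers verbatim to a warm start. The subtlety is that the per-iteration accounting differs between the two instantiations (clean $O(m\sqrt n)$ recomputation for the first branch versus an amortization over phases and dual adjustments for the $\tilde O(mn)$ branch), so rather than reconciling the two amortizations I would simply dovetail the two instantiations — interleave their steps and halt as soon as either produces a perfect matching — which at most doubles the running time and gives $\tilde O\!\left(\min\{m\sqrt n\,\|y^*(c)-\hat y'(c)\|_1,\ mn\}\right)=\tilde O\!\left(m\sqrt n\cdot\min\{\|y^*(c)-\hat y'(c)\|_1,\sqrt n\}\right)$, as claimed. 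The one point worth stating carefully is why the Hungarian worst-case bound is genuinely insensitive to the warm start, which is exactly because its proof only invokes dual feasibility and the integrality-driven $\ge 1$ increase per dual adjustment, both of which we have already established.
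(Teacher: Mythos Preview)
Your proof is correct and follows essentially the same approach as the paper: feasibility preservation via the four-case analysis, existence of $S$ via Hall/K\"onig with $S=L\setminus C$, $\epsilon\ge 1$ from integrality, the per-iteration dual increase $\epsilon(|S|-|\Gamma(S)|)\ge 1$ giving the $\|y^*(c)-\hat y'(c)\|_1$ iteration bound, and Hopcroft--Karp for the $O(m\sqrt n)$ cost per iteration.

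The one place you diverge is in obtaining the $\tilde O(mn)$ cap. The paper asserts that a \emph{single} instantiation of Algorithm~\ref{alg:mpwm-pd} (with $S=L\setminus C$ and shortest-augmenting-path updates) \emph{is} the Hungarian algorithm and therefore simultaneously enjoys both bounds; it does not spell out how the two per-iteration accountings coexist. You instead propose dovetailing two separate instantiations. Your route is arguably more rigorous, since it sidesteps the need to reconcile the Hopcroft--Karp recomputation analysis with the Hungarian amortization inside one algorithm, at the cost of a constant-factor overhead. Either way the final bound is the same.
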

}

\subsection{Learning Optimal Advice (Learning)} \label{sec:pm_learning}

Now we want to formally instantiate the ``learning" part of our framework: if there is a good starting dual solution for a given input distribution, we want to find it without seeing too many samples.  The formal model we will use is derived from data driven algorithm design and PAC learning.  

We imagine solving many problem instances drawn from the same distribution.  To formally model this, we let $\cD$ be an unknown distribution over instances.  For simplicity, we consider the graph $G = (V,E)$ to be fixed with varying costs. 
Thus $\cD$ is a distribution over cost vectors $c \in \R^E$.  We assume that the costs in this distribution are bounded.  Let $C := \max_{c \sim \cD} \max_{e \in E} c_e$ be finite and known to the algorithm.  Our goal is to find the (not necessarily feasible) dual assignment that performs ``best'' in expectation over the distribution.  Based on Theorems~\ref{thm:matchfeasible} and \ref{thm:PD-main} , we know that the ``cost'' of using dual values $y$ when the optimal dual is $y^*$ is bounded by $O(m \sqrt{n} \|y^* - y\|_1)$, and hence it is natural to define the ``cost" of $y$ as $\|y^* - y\|_1$.

For every $c \in \R^E$ we will let $y^*(c)$ be a fixed 
optimal dual solution for $c$:
\[ y^*(c) := \arg \max_y \Big\{ \sum_i y_i \ | \ \forall ij \in E, y_i + y_j \leq c_{ij}  \Big\} .\]
Here we assume without loss of generality that $y^*(c)$ is integral as the underlying polytope is known to be integral. We will let the loss of a dual assignment $y$ be its $\ell_1$-distance from the optimal solution: 
\[
L(y,c) = \|y - y^*(c)\|_1.
\]

Our goal is to learn dual values $\hat y$ which minimizes $\E_{c \sim \cD}[L(y,c)]$.  Let $y^*$ denote the vector minimizing  this objective, $y^* = \arg \min_{y} \E_{c \sim \cD}[L(y,c)]$.

We will give PAC-style bounds, showing that we only need a small number of samples in order to have a good probability of learning an approximately-optimal solution $\hat y$.  Our algorithm is conceptually quite simple: we minimize the empirical loss after an appropriate number of samples.  
\submit{In the supplementary we show that this can be done efficiently, giving the following theorem.}
\full{We have the following theorem.}

\begin{theorem} \label{thm:learning-PM-main}
There is an algorithm that after $s = O\left( \left(\frac{nC}{\epsilon} \right)^2 (n \log n + \log(1/p)) \right)$ samples returns dual values $\hat y$ such that $\E_{c \sim \cD}[L(\hat y, c)] \leq \E_{c \sim \cD}[L(y^*, c)] + \epsilon$ with probability at least $1-p$. The algorithm runs in time polynomial in $n,m$ and $s$.
\end{theorem}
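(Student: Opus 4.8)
The plan is to treat this as empirical risk minimization (ERM) over the family of loss functions $\mathcal{F} = \{\, c \mapsto L(y,c) = \|y-y^*(c)\|_1 : y \in \cY \,\}$ for a suitably bounded set $\cY$ of integral candidate duals, and then invoke a standard uniform-convergence theorem. Two quantities drive the bound: a range bound $H$ on the loss over $\cY$, and a bound $d$ on the pseudo-dimension $\mathrm{Pdim}(\mathcal{F})$. Given these, the classical generalization result for real-valued function classes (Haussler / Anthony--Bartlett, exactly as used in the data-driven algorithm design literature cited earlier) gives that $s = O\!\big(\tfrac{H^2}{\eps^2}(d+\log(1/p))\big)$ i.i.d.\ samples make $\big|\tfrac1s\sum_{j=1}^s L(y,c^{(j)}) - \E_{c\sim\cD}[L(y,c)]\big| \le \eps/2$ hold uniformly over $y\in\cY$ with probability $\ge 1-p$; outputting the empirical minimizer $\hat y$ then gives $\E_c[L(\hat y,c)] \le \E_c[L(y^*,c)]+\eps$ in the usual way, provided the comparator $y^*$ can be taken inside $\cY$.

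For the range bound, costs lie in $[0,C]$ and one can pick the canonical optimum $y^*(c)$ to be nonnegative with $\|y^*(c)\|_1 = \opt(c) \le (n/2)C$, by normalizing the one-parameter ``shift'' freedom $y_i\mapsto y_i+t$ on $L$, $y_j\mapsto y_j-t$ on $R$ (which preserves both feasibility and the dual objective). Then restricting $\cY$ to integral $y\ge 0$ with $\|y\|_1 = O(nC)$ is without loss: the distributional coordinate-wise median of $y^*(c)$ — which is exactly $y^*=\arg\min_y\E_c[L(y,c)]$ since $\E_c[L(y,c)]=\sum_i\E_c[|y_i-y^*_i(c)|]$ decouples — is nonnegative and has $\ell_1$-norm at most $2\,\E_c[\opt(c)]\le nC$ (a median of nonnegative values is at most twice their mean). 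On this $\cY$, $L(y,c)\le \|y\|_1+\|y^*(c)\|_1 = O(nC)=:H$. (A coarser polynomial bound on $\cY$ only inflates $s$ polynomially.)

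The crux is $\mathrm{Pdim}(\mathcal{F}) = O(n\log n)$. The observation that unlocks this is that pseudo-dimension cares only about the dependence on the parameter $y$ with the witnesses $c^{(1)},\dots,c^{(N)}$ held fixed — so the messy, piecewise-constant dependence of $y^*(\cdot)$ on $c$ is irrelevant: writing $v^{(j)}:=y^*(c^{(j)})\in\Z^n$, the map $y\mapsto L(y,c^{(j)})=\sum_{i=1}^n|y_i-v^{(j)}_i|$ is piecewise linear with breakpoints confined to the $nN$ axis-parallel hyperplanes $\{y_i=v^{(j)}_i\}$. These cut $\R^n$ into at most $(N+1)^n$ cells, and on each cell every $L(\cdot,c^{(j)})$ is affine in $y$, so for any thresholds $t_1,\dots,t_N$ the pattern $(\mathbf 1[L(y,c^{(j)})>t_j])_j$ realizes at most $(N+1)^n$ values per cell; hence the dual shatter function of $\mathcal{F}$ is at most $(N+1)^{2n}$. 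Requiring $2^N\le(N+1)^{2n}$ forces $N=O(n\log n)$. Plugging $H=O(nC)$ and $d=O(n\log n)$ into the uniform-convergence bound gives $s=O\!\big((nC/\eps)^2(n\log n+\log(1/p))\big)$.

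For the algorithm: draw $s$ samples, compute each $v^{(j)}=y^*(c^{(j)})$ with the Hungarian method in $\poly(n,m)$ time, and note that the empirical risk $\tfrac1s\sum_j\|y-v^{(j)}\|_1$ decouples over coordinates, so the ERM solution is simply $\hat y_i:=\med\{v^{(j)}_i : j\in[s]\}$ for each $i$ — integral when $s$ is odd, automatically inside $\cY$ (again by median $\le$ twice mean), and computable in $O(s\log s)$ per coordinate, for $\poly(n,m,s)$ total. I expect the pseudo-dimension step to be the main obstacle: obtaining $O(n\log n)$ via the hyperplane-arrangement count, rather than settling for the trivial $O(n\log C)$ bound one gets by discretizing $\cY$. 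A secondary subtlety is pinning $H$ down to $O(nC)$ by exhibiting a nonnegative (hence $\ell_1$-small) canonical optimal dual, rather than the larger bound a generic basic-solution / spanning-tree argument would give.
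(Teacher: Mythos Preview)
Your proposal is correct and follows essentially the same route as the paper. The heart of both arguments is the same pseudo-dimension bound: fix the $N$ witnesses, partition $\R^n$ by the $nN$ axis-parallel hyperplanes $\{y_i = y^*_i(c^{(j)})\}$ into at most $(N+1)^n$ cells, observe that on each cell every $L(\cdot,c^{(j)})$ is affine (the paper phrases this as ``$B_1(x^i,r_i)\cap C$ is a halfspace intersected with $C$''), and then count at most $\sum_{i\le n}\binom{N}{i}$ sign patterns per cell from the $N$ affine thresholds. Both reach $2^N \le (N+1)^{O(n)}$ and hence $N=O(n\log n)$. The ERM step via coordinate-wise median is also identical.

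The only real difference is how you pin down the range $H=O(nC)$. The paper simply restricts to the box $[-C,C]^V$ and asserts that one can always choose the canonical optimum $y^*(c)$ inside it (which follows by shifting so that $\min_{i\in L} y_i = 0$ and using the constraints on a complete graph to bound each coordinate by $C$ in absolute value). Your route---normalize to nonnegative duals and bound $\|y^*(c)\|_1$ by $\opt(c)$---is more elaborate, and the claim that the shift can make \emph{all} coordinates nonnegative is not quite right in general (the shift makes $L$ nonnegative, but some $R$-coordinates can still go down to $-C$). This is a cosmetic issue, not a gap: replacing your $\cY$ by the paper's box $[-C,C]^V$ fixes it immediately and also removes the need for the median-versus-mean detour to place $y^*$ inside $\cY$.
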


This theorem, together with Theorems~\ref{thm:matchfeasible} and \ref{thm:PD-main}, immediately implies Theorem~\ref{thm:matchmain-with-learning}.

\submit{
 At the heart of the proof of Theorem~\ref{thm:learning-PM-main} lies the proof of the following theorem regarding pseudo-dimensions of $\ell_1$-norm distance functions, which may be of independent interest.

 \begin{theorem} \label{thm:pseudo1}
 Let $\cH_n = \{f_y \mid y \in \R^n\}$ where $f_y : \R^n \rightarrow \R$ by $f_y(x) = \|y - x\|_1$. The pseudo-dimension of $\cH_n$ is at most $O(n \log n)$.
 \end{theorem}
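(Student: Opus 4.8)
The plan is to bound the growth function of the set system of subgraphs of $\cH_n$ and then invoke the standard conversion from growth function to pseudo-dimension. Recall that the pseudo-dimension of $\cH_n$ equals the VC dimension of the family $\cB = \{\, B_y := \{(x,t)\in\R^n\times\R : f_y(x) > t\} \ :\ y\in\R^n \,\}$, so it suffices to show: for any $m$ points $(x^{(1)},t_1),\dots,(x^{(m)},t_m)\in\R^n\times\R$, the number of distinct subsets they induce, i.e.\ the number of distinct sign vectors $\big(\mathbf{1}[\,\|y-x^{(k)}\|_1 > t_k\,]\big)_{k=1}^m$ realized as $y$ ranges over $\R^n$, is at most $(m+1)^{2n}$. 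Given this, if $\cH_n$ had pseudo-dimension $d$ then, taking $m=d$, all $2^d$ sign vectors would be realizable, forcing $2^d \le (d+1)^{2n}$, i.e.\ $d \le 2n\log_2(d+1)$; by the standard fact that $x \le a\log_2 x$ implies $x = O(a\log a)$, this gives $d = O(n\log n)$.

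To bound the number of sign vectors I would use that $\|y-x\|_1=\sum_{i=1}^n|y_i-x_i|$ is piecewise affine in $y$ with the regions of affinity carved out purely by \emph{axis-parallel} hyperplanes, which is exactly what keeps the count polynomial. Consider the $nm$ hyperplanes $H_{i,k}=\{y : y_i = x^{(k)}_i\}$ for $i\in[n]$, $k\in[m]$. For each fixed coordinate $i$ there are at most $m$ distinct values among $x^{(1)}_i,\dots,x^{(m)}_i$, so these hyperplanes partition $\R^n$ into at most $(m+1)^n$ boxes (products of intervals). On the interior of any one such box $B$, every sign $\operatorname{sign}(y_i-x^{(k)}_i)$ is constant, so $y\mapsto\|y-x^{(k)}\|_1$ coincides on $B$ with a single affine function $\ell^B_k(y)=\sum_{i=1}^n \sigma^B_{k,i}\,(y_i-x^{(k)}_i)$ with $\sigma^B_{k,i}\in\{\pm1\}$.

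Now fix a box $B$. There the event ``$\|y-x^{(k)}\|_1 > t_k$'' is exactly the halfspace $\{y:\ell^B_k(y)>t_k\}$, so the sign vector $\big(\mathbf{1}[\,\ell^B_k(y)>t_k\,]\big)_{k=1}^m$ is constant on each region of the arrangement of the $m$ hyperplanes $\{\ell^B_k(y)=t_k\}_{k=1}^m$ inside $B$. By the classical bound on hyperplane arrangements, $m$ hyperplanes in $\R^n$ cut space into at most $\sum_{j=0}^n\binom mj \le (m+1)^n$ regions, and restricting to $B$ only decreases this, so at most $(m+1)^n$ distinct sign vectors arise inside $B$. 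Summing over the at most $(m+1)^n$ boxes gives at most $(m+1)^{2n}$ sign vectors in total, which is the claimed bound. (A routine perturbation argument handles the measure-zero set of $y$ on which some constraint is tight, i.e.\ sign vectors realized only on box or cell boundaries; this changes the count by at most a polynomial factor in $m$ of degree $O(n)$ and hence does not affect the conclusion.)

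The step I expect to be the crux is the second one: recognizing that linearizing $\|y-x\|_1$ as a function of $y$ requires only axis-parallel cuts, of which there are just $m$ per coordinate. This is precisely what avoids the $2^n$ blow-up one would get by naively writing $\|y-x\|_1=\max_{s\in\{\pm1\}^n}\langle s,\,y-x\rangle$ as a maximum of exponentially many affine functions. Once the two-level arrangement is set up correctly — outer axis-parallel boxes, inner box-dependent affine cells cut by the $t_k$ constraints — the remaining ingredients, namely the region count for hyperplane arrangements and solving $2^d\le (d+1)^{2n}$, are entirely standard.
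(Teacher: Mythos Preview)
Your proof is correct and follows essentially the same approach as the paper's: partition $\R^n$ into at most $(m+1)^n$ axis-aligned boxes using the hyperplanes $\{y_i = x^{(k)}_i\}$, observe that each $\|y - x^{(k)}\|_1$ is affine on each box (the paper phrases this as ``each $\ell_1$-ball intersects a cell in a halfspace,'' its Lemma~\ref{lem:cell-region-intersection}), apply the hyperplane-arrangement region bound within each box, and solve $2^d \le (d+1)^{2n}$. The only difference is cosmetic---you frame the argument via the growth function of the subgraph class, while the paper directly counts nonempty ``regions'' $r(S)$ associated to a shattered set; both routes hinge on the same key insight you identify, namely that linearizing $\|\cdot\|_1$ requires only $m$ axis-parallel cuts per coordinate rather than $2^n$ facets.
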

 }

\full{
\subsubsection{Proof of Theorem~\ref{thm:learning-PM-main}}

We now discuss the main tools we require from statistical learning theory in order to prove Theorem~\ref{thm:learning-PM-main}.
For every dual assignment $y \in \R^V$, we define a function $g_y : \R^E \to \R$ by $g_y(c) = L(y,c) = \|y^*(c) - y\|_1$.  Let $\cH = \{g_y \mid y \in \R^V\}$ be the collection of all such functions.  It turns out that in order to prove Theorem~\ref{thm:learning-PM-main}, we just need to bound the \emph{pseudo-dimension} of this collection. Note that the notion of shattering and pseudo-dimension in the following is a generalization to real-valued functions of the classical notion of VC-dimension for boolean-valued functions (classifiers).

\begin{definition} \cite{pollard2012convergence,anthony2009neural, tim-pseudo}
Let $\mathcal F$ be a class of functions $f: X \to \R$.  Let $S = \{x_1,x_2,\ldots,x_s\} \subset X$.  We say that that $S$ is \emph{shattered} by $\mathcal F$ if there exist real numbers $r_1,\ldots,r_s$ so that for all $S' \subseteq S$, there is a function $f \in \mathcal F$ such that $f(x_i) \leq r_i \iff x_i \in S'$ for all $i \in [s]$.  The \emph{pseudo-dimension} of $\mathcal F$ is the largest $s$ such that there exists an $S\subseteq X$ with $|S| = s$ that is shattered by $\mathcal F$.
\end{definition}

The connection between pseudo-dimension and learning is given by the following uniform convergence result.

\begin{theorem} \label{thm:uniform_convergence} \cite{pollard2012convergence,anthony2009neural,tim-pseudo} 
Let $\cD$ be a distribution over a domain $X$ and $\cF$ be a class of functions $f: X \to [0,H]$ with pseudo-dimension $d_{\cF}$.  Consider $s$ independent samples $x_1,x_2,\ldots,x_s$ from $\cD$.  There is a universal constant $c_0$, such that for any $\epsilon > 0 $ and $p \in (0,1)$, if 
$s \geq c_0 \left( \frac{H}{\eps}\right)^2 (d_{\cF} + \ln(1/p))$
then we have 
\[
\left| \frac{1}{s} \sum_{i=1}^s f(x_i) - \E_{x \sim \cD}[f(x)] \right| \leq \epsilon
\]
for all $f \in \cF$ with probability at least $1-p$.
\end{theorem}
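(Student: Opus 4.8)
\textbf{Proof proposal for Theorem~\ref{thm:uniform_convergence}.} This is the classical uniform-convergence bound for real-valued function classes in terms of pseudo-dimension, so the plan is to reduce it to the standard generalization machinery rather than reinvent it. First I would observe that since every $f \in \cF$ maps into $[0,H]$, we may rescale by dividing by $H$ and work with the class $\cF/H = \{ f/H : f \in \cF\}$, whose functions map into $[0,1]$; rescaling does not change the pseudo-dimension, and the desired bound $\big|\tfrac1s\sum_i f(x_i) - \E[f]\big| \le \eps$ becomes $\big|\tfrac1s\sum_i (f/H)(x_i) - \E[f/H]\big| \le \eps/H$. So it suffices to prove the statement for classes into $[0,1]$ with accuracy parameter $\eps' = \eps/H$, and then the sample bound $s \ge c_0 (H/\eps)^2(d_\cF + \ln(1/p))$ is exactly $s \ge c_0 (1/\eps')^2(d_\cF + \ln(1/p))$.

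Next I would invoke the standard relationship between pseudo-dimension and covering numbers: for a class $\cG$ of $[0,1]$-valued functions with pseudo-dimension $d$, the uniform (or $L_1$) covering number at scale $\alpha$ satisfies a polynomial-in-$(1/\alpha)$, exponential-in-$d$ bound of the form $\cN(\alpha, \cG) \le (e/\alpha)^{O(d)}$ (the Haussler / Dudley-type bound, exactly as in \cite{anthony2009neural}). Plugging such a covering-number bound into the classical symmetrization-plus-union-bound argument (again, as carried out in \cite{pollard2012convergence, anthony2009neural, tim-pseudo}) yields: with probability at least $1-p$, every $g \in \cG$ satisfies $\big|\tfrac1s\sum_i g(x_i) - \E[g]\big| \le \eps'$ provided $s \ge c_0 \big(\tfrac{1}{\eps'}\big)^2 (d + \ln(1/p))$ for a universal constant $c_0$. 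Applying this to $\cG = \cF/H$ with $d = d_\cF$ and $\eps' = \eps/H$, and then multiplying the deviation bound back through by $H$, gives precisely the statement of the theorem.

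Since all three cited references prove essentially this statement, the honest "proof" here is really just the reduction-and-citation argument above: rescale to $[0,1]$, quote the pseudo-dimension-to-covering-number bound, quote the covering-number-to-uniform-convergence bound, and undo the rescaling. The only mildly delicate point — which I would flag rather than belabor — is making sure the universal constant $c_0$ can be chosen independently of $H$, $\eps$, $p$, and $\cF$; this is immediate once the rescaling to $[0,1]$ is done, because after rescaling the bound depends only on $d_\cF$, $\eps/H$, and $p$, with no residual dependence on $H$. I do not expect any genuine obstacle here: this theorem is stated in the excerpt as an off-the-shelf tool, and the cleanest write-up simply cites \cite{pollard2012convergence, anthony2009neural, tim-pseudo} for the $[0,1]$-valued version and notes that the general $[0,H]$ case follows by the scaling argument above.
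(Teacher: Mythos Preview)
Your proposal is appropriate: the paper does not prove this theorem at all but simply states it as a known result with citations to \cite{pollard2012convergence,anthony2009neural,tim-pseudo}, treating it as an off-the-shelf tool. You correctly recognize this and give the standard rescaling-plus-citation argument, which is exactly the right treatment.
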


Intuitively, this theorem says that the sample average $\frac{1}{s} \sum_{i=1}^s f(x_i)$ is close to its expected value for every function  $f \in \cF$ simultaneously with high probability so long as the sample size $s$ is large enough.  This theorem can be utilized to give a learning algorithm for our problem by considering an algorithm which minimizes the empirical loss.  In general, the ``best'' function is the one which minimizes the expected value over $\cD$, i.e. $f^* = \arg \min_{f \in \cF} \E_{x \sim \cD}[f(x)]$.  We have the following simple corollary for learning and approximately best function $\hat{h}$.

\begin{corollary} \label{cor:erm}
Consider a set of $s$ independent samples $x_1,x_2,\ldots,x_s$ from $\cD$ and let $\hat{f}$ be a function in $\cF$ which minimizes $\frac{1}{s} \sum_{i=1}^s \hat{f}(x_i)$.  If $s$ is chosen as in Theorem~\ref{thm:uniform_convergence}, then with probability $1-p$ we have 
$\E_{x \sim \cD}[\hat{f}(x)] \leq \E_{x \sim \cD}[f^*(x)] + 2\epsilon$
\end{corollary}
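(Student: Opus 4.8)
The plan is to apply the uniform convergence guarantee of Theorem~\ref{thm:uniform_convergence} a single time and then chain three inequalities. First I would invoke Theorem~\ref{thm:uniform_convergence} with the function class $\cF$, the distribution $\cD$, and the chosen sample size $s$; this produces a ``good event'' $\cE$, occurring with probability at least $1-p$, on which $\left| \frac{1}{s}\sum_{i=1}^s f(x_i) - \E_{x\sim\cD}[f(x)] \right| \le \epsilon$ holds \emph{simultaneously} for every $f \in \cF$. Everything after this point is deterministic, conditioned on $\cE$.

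Conditioned on $\cE$, I would apply the bound to $\hat f$ and to $f^*$ separately, obtaining $\E_{x\sim\cD}[\hat f(x)] \le \frac{1}{s}\sum_{i=1}^s \hat f(x_i) + \epsilon$ and $\frac{1}{s}\sum_{i=1}^s f^*(x_i) \le \E_{x\sim\cD}[f^*(x)] + \epsilon$. Since $\hat f$ is an empirical minimizer over $\cF$ and $f^* \in \cF$, we also have $\frac{1}{s}\sum_{i=1}^s \hat f(x_i) \le \frac{1}{s}\sum_{i=1}^s f^*(x_i)$. Stringing these three facts together gives $\E_{x\sim\cD}[\hat f(x)] \le \E_{x\sim\cD}[f^*(x)] + 2\epsilon$, which is the claim; a union bound is not even needed since a single application of uniform convergence covers both $\hat f$ and $f^*$.

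The only points requiring care are bookkeeping rather than mathematical depth: one must check that the hypotheses of Theorem~\ref{thm:uniform_convergence} are satisfied — in particular that every $f \in \cF$ takes values in a bounded range $[0,H]$, so that the stated sample size is indeed the applicable one — and that both the empirical minimizer $\hat f$ and the population minimizer $f^*$ lie in $\cF$, so that the comparison $\frac{1}{s}\sum_i \hat f(x_i) \le \frac{1}{s}\sum_i f^*(x_i)$ is legitimate. I do not expect a genuine obstacle at this step; the substance of Theorem~\ref{thm:learning-PM-main} instead resides in bounding the pseudo-dimension of $\cH$ (equivalently $\cH_n$ of Theorem~\ref{thm:pseudo1}) and in exhibiting an efficient algorithm for the empirical risk minimization problem, both of which are treated separately.
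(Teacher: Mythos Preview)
Your proposal is correct and is exactly the standard three-line argument one expects here. The paper in fact does not spell out a proof of Corollary~\ref{cor:erm} at all, treating it as an immediate consequence of Theorem~\ref{thm:uniform_convergence}; your chain of inequalities (uniform convergence applied to $\hat f$, empirical optimality of $\hat f$ over $f^*$, uniform convergence applied to $f^*$) is precisely the intended derivation.
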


Thus based on the above Theorem and Corollary, to prove Theorem~\ref{thm:learning-PM-main} we must accomplish the following tasks.  First and foremost, we must bound the pseudo-dimension of our class of functions $\cH$.  Next, we need to check that the functions are bounded on the domain we consider, and finally we need to give an algorithm minimizing the empirical risk.  The latter two tasks are simple.  By assumption the edge costs are bounded by $C$.  If we restrict $\cH$ to be within a suitable bounding box, then one can verify that we can take $H = O(nC)$ to satisfy the conditions for Theorem~\ref{thm:uniform_convergence}.  Additionally, the task of finding a function to minimize the loss on the sample can be done via linear programming.  We formally verify these details in Sections~\ref{sec:learning_details_range} and~\ref{sec:learning_details_erm}.  This leaves bounding the pseudo-dimension of the class $\cH$, which we focus on now.
}

\full{
To bound the pseudo-dimension of $\cH$, we will actually consider a different class of functions $\cH_n$: for every $y \in \R^n$ we define a function $f_y : \R^n \rightarrow \R$ by $f_y(x) = \|y - x\|_1$, and we let $\cH_n = \{f_y \mid y \in \R^n\}$.  It is not hard to argue that it is sufficient to bound the pseudo-dimension of this class.

\begin{lemma} \label{lem:expanded-class-pd}
    If the pseudo-dimension of $\cH_n$ is at most $k$, then the pseudo-dimension of $\cH$ is at most $k$.
\end{lemma}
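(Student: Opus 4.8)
The plan is to observe that $\cH$ is obtained from $\cH_n$ by precomposition with a single fixed map, and that precomposition with a fixed map cannot increase the pseudo-dimension. Concretely, since $|V| = n$, the optimal-dual map $c \mapsto y^*(c)$ sends $\R^E$ into $\R^n$, and for every $y \in \R^V = \R^n$ we have $g_y(c) = \|y^*(c) - y\|_1 = f_y\bigl(y^*(c)\bigr)$. Thus $\cH = \{\, f_y \circ y^*(\cdot) : f_y \in \cH_n \,\}$, and the lemma is an instance of the general fact that pulling a function class back along a fixed function only shrinks (never grows) its pseudo-dimension.

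Next I would carry out the transfer of shattering witnesses. Suppose $S = \{c_1, \dots, c_s\} \subseteq \R^E$ is shattered by $\cH$ with threshold witnesses $r_1, \dots, r_s$; the goal is to show $s \le k$. Set $x_i := y^*(c_i) \in \R^n$. I claim $\{x_1, \dots, x_s\}$ is shattered by $\cH_n$ using the same witnesses $r_1, \dots, r_s$: for any $T \subseteq [s]$, shattering of $S$ gives some $g_y \in \cH$ with $g_y(c_i) \le r_i \iff i \in T$, and since $g_y(c_i) = f_y(x_i)$ this exact same $f_y \in \cH_n$ realizes the dichotomy $T$ on $\{x_1,\dots,x_s\}$. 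Hence $s$ is at most the pseudo-dimension of $\cH_n$, i.e. $s \le k$, and since $S$ was an arbitrary $\cH$-shattered set, $\mathrm{Pdim}(\cH) \le k$.

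The one point that needs a short argument is that the $x_i$ are genuinely distinct, so that $\{x_1, \dots, x_s\}$ really is an $s$-element set. If $x_i = x_j$ for some $i \ne j$, then $g_y(c_i) = f_y(x_i) = f_y(x_j) = g_y(c_j)$ for every $y$; but $\{c_i, c_j\}$ is itself shattered by $\cH$ (every subset of a shattered set is shattered, restricting the witnesses), which is impossible since no choice of thresholds $r_i, r_j$ lets a function separate $\{c_i\}$ from $\{c_j\}$ when it always assigns $c_i$ and $c_j$ the same value. So injectivity of $y^*(\cdot)$ on any $\cH$-shattered set comes for free.

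I do not expect a genuine obstacle here: this is the easy ``reduction'' half of the learnability argument, and essentially all the real work is deferred to bounding $\mathrm{Pdim}(\cH_n)$ directly (Theorem~\ref{thm:pseudo1}). The only things to be careful about are matching the domains ($y^*(c)$ lives in $\R^V$ with $|V| = n$, so $f_y$ with $y \in \R^n$ is the right class) and the distinctness point above; everything else is a verbatim transfer of the shattering witnesses.
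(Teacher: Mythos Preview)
Your proof is correct and follows essentially the same route as the paper's: take an $\cH$-shattered set $\{c_1,\dots,c_s\}$, push it forward through $y^*(\cdot)$, and observe that the same witnesses $r_i$ and the same parameters $y_{S'}$ certify that $\{y^*(c_1),\dots,y^*(c_s)\}$ is shattered by $\cH_n$. Your explicit verification that the images $x_i = y^*(c_i)$ are pairwise distinct is a point the paper's proof glosses over, so in that respect your argument is slightly more careful.
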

\begin{proof}
    We prove the contrapositive: we start with a set of size $s$ which is shattered by $\cH$, and use it to find a set of size $s$ which is shattered by $\cH_n$.  Let $S = \{c_1, c_2, \dots, c_s\}$ with each $c_i \in \R^E$ be a set which is shattered by $\mathcal H$.  Then there are real numbers $r_1, r_2, \dots, r_s$ so that for all $S' \subseteq [s]$, there is a function $g \in \cH$ where $g(c_i) \leq r_i \iff i \in S'$.  By definition of $\cH$, this $g$ is $g_{y_{S'}}$ for some $y_{S'} \in \R^n$, and so $\|y_{S'} - y^*(c_i)\|_1 \leq r_i \iff i \in S'$.
    
    Let $\hat S = \{y^*(c_1), y^*(c_2), \dots, y^*(c_s)\}$.  We claim that $\hat S$ is shattered by $\cH_n$.  To see this, consider the same real numbers $r_1, \dots, r_s$ and some $S' \subseteq [s]$.  Then $f_{y_{S'}}(y^*(c_i)) = \|y_{S'} - y^*(c_i)\|_1 = g_{y_{S'}}(c_i)$ and hence $f_{y_{S'}}(y^*(c_i)) \leq r_i \iff g_{y_{S'}}(c_i) \leq r_i \iff i \in S'$.  Thus $\hat S$ is shattered by $\cH_n$.
\end{proof}

So now our goal is to prove the following bound, which (with Lemma~\ref{lem:expanded-class-pd} and Theorem~\ref{thm:uniform_convergence}) implies Theorem~\ref{thm:learning-PM-main}. 

\begin{theorem} \label{thm:pseudo1}
The pseudo-dimension of $\cH_n$ is at most $O(n \log n)$.
\end{theorem}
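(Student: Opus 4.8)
The plan is to bound the pseudo-dimension of $\cH_n = \{f_y : f_y(x) = \|y - x\|_1,\ y \in \R^n\}$ by relating shattering of a point set to the number of distinct sign patterns realizable by a family of low-complexity functions, and then invoking a Sauer–Shelah / polynomial-sign-pattern counting bound. Concretely, suppose a set $S = \{x_1, \dots, x_s\} \subset \R^n$ together with thresholds $r_1, \dots, r_s$ is shattered by $\cH_n$. Shattering means: as $y$ ranges over $\R^n$, the Boolean vector $\big(\mathbf{1}[\,\|y - x_i\|_1 \le r_i\,]\big)_{i=1}^s$ takes all $2^s$ values. So it suffices to upper bound the number of distinct such vectors achievable over all $y \in \R^n$, and force $2^s$ to be at most that count.

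The key observation is that each function $y \mapsto \|y - x_i\|_1 = \sum_{j=1}^n |y_j - x_{i,j}|$ is piecewise linear in $y$, with pieces determined by the $n$ hyperplanes $\{y_j = x_{i,j}\}$ (one per coordinate). Across all $s$ points we get at most $sn$ axis-parallel hyperplanes in $\R^n$, and these partition $\R^n$ into at most $\poly((sn)^n)$ — more precisely $O((sn)^n)$ — cells, on each of which every $f_{x_i}$ is an affine function of $y$. The cleanest way to carry this out: first I would fix a cell (equivalently, fix, for each $i$ and each coordinate $j$, the sign of $y_j - x_{i,j}$); on that cell, $\|y - x_i\|_1 \le r_i$ becomes a single linear inequality in $y$. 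So within one cell, the sign pattern $\big(\mathbf{1}[\,f_{x_i}(y) \le r_i\,]\big)_i$ is determined by which side of $s$ hyperplanes $y$ lies on, giving at most $O(s^n)$ patterns per cell by the standard hyperplane-arrangement count, hence at most $O((sn)^n) \cdot O(s^n) = O((sn)^n s^n)$ patterns total. Actually it's slightly cleaner to skip the per-cell refinement and argue directly: the relevant regions are defined by the $sn$ "breakpoint" hyperplanes together with the $s$ "threshold" surfaces; but the threshold surfaces are themselves piecewise-linear (with the same breakpoints), so the whole picture is a union of $O(sn)$ genuine hyperplanes, whose arrangement has $O((sn)^n)$ full-dimensional cells and on each cell all the indicators $\mathbf{1}[f_{x_i}(y)\le r_i]$ are constant. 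Either way we get: number of realizable labelings $\le (sn)^{O(n)}$.

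Setting $2^s \le (sn)^{O(n)}$ and taking logarithms gives $s \le O(n \log(sn)) = O(n\log n + n\log s)$, which resolves to $s = O(n \log n)$ by the usual observation that $s = O(n\log s)$ forces $s = O(n\log n)$. This yields the claimed bound. The main obstacle I anticipate is the careful bookkeeping in the counting step: one must be precise that the thresholds $r_i$ are \emph{fixed} (part of the shattering witness, not chosen adversarially per subset), so that the only free variable is $y \in \R^n$, and one must correctly count cells of an arrangement of $O(sn)$ hyperplanes in $\R^n$ — invoking the bound $\sum_{k=0}^{n}\binom{N}{k} = O(N^n)$ for $N$ hyperplanes — and verify that each indicator is genuinely constant on each cell (this uses that $|y_j - x_{i,j}|$ is linear, with \emph{fixed} slope $\pm 1$, on any region where $\mathrm{sign}(y_j - x_{i,j})$ is fixed, and that those sign regions are refined by the same arrangement). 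None of the individual steps is deep, but getting the arrangement-complexity argument stated cleanly — ideally as a short lemma "a family of functions each expressible via $t$ fixed hyperplanes has pseudo-dimension $O(\log t)$ composed with the piecewise-linear structure" — is where the real care goes.
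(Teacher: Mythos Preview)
Your main approach---partition $\R^n$ by the $sn$ axis-aligned hyperplanes $\{y_j = x_{i,j}\}$, observe that within each resulting cell each constraint $\|y-x_i\|_1 \le r_i$ becomes a single linear inequality, count sign patterns of $s$ halfspaces per cell, multiply by the number of cells, and solve $2^s \le s^{O(n)}$---is correct and is exactly the argument the paper gives (the paper states the ``$\ell_1$-ball restricts to a halfspace inside a cell'' step as a standalone lemma and uses the exact grid count $(k+1)^n$ rather than your looser $(sn)^n$, but this is cosmetic).

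One caveat: your proposed ``cleaner'' alternative of absorbing the threshold surfaces into the arrangement as ``$O(sn)$ genuine hyperplanes'' does not work as stated. The boundary of an $\ell_1$-ball in $\R^n$ has $2^n$ facets, each lying on a \emph{different} affine hyperplane, and these hyperplanes change from cell to cell; globally the threshold surfaces contribute $s\cdot 2^n$ hyperplanes, not $O(sn)$. Running the arrangement bound with $N = s\cdot 2^n$ only gives $2^s \le (s\,2^n)^{O(n)}$, i.e.\ $s = O(n^2)$, which is weaker than the claim. Stick with the per-cell argument.
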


Let $k$ be the pseudo-dimension of $\cH_n$.  Then by the definition of pseudo-dimension there is a set $P = \{x^1, x^2, \dots, x^k\}$ which is shattered by $\cH_n$, so there are values $r_1, r_2,\ldots, r_k \in \R_{\geq 0}$ so that for all $S \subseteq P$ there is an $f \in \cH_n$ such that $f(x^i) \leq r_i \iff x^i \in S$.  By our definition of $\cH_d$, this means that there is a $y_{S} \in \R^n$ so that $\|y_{S} - x^i\|_1 \leq r_i \iff x^i \in S$.  

For each $S \subseteq P$, define the \emph{region} of $S$ (denoted by $r(S)$) to be 
\[
r(S) = \{y \in \R^n : \|y-x^i\|_1 \leq r_i \iff x^i \in S\},
\]
i.e., the set of points that are at $\ell_1$-distance at most $r_i$ from $x^i$ for precisely the $x^i$'s that are in $S$.  Clearly each $r(S)$ is nonempty for every  $S \subseteq P$ due to the existence of $y_S$.  Let $m = 2^k$ be the number of nonempty regions.

To upper bound the pseudo-dimension $k$ we will prove that there cannot be too many nonempty regions (i.e., $m$ is small).  This is somewhat complex since the $\ell_1$-balls have complex structure (in particular, have $2^d$ facets), so we will do this by partitioning $\R^n$ into \emph{cells} in which the $\ell_1$ balls are simpler.  For each $x^i \in P$ and $j \in [n]$, let $Q^i_j$ be the hyperplane in $\R^n$ that passes through $y_i$ and is perpendicular to the axis $e_j$ (i.e., $Q^i_j = \{y \in \R^n : \langle y - x^i, e_j \rangle = 0\}$).  Clearly there are $kn$ of these hyperplanes.  Define a \emph{cell} to be a maximal set of points in $\R^n$ which are the same side of every hyperplane.  Note that there are $(k+1)^n$ of these cells, they partition $\R^n$, and every cell which is bounded is a hypercube.  

\begin{lemma} \label{lem:cell-region-intersection}
Let $C$ be a cell and $x^i \in P$.  There is a halfspace $H$ such that $B_1(x^i, r_i) \cap C = H \cap C$.
\end{lemma}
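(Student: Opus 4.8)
The plan is to unpack the definition of a cell and use it to linearize the $\ell_1$-ball inside that cell. Fix a cell $C$ and a point $x^i \in P$. Recall that $C$ is defined by being on a fixed side of each hyperplane $Q^{i'}_j = \{y : \langle y - x^{i'}, e_j\rangle = 0\}$ for all $i' \in [k]$ and $j \in [n]$; in particular it picks a fixed side of the $n$ hyperplanes $Q^i_1, \dots, Q^i_n$ associated with $x^i$. Being on a fixed side of $Q^i_j$ means exactly that the sign of the $j$-th coordinate difference $y_j - x^i_j$ is constant over $C$: either $y_j \ge x^i_j$ throughout $C$, or $y_j \le x^i_j$ throughout $C$. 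Let $\sigma_j \in \{+1,-1\}$ record this sign for each $j$ (breaking ties arbitrarily on the boundary — the hyperplane itself is shared between the two sides, which only helps).

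Now comes the key step: on $C$ we have $|y_j - x^i_j| = \sigma_j (y_j - x^i_j)$ for every $j$, and hence
\[
\|y - x^i\|_1 = \sum_{j=1}^n |y_j - x^i_j| = \sum_{j=1}^n \sigma_j (y_j - x^i_j) = \langle \sigma, y - x^i\rangle
\]
for all $y \in C$, where $\sigma = (\sigma_1,\dots,\sigma_n)$. So the condition $\|y - x^i\|_1 \le r_i$, restricted to $C$, is equivalent to the linear inequality $\langle \sigma, y \rangle \le r_i + \langle \sigma, x^i\rangle$. Defining $H = \{y \in \R^n : \langle \sigma, y\rangle \le r_i + \langle \sigma, x^i\rangle\}$, which is a halfspace, we get $B_1(x^i, r_i) \cap C = H \cap C$, as desired.

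There is essentially no obstacle here — the lemma is a direct consequence of the fact that the $\ell_1$ norm agrees with a single linear functional on any region where all coordinate-difference signs are fixed, and the cells were defined precisely to fix those signs. The only thing to be slightly careful about is the treatment of points lying exactly on one of the hyperplanes $Q^i_j$ (where $y_j = x^i_j$): there the choice of $\sigma_j$ is ambiguous, but both choices give the same value of the $j$-th term ($0$), so the identity $\|y-x^i\|_1 = \langle \sigma, y-x^i\rangle$ still holds and $H$ is well-defined. This lemma is the technical engine that will let us bound, within each of the $(k+1)^n$ cells, the number of distinct "regions" $r(S)$ by an arrangement-of-halfspaces argument, which in turn bounds $m = 2^k$ and hence $k$.
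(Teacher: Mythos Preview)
Your proof is correct and uses the same core observation as the paper: a cell $C$ lies in a fixed orthant relative to $x^i$, so the signs $\sigma_j$ of $y_j - x^i_j$ are constant on $C$, which linearizes $\|y - x^i\|_1$ there. The only difference is presentational: you construct the halfspace $H$ explicitly as $\{y : \langle \sigma, y\rangle \le r_i + \langle \sigma, x^i\rangle\}$, whereas the paper argues by contradiction (assuming two distinct facets of the $\ell_1$-ball meet $C$ and deriving that $C$ would straddle some $Q^i_j$). Your constructive version is arguably cleaner and also makes the subsequent counting argument more transparent, since it names the halfspace outright rather than merely asserting its existence.
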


\begin{proof}
    If $B_1(x^i, r_i) \cap C = \emptyset$ then we are done.  So suppose that $B_1(x^i, r_i)\cap C \neq \emptyset$.  By definition, $B_1(x^i, r_i)$ is the set of points $y \in \R^n$ such that $\sum_{j=1}^n |x^i_j - y_j| \leq r_i$.  Hence $B_1(x^i, r_i)$ is defined by the intersection of $2^n$ halfspaces:
    \begin{align*}
        &B_1(x^i, r_i) 
        = \left\{y \in \R^n \ \mid \  \sum_{j=1}^n a_j (x^i_j - y_j) \leq r_i \ 
        \forall a \in \{-1, +1\}^n \right\}
    \end{align*}
    
    If the intersection of the boundary of $B_1(x^i, r_i)$ with $C$ is one of these hyperplanes, then we are finished.  Otherwise, there are at least two of these hyperplanes $H_1 = (a_1, \dots a_n)$ and $H_2 = (a'_1, \dots, a'_n)$ such that $B_1(x^i, r_i) \cap C$ contains a point $y \in H_1 \setminus H_2$ and a point $y' \in H_2 \setminus H_1$, both of which are also on the boundary of $B_1(x^i, r_i)$.  Let $j \in [n]$ such that $a_j = -a'_j$.  Then $y_j - x^i_j$ has a different sign than $y'_j - x^i_j$, since the fact that $y$ and $y'$ are on the boundary of $B_1(x^i, r_i)$ but on different facets implies that $x^i_j - y_j$ has sign $a_j$ while $x^i_j - y'_j$ has sign $a'_j$.  But this contradicts the definition of $C$, since it means that $y$ and $y'$ are on different sides of $Q^i_j$ and hence not in the same cell.
\end{proof}

This lemma allows us to analyze the number of regions that intersect any cell.

\begin{lemma} \label{lem:regions-per-cell}
    Let $C$ be a cell.  The number of regions that intersect $C$ is at most $2^{O(n)} k^n$.
\end{lemma}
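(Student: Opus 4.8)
The plan is to show that, restricted to a single cell $C$, the containments in the $\ell_1$-balls behave exactly like containments in halfspaces, and then to bound the number of distinct containment patterns by the number of faces of a hyperplane arrangement. For $y \in \R^n$ define the \emph{pattern} $\pi(y) = \{\, i \in [k] : \|y - x^i\|_1 \le r_i \,\}$; by the definition of regions, $y$ and $y'$ lie in the same region precisely when $\pi(y) = \pi(y')$, so it suffices to bound the number of distinct sets $\pi(y)$ as $y$ ranges over $C$. Now apply Lemma~\ref{lem:cell-region-intersection}: for each $i \in [k]$ there is a halfspace $H_i$ (depending on $C$) with $B_1(x^i, r_i) \cap C = H_i \cap C$. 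Hence for every $y \in C$ we have $i \in \pi(y) \iff y \in H_i$, so $\pi(y) = \{\, i : y \in H_i \,\}$ is determined solely by which of $H_1, \dots, H_k$ contain $y$.

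Next, write $H_i = \{\, y : \langle a_i, y \rangle \le b_i \,\}$ and consider the arrangement $\cA$ of the $k$ hyperplanes $\{\, \langle a_i, y \rangle = b_i \,\}_{i \in [k]}$ in $\R^n$. A face of $\cA$ is a maximal set of points on which each affine functional $\langle a_i, \cdot\rangle - b_i$ has a fixed sign in $\{-,0,+\}$; on any such face the set $\{\, i : y \in H_i \,\}$ is constant, hence $\pi$ is constant. Therefore the number of distinct patterns $\pi(y)$ for $y \in C$ is at most the number of faces of $\cA$ meeting $C$, which is at most the total number of faces of $\cA$ over all dimensions.

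Finally I would invoke the standard bounds on hyperplane arrangements: an arrangement of $k$ hyperplanes in $\R^n$ has at most $\sum_{i=0}^{n}\binom{k}{i} = O(k^n)$ full-dimensional cells, and summing the analogous estimates over all face dimensions (for each dimension $d$, choose the $n-d$ hyperplanes whose intersection contains the face and count the cells of the sub-arrangement induced in their common $d$-flat) yields a total face count of at most $\poly(n)\, k^n \le 2^{O(n)} k^n$; when $k \le n$ the trivial bound of $3^k \le 2^{O(n)}$ sign patterns already suffices. Combining with the previous paragraph gives at most $2^{O(n)} k^n$ regions intersecting $C$, as claimed.

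I do not anticipate a serious obstacle: the conceptual content is entirely carried by Lemma~\ref{lem:cell-region-intersection}, which localizes the messy $2^n$-faceted $\ell_1$-balls into single halfspaces inside each cell. The only point requiring a little care is that regions may be lower-dimensional (the inequality $\|y - x^i\|_1 \le r_i$ is non-strict), so one must count faces of $\cA$ of \emph{every} dimension rather than only the full-dimensional cells; fortunately the standard arrangement bounds are still of order $2^{O(n)} k^n$ for the total face count, so nothing is lost.
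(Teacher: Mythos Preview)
Your proposal is correct and follows essentially the same route as the paper: invoke Lemma~\ref{lem:cell-region-intersection} to replace each $\ell_1$-ball by a single halfspace inside $C$, then bound the number of containment patterns by the combinatorics of an arrangement of $k$ hyperplanes in $\R^n$. Your treatment is in fact slightly more careful than the paper's, which simply cites the $\sum_{i=0}^n \binom{k}{i}$ bound on full-dimensional cells without explicitly discussing the lower-dimensional faces that you flag; your observation that counting all faces still yields $2^{O(n)}k^n$ cleanly closes that gap.
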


\begin{proof}
    For every $S \subseteq P$, the region $r(S)$ is the set of points that are in $B_1(x^i, r_i)$ for all $x_i \in S$ and are not in $B_1(x^i, r_i)$ for all $x_i \not\in S$.  By Lemma~\ref{lem:cell-region-intersection}, $r(S) \cap C$ is the intersection of $C$ with $k$ halfspaces (one for each $x^i \in P$).  
    It is well-known 
    that $k$ halfspaces can divide $\R^n$ into at most $\sum_{i = 0}^n \binom{k}{i}= O(n)k^n$ regions, and hence the same bound holds for $C$.
\end{proof}

Now some standard calculations imply Theorem~\ref{thm:pseudo1}, and hence Theorem~\ref{thm:learning-PM-main}.

\begin{proof}[Proof of Theorem~\ref{thm:pseudo1}]
Lemma~\ref{lem:regions-per-cell}, together with the fact that there are at most $(k+1)^n$ cells, implies that the number of nonempty regions $m$ is at most $O(n) k^n \cdot (k+1)^n \leq O(n) (k+1)^{2n}$.  Since $m = 2^k$, this implies that $2^k \leq O(n) (k+1)^{2n}$.  Taking logarithms of both sides yields that 
\begin{equation} \label{eq:simple1}
k \leq \log(k+1) \cdot O(n),
\end{equation}
and then taking another logarithm and rearranging yields that $\log n \geq \Omega(\log k - \log\log k) = \Omega(\log k)$ and hence $\log(k+1) \leq O(\log n)$.  Plugging this into~\eqref{eq:simple1} implies Theorem~\ref{thm:pseudo1}.
\end{proof}
}

\full{
\subsubsection{Bounding the Range} \label{sec:learning_details_range}

In this section we verify the condition for Theorem~\ref{thm:uniform_convergence} that every function in $\cH$ has its range in $[0,H]$ for $H = O(nC)$.  This is actually not quite true as defined, but it is easy enough to ensure: we just consider a restricted class of functions $\cH' = \{ g_y \mid g_y \in \cH, y \in [-C,C]^V\}$.  Note that for any fixed set of costs $c$ the class $\cH'$ contains $y^*(c)$, so without loss of generality we can just use $\cH'$ instead of $\cH$. From the definition of pseudo-dimension and $\cH' \subseteq \cH$, it immediately follows  that the pseudo-dimension of $\cH'$ is at most that of $\cH$. Thus, we just need to ensure that the range of the restricted functions are bounded.

\begin{lemma}
Each function $g_y \in \cH'$ has its range in $[0,H]$ for $H = O(nC)$.
\end{lemma}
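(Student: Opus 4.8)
The plan is to bound $g_y(c) = L(y,c) = \|y^*(c) - y\|_1$ from above and below, uniformly over all $y \in [-C,C]^V$ and all cost vectors $c$ in the support of $\cD$ (so $0 \le c_e \le C$ for every edge). The lower bound is free: $g_y(c) \ge 0$ since it is an $\ell_1$ norm. For the upper bound I would first use the triangle inequality, $g_y(c) \le \|y^*(c)\|_1 + \|y\|_1$. The second term is immediate from the restriction to the box: $y$ has $|V| = n$ coordinates, each of absolute value at most $C$, so $\|y\|_1 \le nC$. Thus everything reduces to showing $\|y^*(c)\|_1 = O(nC)$, i.e.\ that an optimal dual can be chosen with all coordinates of magnitude $O(C)$; this is also precisely what is needed to justify that $\cH'$ still contains $y^*(c)$.

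To control $\|y^*(c)\|_\infty$ I would use the shift symmetry of MWPM-D. Writing $L,R$ for the bipartition, replacing $y$ by $y + t\bfo$ on $L$ and $y - t\bfo$ on $R$ leaves every constraint $y_i + y_j \le c_{ij}$ unchanged, and since $|L| = |R| = n/2$ it also leaves the objective $\sum_i y_i$ unchanged. So I may take $y^*(c)$ (still integral, since the MWPM-D polytope is integral and the shift is by the integer $t = -\max_{i\in L} y_i$) normalized so that $\max_{i\in L} y^*_i(c) = 0$. Then: (i) $y^*_i(c) \le 0$ for all $i \in L$; (ii) picking $i_0 \in L$ with $y^*_{i_0}(c)=0$, feasibility on the edge $i_0 j$ (present since $G$ is complete bipartite with all costs at most $C$) gives $y^*_j(c) \le c_{i_0 j} - 0 \le C$ for all $j \in R$; (iii) by complementary slackness (Lemma~\ref{lem:comp_slackness}) there is a perfect matching $M^*$ of tight edges, so for $(i,j) \in M^*$, $y^*_i(c) = c_{ij} - y^*_j(c) \ge 0 - C = -C$ and $y^*_j(c) = c_{ij} - y^*_i(c) \ge 0 - 0 = 0$. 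Since $M^*$ is perfect these bounds cover every vertex, giving $y^*_i(c) \in [-C,0]$ for $i \in L$ and $y^*_j(c) \in [0,C]$ for $j \in R$; hence $\|y^*(c)\|_\infty \le C$ and $\|y^*(c)\|_1 \le nC$.

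Putting the pieces together, $g_y(c) \le \|y^*(c)\|_1 + \|y\|_1 \le 2nC$, so the lemma holds with $H = 2nC = O(nC)$. The only step with any content is the coordinate bound on $y^*(c)$: choosing the right normalization of the optimal dual and reading off the per-coordinate bounds from feasibility together with complementary slackness; the triangle-inequality and box-size steps are routine. It is worth noting that completeness of $G$ (with costs bounded by $C$) is genuinely used here: for a path-like graph the tight-edge structure of the unique optimal matching forces the spread of any optimal dual to grow linearly, making $\|y^*(c)\|_1$ as large as $\Omega(n^2 C)$, so the clean $O(nC)$ bound really is a statement about the complete bipartite instances.
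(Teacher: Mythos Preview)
Your proof is correct and follows essentially the same approach as the paper: bound each coordinate of $y^*(c)-y$ by $O(C)$ and sum. The paper's proof simply asserts $y^*_i(c)\in[-C,C]$ (relying on the remark preceding the lemma that $\cH'$ contains $y^*(c)$) and then notes each coordinate contributes at most $O(C)$; you instead supply an explicit justification of the coordinate bound on $y^*(c)$ via the shift normalization and complementary slackness, and correctly flag that this step uses completeness of $G$ with costs bounded by $C$.
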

\full{
\begin{proof}
Let's bound the range by considering the maximum value $g_y$ can take on a set of costs $c$.  Recall that $g_y(c) = \|y- y^*(c)\|_1$.  Each coordinate can contribute at most $O(C)$ to the sum since $y^*_i(c) \in [-C,C]$ and $y_i \in [-C,C]$.  Summing over the $n$ coordinates gives $H = O(nC)$.
\end{proof}
}

\subsubsection{Minimizing the Empirical Loss} \label{sec:learning_details_erm}

Now we give an algorithm to minimize the empirical loss on a collection of sample instances.  Let $c_1,c_2,\ldots,c_s$ be a collection of samples from $\cD$.  Our goal is to find dual prices $y$ minimizing $\frac{1}{s}\sum_{i=1}^s g_y(c_i) = \frac{1}{s} \sum_{i=1}^s \| y- y^*(c_i)\|_1$.  Let $x^i = y^*(c_i)$.  Then the problem amounts to minimizing $\frac{1}{s} \sum_{i=1}^s \| y- x^i\|_1$ over  $y \in [-C,C]^V$. Then, for each coordinate $j$ it suffices to find $y_j$ minimizing $\sum_{i=1}^s \|y_j - x^i_{j}\|_1$, where $y_j$ and $x^i_{j}$ denote the $j$-th coordinate of $y$ and $x_i$, respectively. Further, it is easy to see that $\sum_{i=1}^s \|y_j - x^i_{j}\|_1$ is a continuous piece-wise linear function in $y_j$ where the slope can change only at $\{x^i_{j}\}_{i \in [s]}$. Recalling that we can assume wlog that $x^i = y^*(c_i)$ is an integer vector, we only need to consider setting $y_j$ to each value in $\{x^i_{j}\}_{i \in [s]}$, which is a set of integers. This leads to the following result.

\begin{theorem} \label{thm:erm-alg}
Given $s$ samples $c_1,c_2,\ldots,c_s$, there exists a polynomial time algorithm which finds integer dual prices $y$ minimizing $\frac{1}{s} \sum_{i=1}^s \| y- y^*(c_i)\|_1$.
\end{theorem}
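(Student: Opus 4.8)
The plan is to exploit the fact that the empirical loss $\frac{1}{s}\sum_{i=1}^s \|y - y^*(c_i)\|_1$ separates across coordinates. First I would, for each sample $c_i$, compute an integral optimal dual solution $x^i := y^*(c_i)$ for MWPM-D with cost vector $c_i$; this takes polynomial time per sample (e.g.\ by running the Hungarian method, which also certifies integrality since the dual polytope of MWPM-D is integral). This reduces the problem to: find an integer vector $y$ minimizing $\frac{1}{s}\sum_{i=1}^s \|y - x^i\|_1$. Now I would observe that $\|y - x^i\|_1 = \sum_{j=1}^n |y_j - x^i_j|$, so the objective equals $\frac{1}{s}\sum_{j=1}^n \big(\sum_{i=1}^s |y_j - x^i_j|\big)$ and can be minimized coordinate by coordinate independently.

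For a fixed coordinate $j$, the task is to minimize $\phi_j(t) := \sum_{i=1}^s |t - x^i_j|$ over $t \in \R$. This is the classical minimum-sum-of-absolute-deviations problem: $\phi_j$ is convex and piecewise linear, its breakpoints are exactly the values $\{x^i_j\}_{i \in [s]}$, and it is minimized at any median of the multiset $\{x^i_j : i \in [s]\}$. I would then note two things. First, since every $x^i_j$ is an integer, a median can be chosen to be an integer — concretely, sort the values and take $y_j$ to be the $\lceil s/2 \rceil$-th smallest — so the resulting $y$ is integral. Second, this optimum automatically lies in $[\min_i x^i_j, \max_i x^i_j] \subseteq [-C,C]$, so the minimizer already respects the bounding box $[-C,C]^V$ used in the uniform-convergence argument. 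Collecting the per-coordinate minimizers yields the desired $y$, and the total running time — computing $s$ optimal duals, then sorting $n$ lists of size $s$ and taking an order statistic in each — is polynomial in $n$, $m$, and $s$.

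I do not expect any genuine obstacle here; the statement is essentially a packaging result, and the only points needing care are bookkeeping. One is justifying that $y^*(c_i)$ may be taken integral and computed in polynomial time, which is immediate from LP integrality of MWPM-D together with any polynomial-time MWPM algorithm. The other is checking that the coordinate-wise integer median globally minimizes the (separable, convex) empirical loss over all of $\R^n$ — so that no fractional point can do strictly better — which holds because the unconstrained minimum of each $\phi_j$ is already attained at an integer point, and hence the product/sum structure is minimized there as well. If one instead optimizes over the restricted box class $\cH'$, the same argument applies verbatim since the median of integers in $[-C,C]$ remains in $[-C,C]$.
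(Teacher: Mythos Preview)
Your proposal is correct and matches the paper's own argument essentially line for line: compute the integral optimal duals $x^i = y^*(c_i)$, separate the $\ell_1$ objective across coordinates, and minimize each one-dimensional piecewise-linear sum of absolute deviations at an integer breakpoint (the median). The paper's writeup is slightly terser but identical in substance, and it also notes the coordinate-wise median implementation immediately after the theorem.
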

}

We remark that minimizing this empirical loss can be efficiently implemented by taking the coordinate-wise median of each optimal dual, i.e. taking $y_j = \med(x^1_j,x^2_j,\ldots,x^s_j)$ for each $j \in V$.

\section{Experiments} \label{sec:exp}

In this section we present experimental results on both synthetic and real data sets. Our goal is to validate the two main hypotheses in this work. First we show that warm-starting the Hungarian algorithm with learned duals provides an empirical speedup. Next, we show that the sample complexity of learning good duals is small, ensuring that our approach is viable in practice.   
\submit{We present some representative experimental results here; additional results are in the Supplementary Material.}
\full{We present some representative experimental results here; additional results are in the Appendix~\ref{sec:more_exp}.}

\textbf{Experiment Setup:} All of our experiments were run on Google Cloud Platform~\cite{gcp} \texttt{e2-standard-2} virtual machines with 2 virtual CPU's and 8 GB of memory.

We consider two different setups for learning dual variables and evaluating our algorithms.
\begin{itemize}
    \item Batch: In this setup, we receive $s$ samples $c_1,c_2,\ldots,c_s$ from the distribution of problem instances, learn the appropriate dual variables, and then test on new instances drawn from the distribution.   
    
    \item Online: A natural use case for our approach is an \emph{online} setting, where instance graphs $G_1, G_2, \ldots$ arrive one at a time. When deciding on the best warm start solution for $G_t$ we can use all of the data from $G_1, \ldots, G_{t-1}$.  This is a standard scenario in industrial applications like ad matching, where a new ad allocation plan may need to be computed daily or hourly.
\end{itemize}

\textbf{Datasets:}
To study the effect of the different algorithm parameters, we first run a study on synthetic data. 
Let $n$ be the number of nodes on one side of the bipartition and let $\ell,v$ be two parameters we set later.  First, we divide the $n$ nodes on each side of the graph into $\ell$ groups of equal size.  The weight of all edges going from the $i$'th group on the left side and the $j$'th group on the right side is initialized to some value $W_{i,j}$ drawn from a geometric distribution with mean $250$. Then to generate a particular graph instance, we perturb each edge weight with independent random noise according to a binomial distribution, shifted and scaled so that it has mean 0 and variance $v$.  We refer to this as the \emph{type} model (each type consists of a group of nodes). We use $n = 500$, $\ell \in \{50,100\}$ and vary $v$ from $0$ to $2^{20}$.

We use the following model of generating instances from real data.  Let $X$ be a set of $n$ points in $\R^d$, and fix a parameter $k$. We first divide $X$ randomly into two sets, $X_L$ and $X_R$ and compute a $k$-means clustering on each partition.  To generate an instance $G = (L \cup R, E)$, we sample one point from each cluster on each side, generating $2k$ points in total. 
The points sampled from $X_L$ (resp. $X_R$) form the vertices in $L$ (resp. $R$). The weight of an $(i,j)$ edge is the Euclidean distance between these two points. 
Changing $k$ allows us to control the size of the instance.

\begin{table}[]
    \centering
    \begin{tabular}{c|ccccc}
        Dataset & Blog Feedback~\cite{uci_blogfeedback} & Covertype & KDD & Skin~\cite{uci_skin} & Shuttle  \\ 
        \hline
        \# of Points ($n$) & 52,397 & 581,012 & 98,942 & 100,000 & 43500 \\
         \# of Features ($d$) & 281 & 54 & 38 & 4 & 10 \\
    \end{tabular}
    \caption{Datasets used in experiments based on Euclidean data}
    \label{tab:datasets}
    \vspace{-0.2cm}
\end{table}

We use several datasets from the UCI Machine Learning repository~\cite{ucimlr}.  See Table~\ref{tab:datasets} for a summary.  For the KDD and Skin datasets we used a sub-sample of the original data (sizes given in Table~\ref{tab:datasets}).
    
\textbf{Implemented Algorithms and Metrics:}
We implemented the Hungarian Algorithm (a particular instantiation of Algorithm \ref{alg:mpwm-pd}, as discussed in Section~\ref{sec:matching-to-optimal}) allowing for arbitrary seeding of a feasible integral dual.  We experimented with having initial dual of $0$ (giving the standard Hungarian Algorithm) as the baseline and having the initial duals come from our learning algorithm followed by Algorithm~\ref{alg:fast_approx_mwpm} to ensure feasibility (which we refer to as ``Learned Duals'').  We also added the following ``tightening'' heuristic, which is used in all standard implementations of the Hungarian algorithm: given any feasible dual solution $y$,  set $y_i \gets y_i + \min_{j \in N(i)} \{c_{ij} - y_i - y_j\}$ for all nodes $i$ on one side of the bipartition.  This can be quickly carried out in $O(n+m)$ time, and guarantees that each node on that side has at least one edge in $E'$. We compare the runtime and number of primal-dual iterations, reporting mean values and error bars denoting 95\% confidence intervals. 
\submit{The runtime results are in the supplementary material, and exhibit similar behavior  (i.e., the extra running time caused by using Algorithm~\ref{alg:fast_approx_mwpm} in Learned Duals is negligible).}
\full{The runtime results can be found in Appendix~\ref{sec:more_exp}, and exhibit similar behavior  (i.e., the extra running time caused by using Algorithm~\ref{alg:fast_approx_mwpm} in Learned Duals is negligible).}

To learn initial duals we use a small number of independent samples of each instance type.  We compute an optimal dual solution for each instance in the sample.  To combine these together into a single dual solution, we compute the median value for each node's set of dual values.  This is an efficient implementation of the empirical risk minimization algorithm from Section~\ref{sec:pm_learning}. 

\textbf{Results:} First, we examine the performance of Learned Duals in the batch setting described above.  For these experiments, we used 20 training instances to learn the initial duals and then tested those on 10 new instances.  For the type model, we used $\ell=50$ and considered varying the variance parameter $v$.  The left plot  in Figure~\ref{fig:batch} shows the results as we increase $v$ from $0$ to $300$.  We see a moderate improvement in this case, even when the noise variance is larger than the mean value of an edge weight.  Going further, in the middle plot of Figure~\ref{fig:batch} we consider increasing the noise variance in powers of two geometrically.  
For both of these plots, we mark the x-axis according to $v / 250$ which is the ratio of the noise variance to the mean value of the original weights.
Note that even when the noise significantly dominates the original signal from the mean weights (and hence the training instances should not help on the test instances), our method is comparable to the Hungarian method.

Continuing with the Batch setting, the right plot in Figure~\ref{fig:batch} summarizes our results for the clustering derived instances on all datasets with $k = 500$ (similar results hold for other values of $k$; see \submit{the Supplementary Materials}\full{Figure~\ref{fig:batch_other_k_vals}}).  We see an improvement across all datasets, and a greater than 2x improvement on all but the Covertype dataset.

\submit{
Figure~\ref{fig:online} has three plots on the online setting. We aim to show that not too many samples are needed to learn effective duals.  From left to right, the plots in Figure~\ref{fig:online} show the performance averaged over 20 repetitions of the experiment with 20 time points on the type model with $\ell=100,v=200$, and the clustering derived instances on the KDD and Covertype datasets with $k=500$, respectively.  
We see that only a few iterations are needed to see a significant separation between the run time of our method with learned duals and the standard Hungarian method, with further steady improvement as we see more instances.  
Additional plots for the other datasets are in the Supplementary Materials.
}

\full{
Figures~\ref{fig:online} and~\ref{fig:online2} display our results in the online setting. We aim to show that not too many samples are needed to learn effective duals.  From left to right, the plots in Figure~\ref{fig:online} show the performance averaged over 20 repetitions of the experiment with 20 time points on the type model with $\ell=100,v=200$, and the clustering derived instances on the KDD and Covertype datasets with $k=500$, respectively.  
We see that only a few iterations are needed to see a significant separation between the run time of our method with learned duals and the standard Hungarian method, with further steady improvement as we see more instances.  
}

We see similar trends in both the real and synthetic data sets. We conclude the following.

\begin{itemize}
    \item The theory is predictive of practice.  Empirically, learning dual variables can lead to significant speed-up. This speed-up is achieved  in both the batch and online settings.
    \item As the distribution is more concentrated, the learning algorithm performs better (as one would suspect).
    \item When the distribution is not concentrated and there is little to learn, then the algorithm has performance similar to the widely used Hungarian algorithm.
\end{itemize}

All together, these results demonstrate the strong potential for improvements in algorithm run time using machine-learned predictions for the weighted matching problem.

\begin{figure}[ht]
\begin{minipage}{.33\textwidth}
    \centering
    \includegraphics[width=\textwidth]{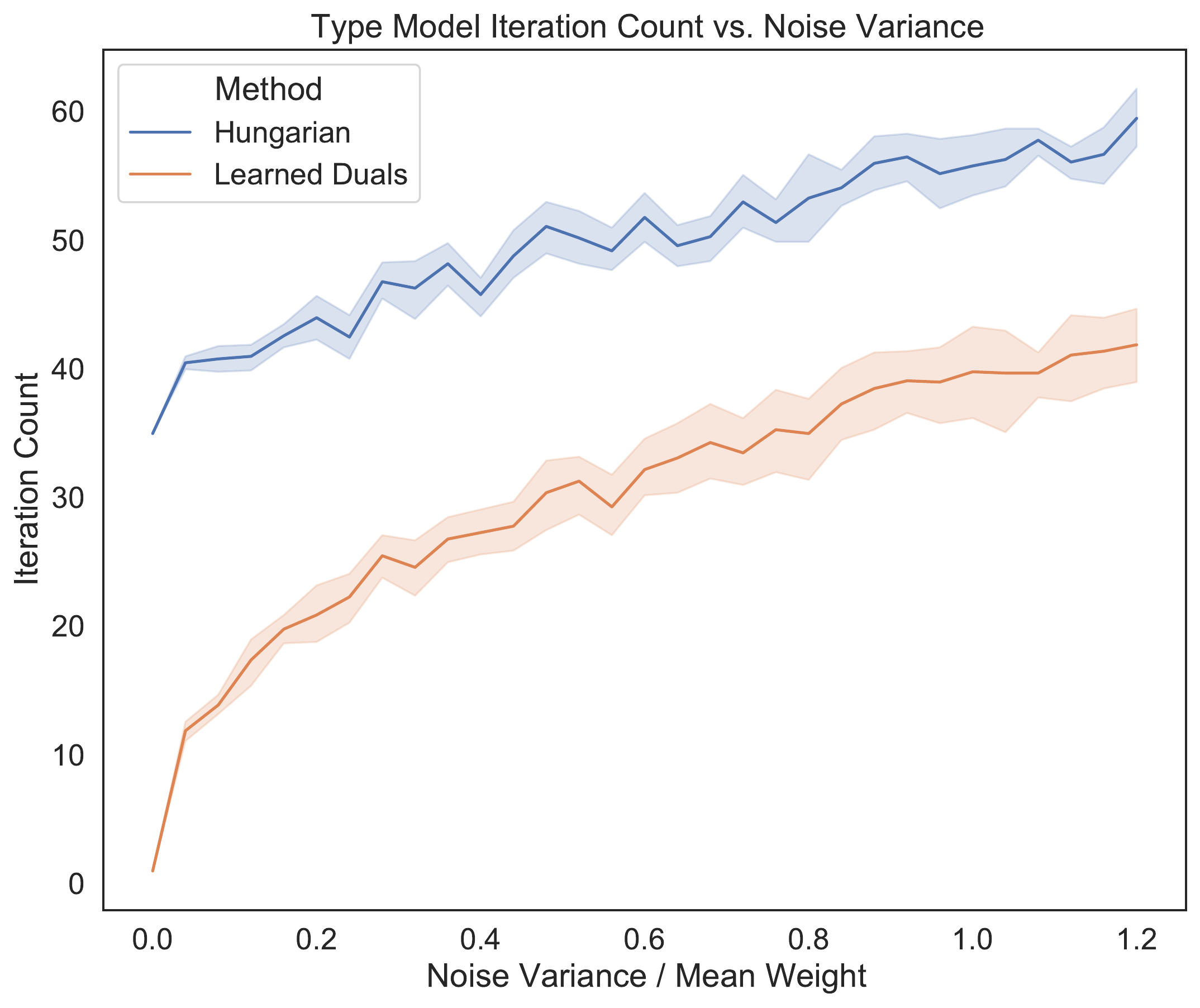}
     \label{fig:type_model_vary_noise_linear}
    \end{minipage}
\begin{minipage}{.33\textwidth}
    \centering
    \includegraphics[width=\textwidth]{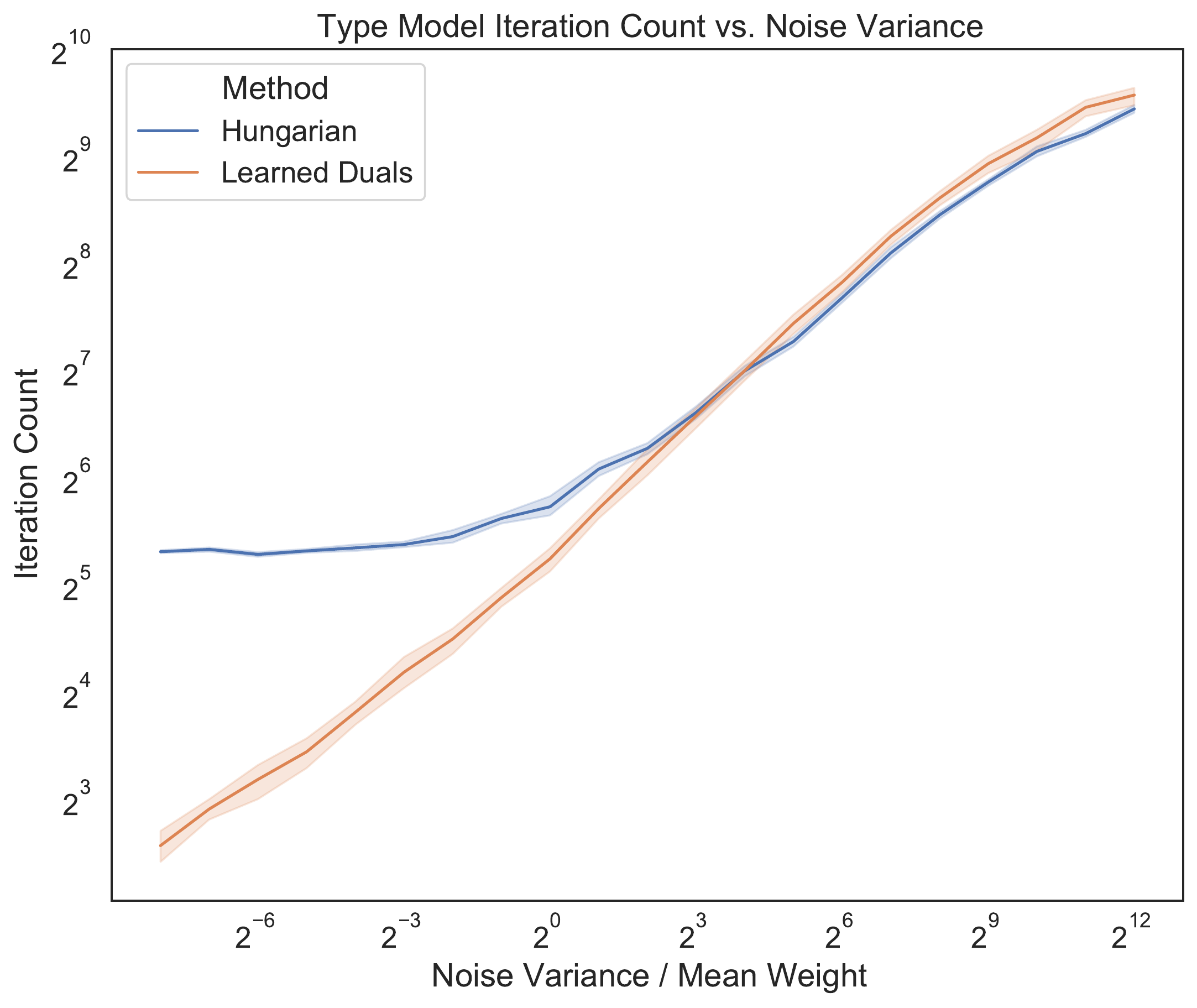}
    \label{fig:type_model_vary_noise_exponential}
    \end{minipage}
\begin{minipage}{.33\textwidth}
    \centering
   \includegraphics[width=\textwidth]{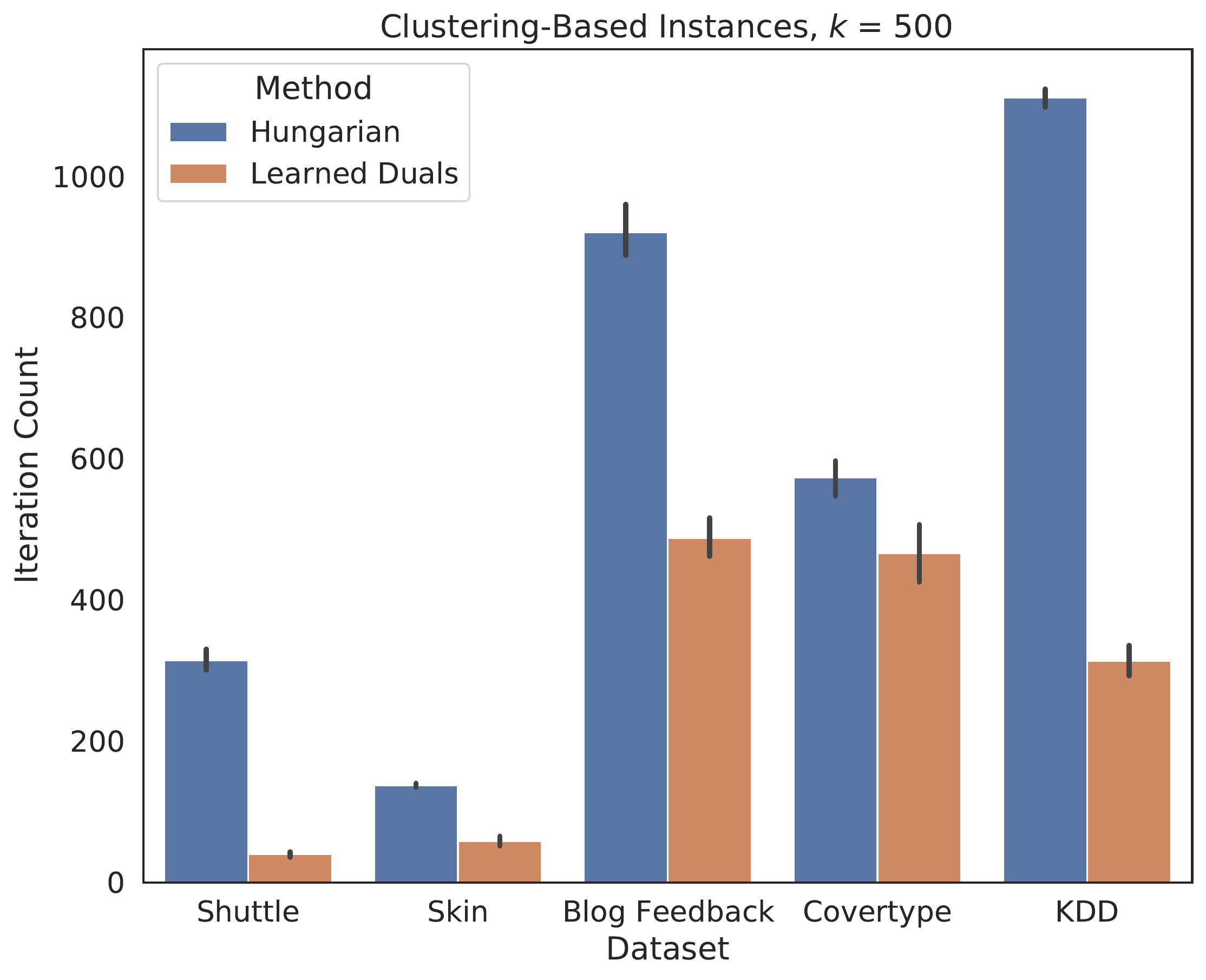}
    \label{fig:online_type_model}
    \end{minipage}
        \vspace{-.2cm}
        \caption{Iteration count results for the Batch setting.  The left figure gives the iteration count for the type model (synthetic data) versus linearly increasing $v$, while the middle geometrically increases $v$.  The right figure summarizes the results for clustering based instances (real data) in the batch setting. \label{fig:batch}  \vspace{-.2cm}}
\end{figure}

\begin{figure}[ht!]
\begin{minipage}{.33\textwidth}
    \centering
            \includegraphics[width=\textwidth]{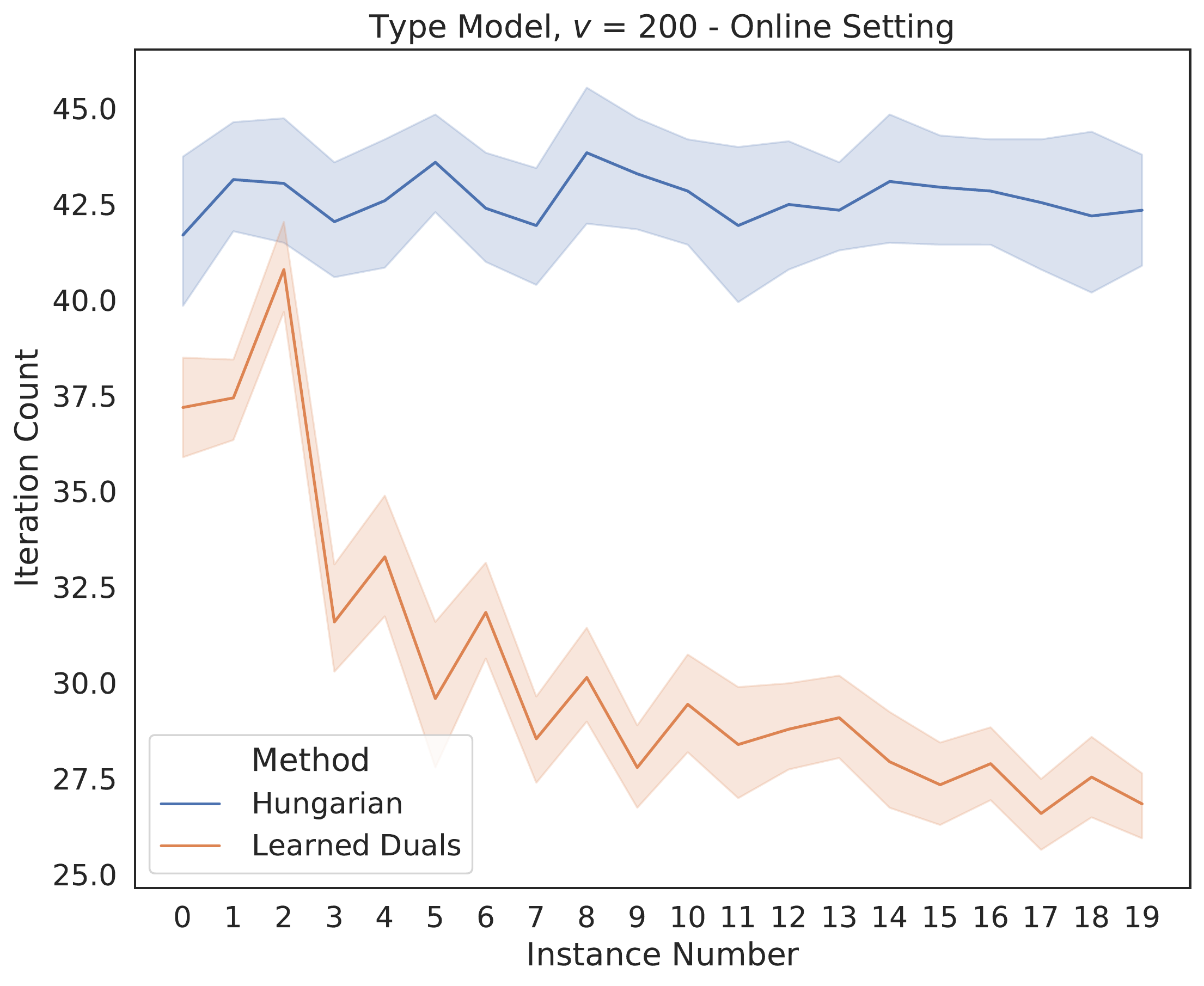}
    \label{fig:clustering_instances_summary_k_500}
\end{minipage}
\begin{minipage}{.33\textwidth}
    \centering
    \includegraphics[width=\textwidth]{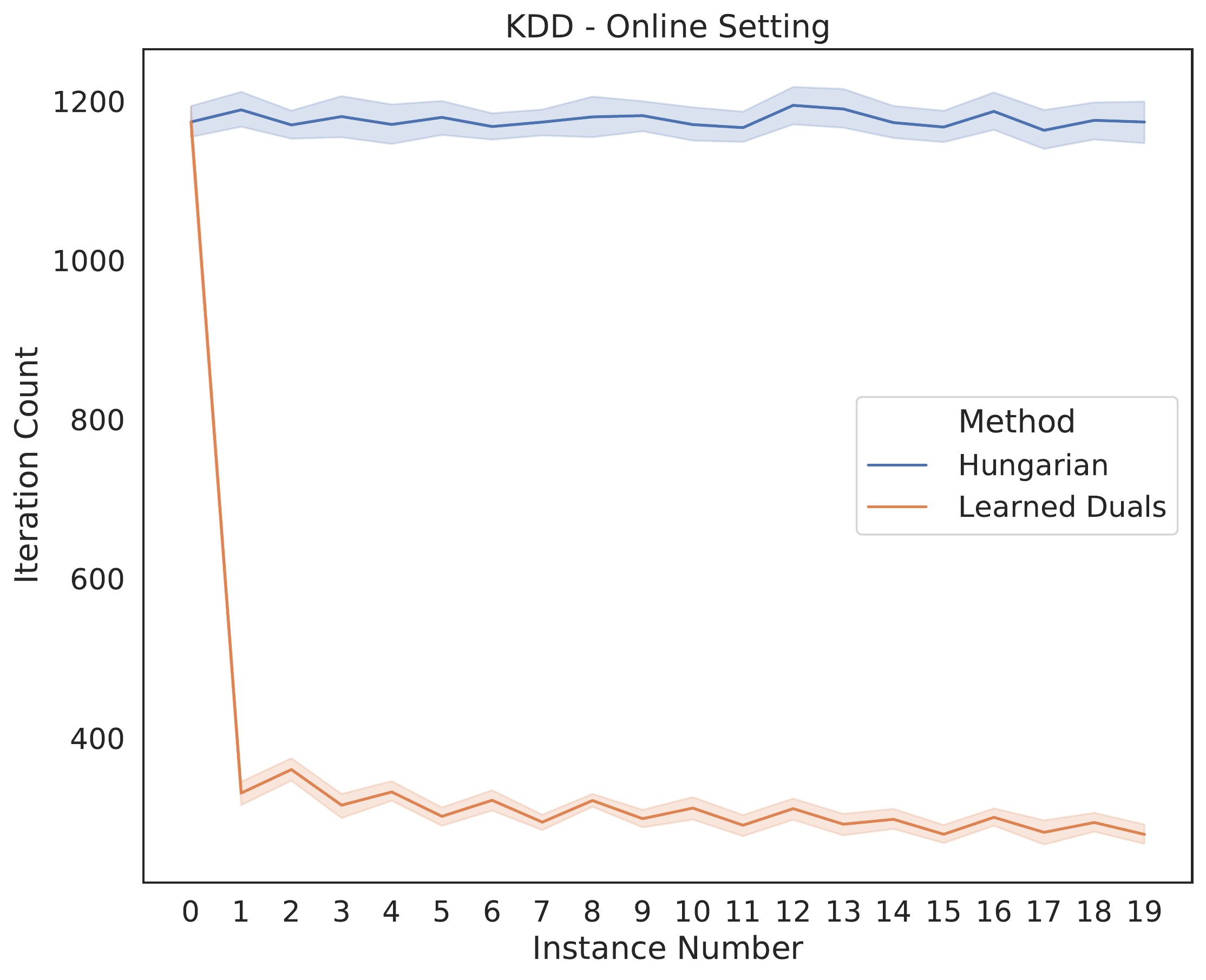}
    \label{fig:online_kdd}
\end{minipage}
\begin{minipage}{.33\textwidth}
    \centering
    \includegraphics[width=\textwidth]{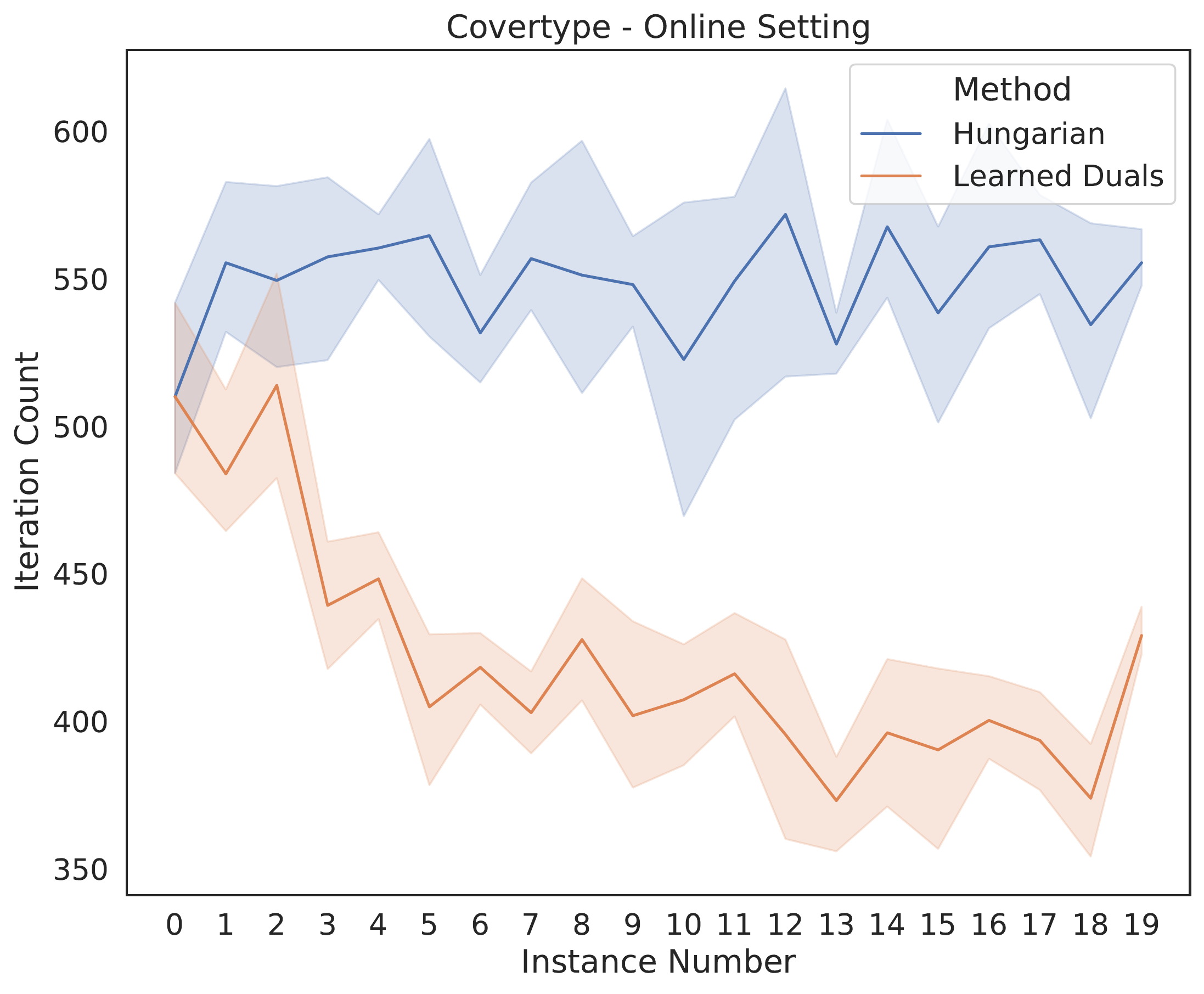}
    \label{fig:online_covertype}
\end{minipage}
    \vspace{-.2cm}
    \caption{
    Iteration count results for the Online setting.  The left figure is for the type model (synthetic data), while the middle and right are for the clustering based instances (real data)  with $k = 500$ on KDD and Covertype, respectively. \vspace{-.2cm}
    \label{fig:online}
    }
\end{figure}

\full{
\begin{figure}[ht!]
\begin{minipage}{.33\textwidth}
    \centering
        \includegraphics[width=\textwidth]{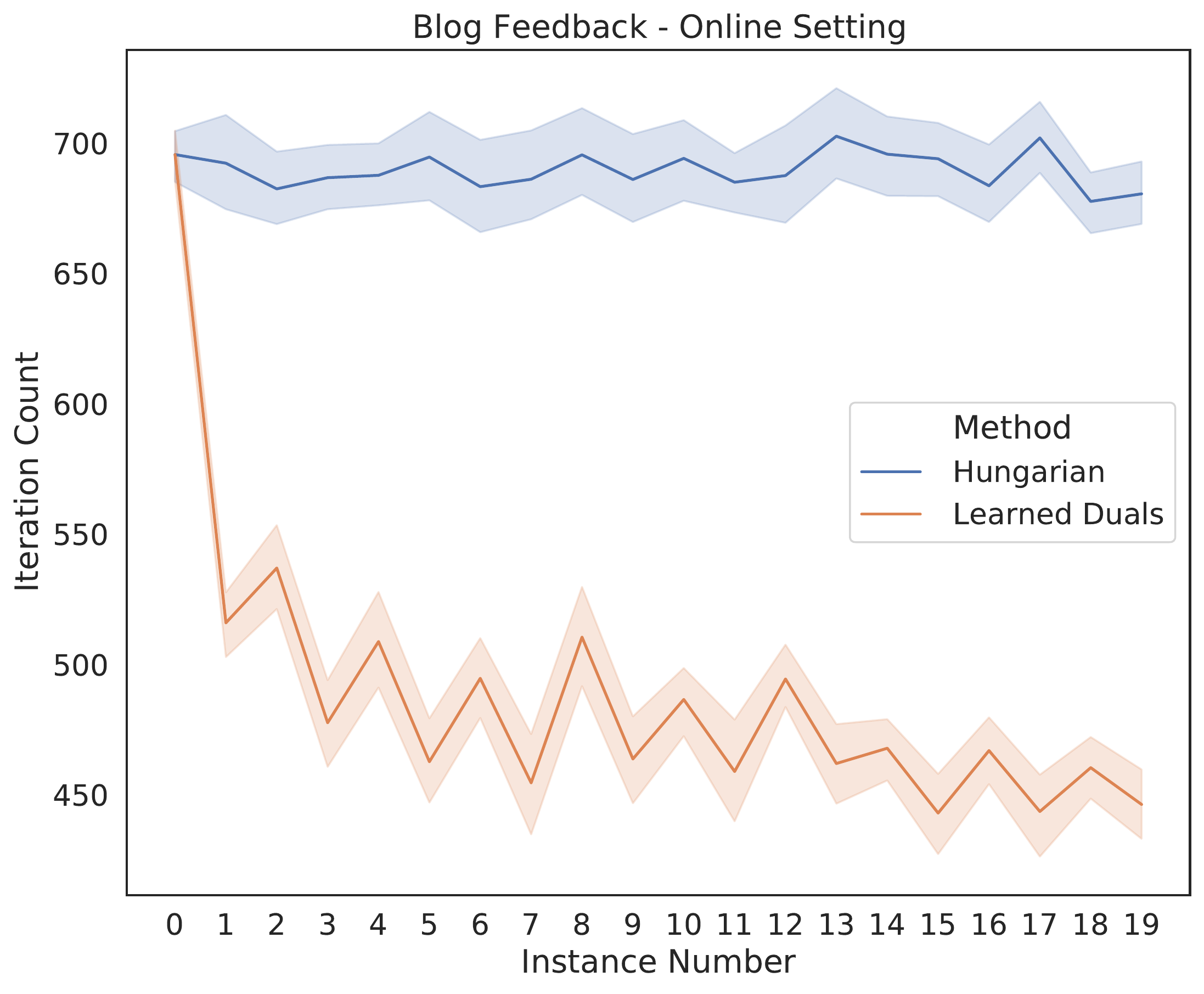}
    \label{fig:online_blog}
\end{minipage}
\begin{minipage}{.33\textwidth}
    \centering
    \includegraphics[width=\textwidth]{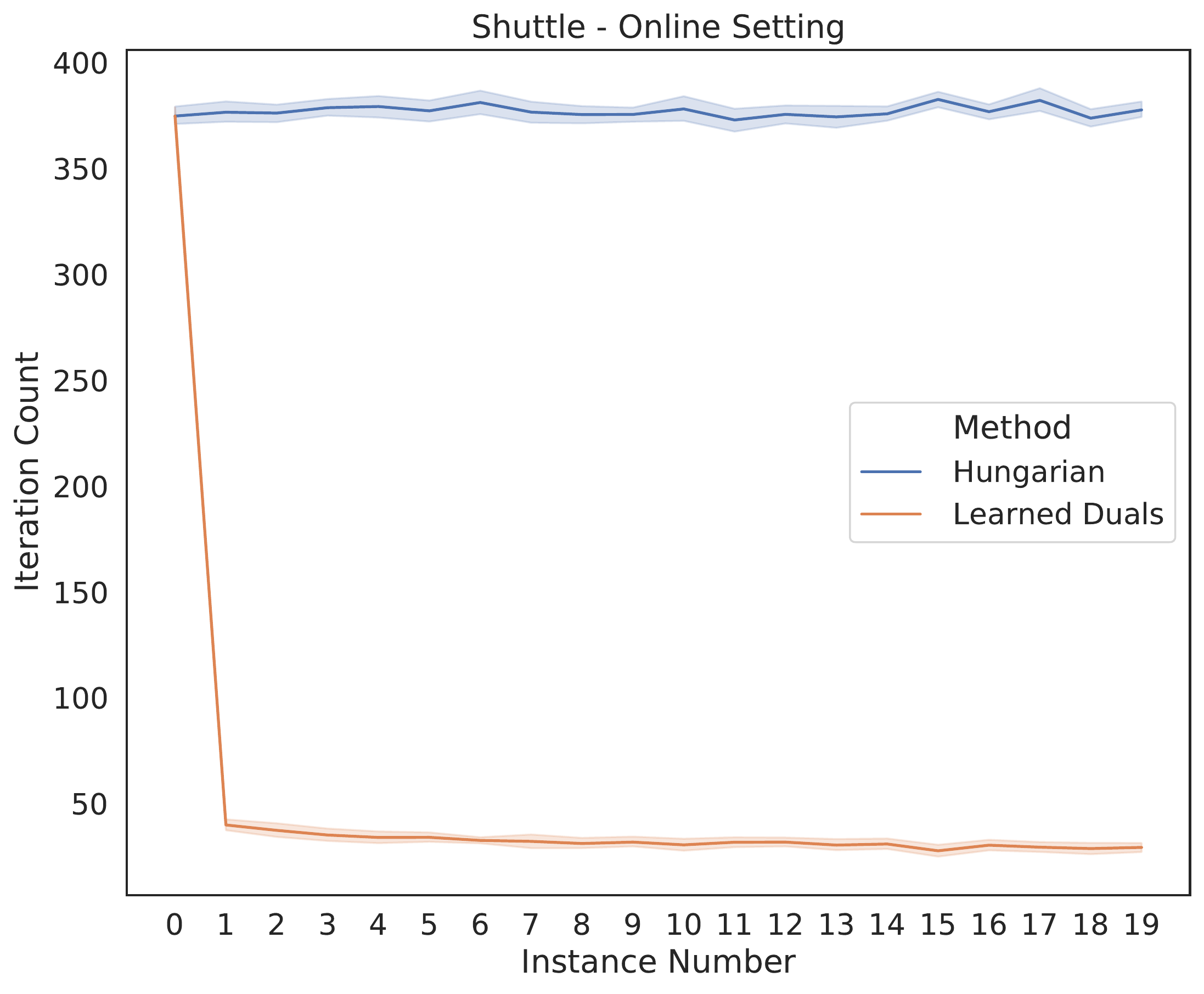}
    \label{fig:online_shuttle}
\end{minipage}
\begin{minipage}{.33\textwidth}
    \centering
    \includegraphics[width=\textwidth]{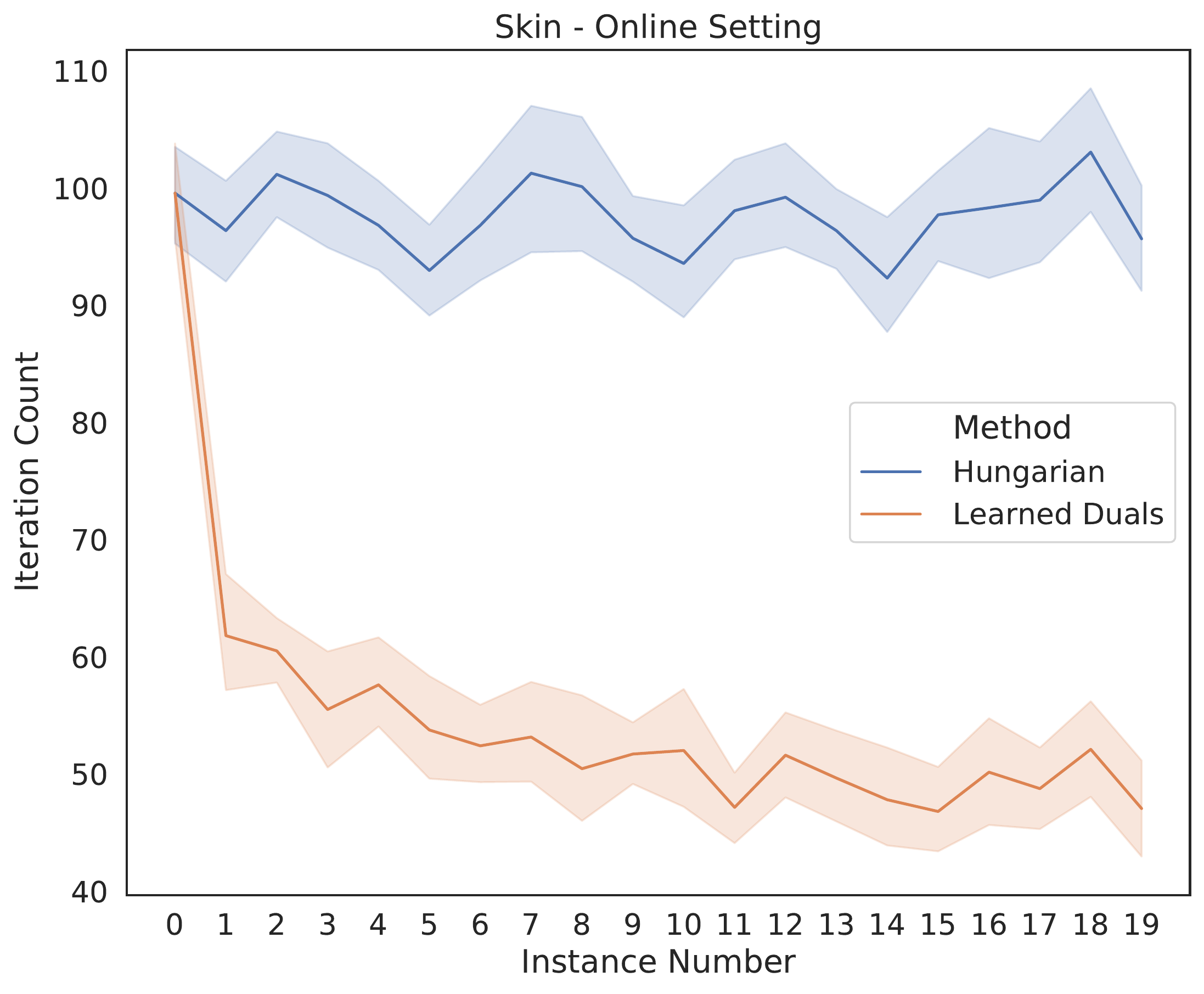}
    \label{fig:online_skin}
\end{minipage}
    \vspace{-.2cm}
    \caption{
    More iteration count results for the Online setting.  From left to right, we have the results for the clustering based instances on Blog Feedback, Shuttle, and Skin, all with $k= 500$.
    \label{fig:online2}
    }
\end{figure}
}

\full{
\begin{figure}[ht!]
\begin{minipage}{.33\textwidth}
    \centering
    \includegraphics[width=\textwidth]{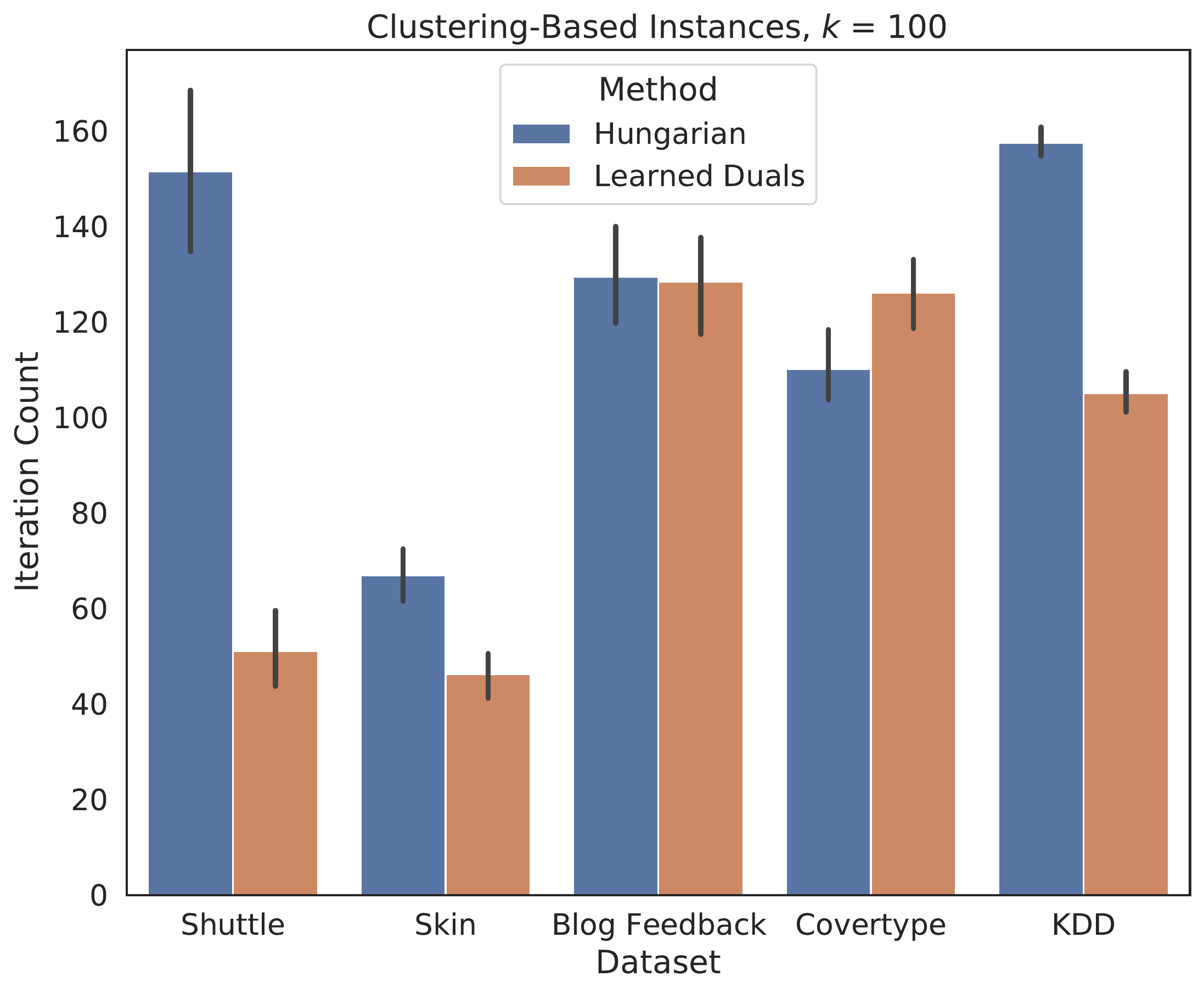}
    \label{fig:clustering_instances_summary_k_100}
\end{minipage}
\begin{minipage}{.33\textwidth}
    \centering
    \includegraphics[width=\textwidth]{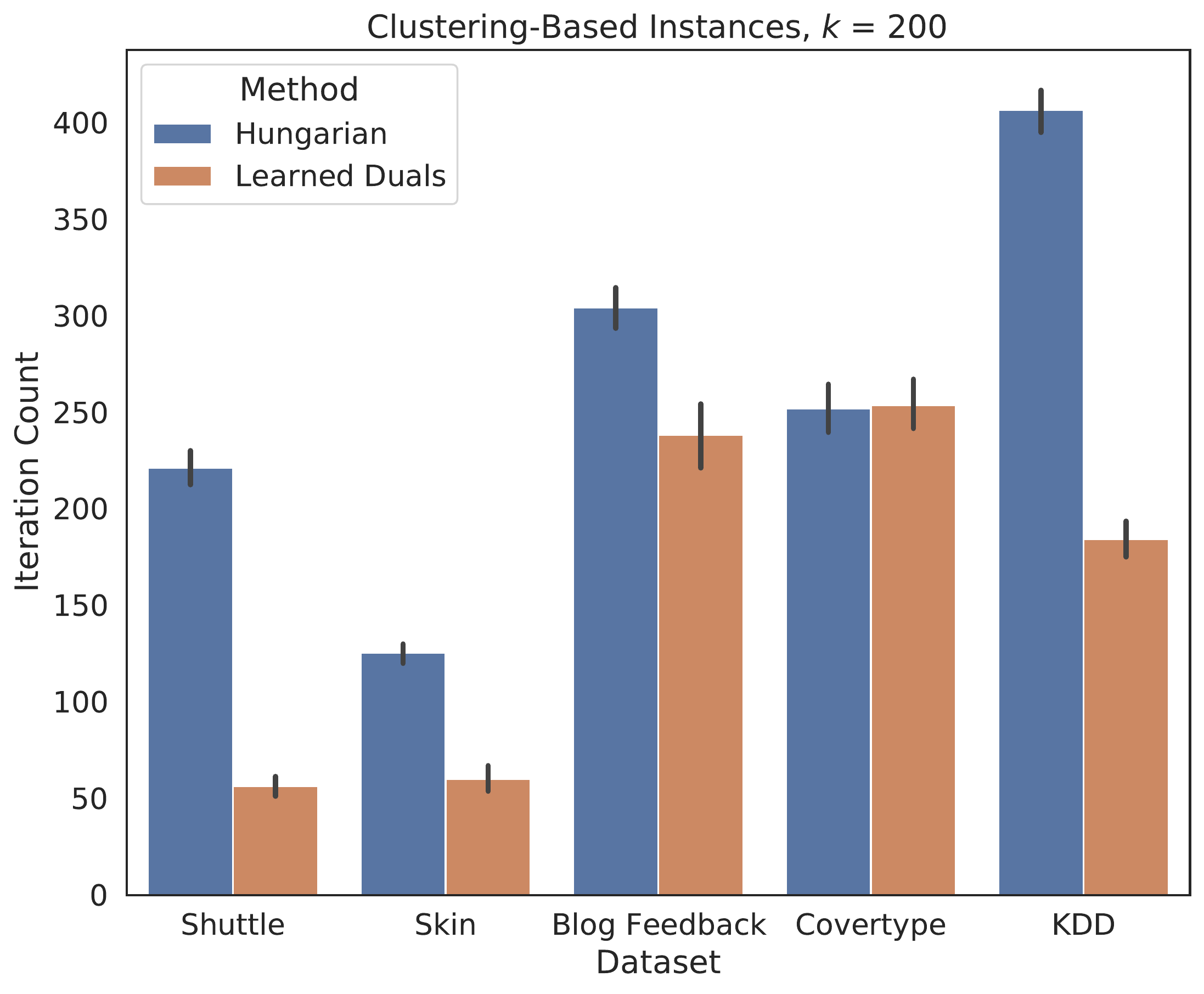}
    \label{fig:clustering_instances_summary_k_200}
\end{minipage}
\begin{minipage}{.33\textwidth}
    \centering
    \includegraphics[width=\textwidth]{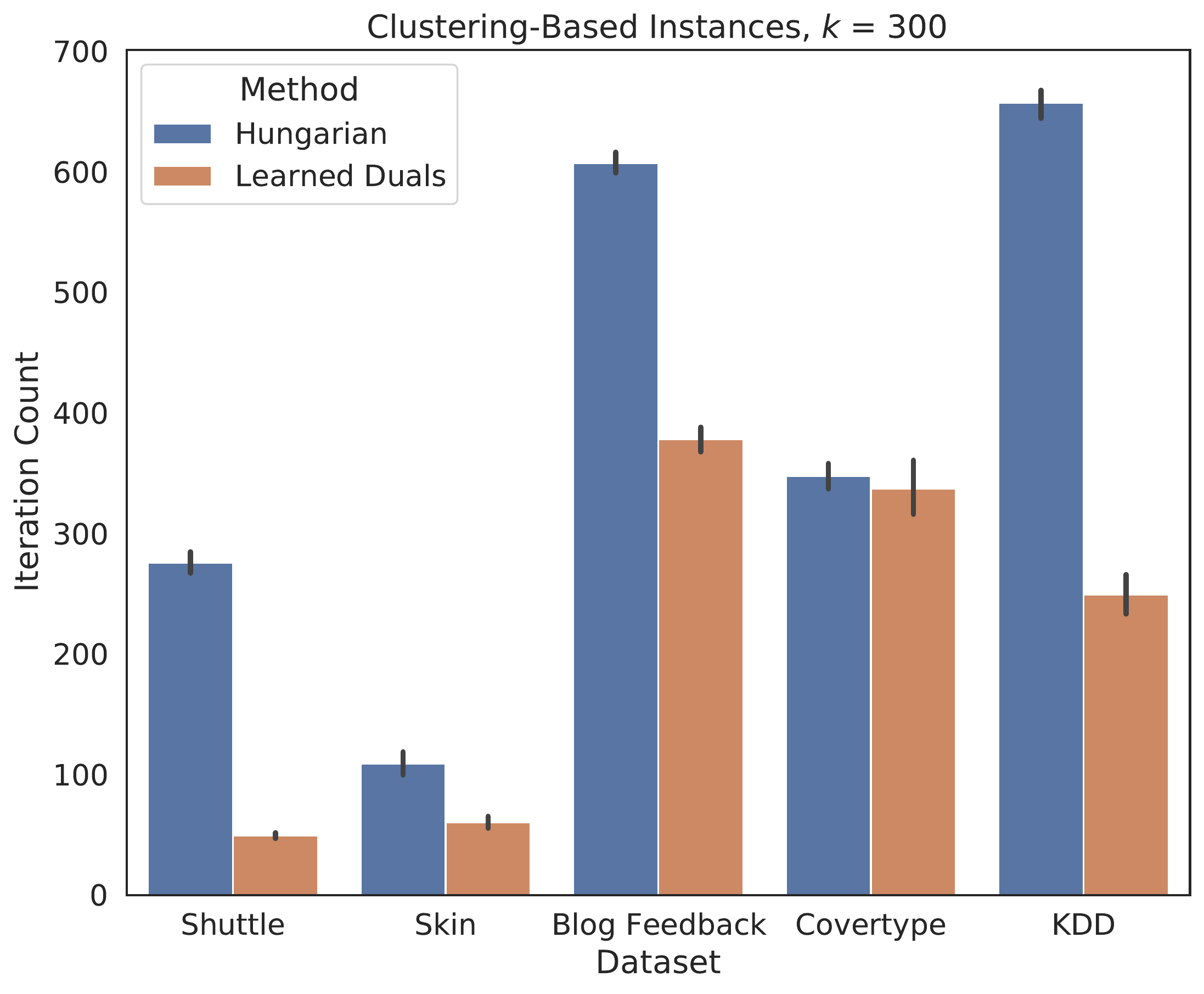}
    \label{fig:clustering_instances_summary_k_300}
\end{minipage}

\begin{minipage}{.33\textwidth}
    \centering
    \includegraphics[width=\textwidth]{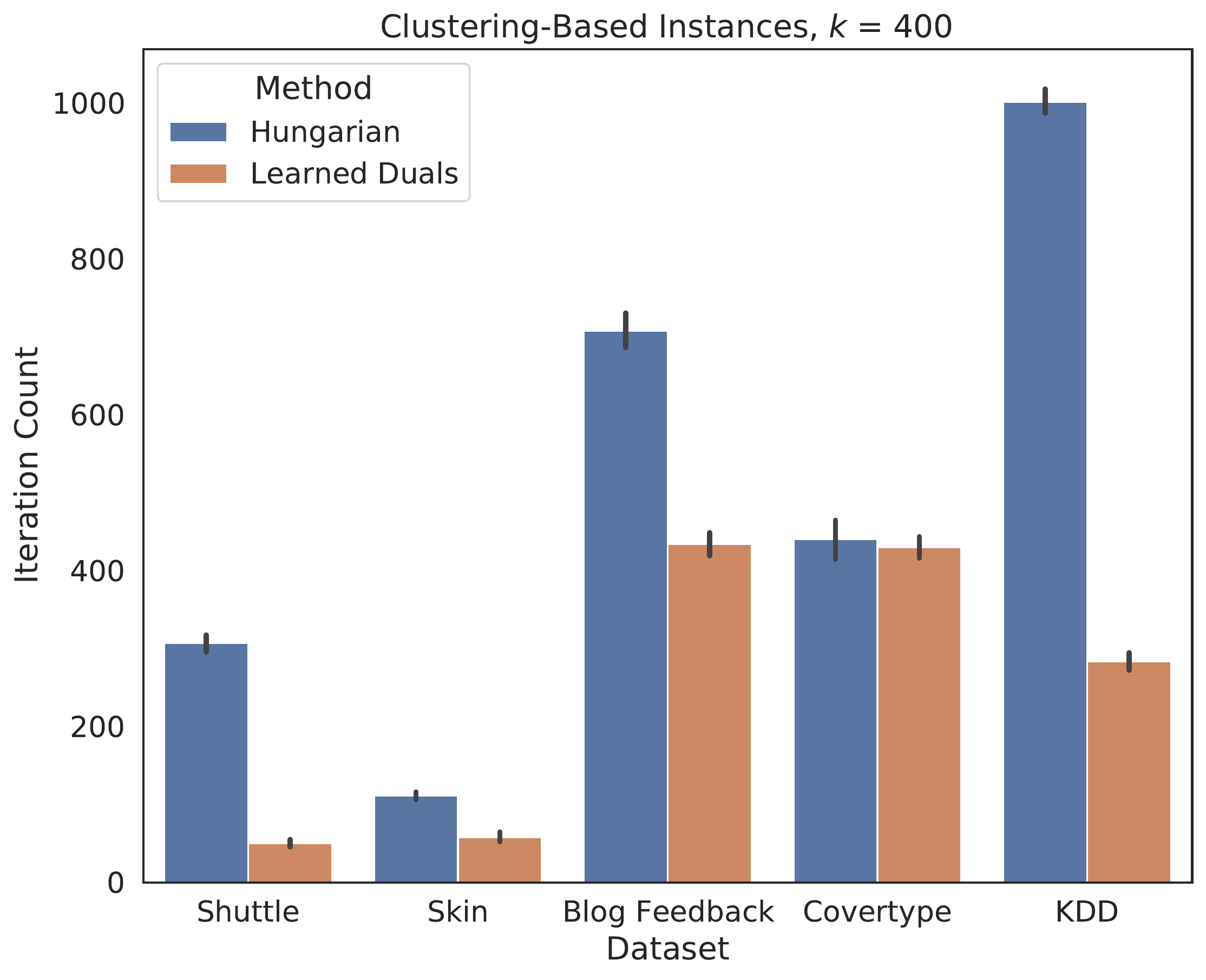}
    \label{fig:clustering_instances_summary_k_400}
\end{minipage}
\begin{minipage}{.33\textwidth}
    \centering
    \includegraphics[width=\textwidth]{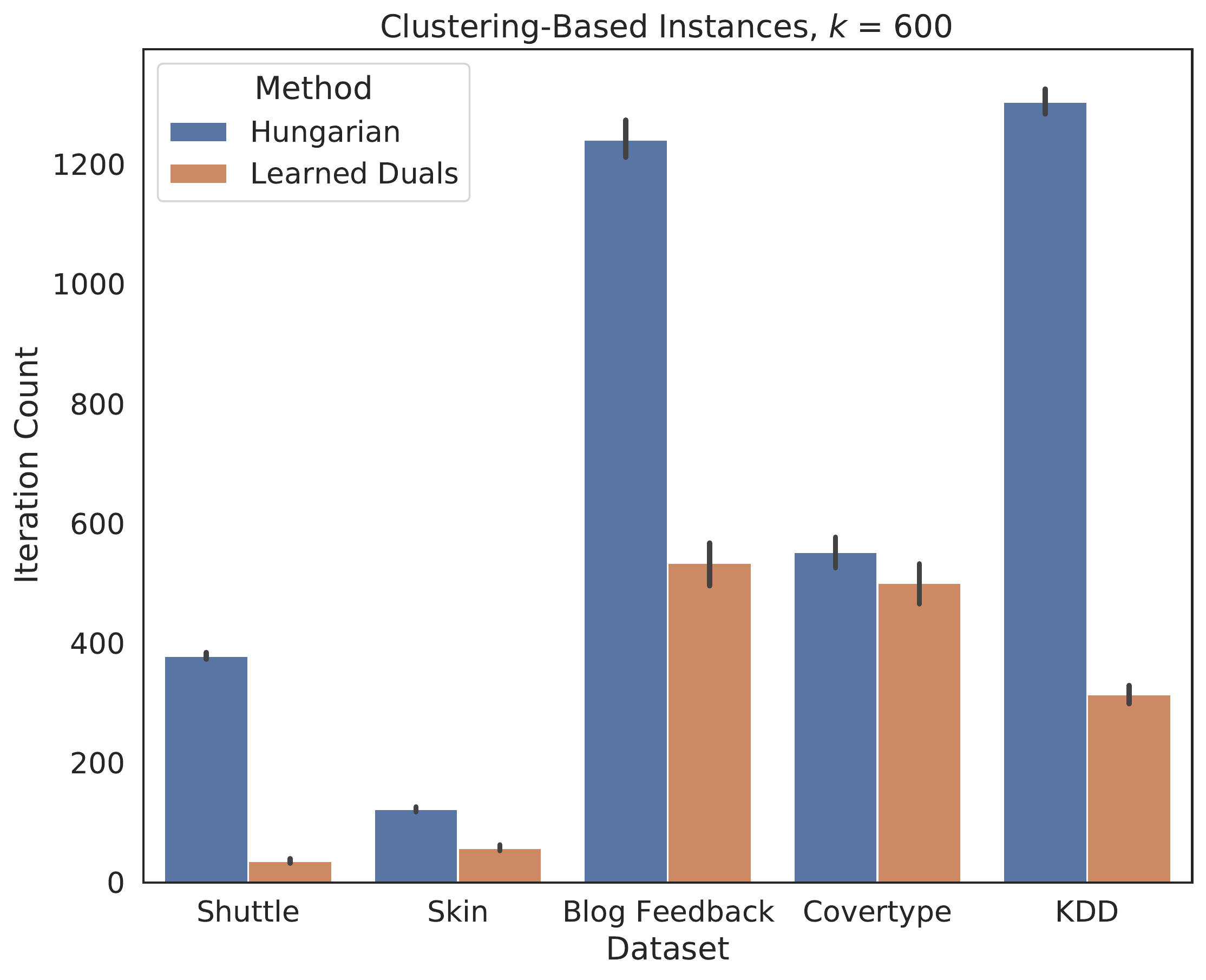}
    \label{fig:clustering_instances_summary_k_600}
\end{minipage}
\begin{minipage}{.33\textwidth}
    \centering
    \includegraphics[width=\textwidth]{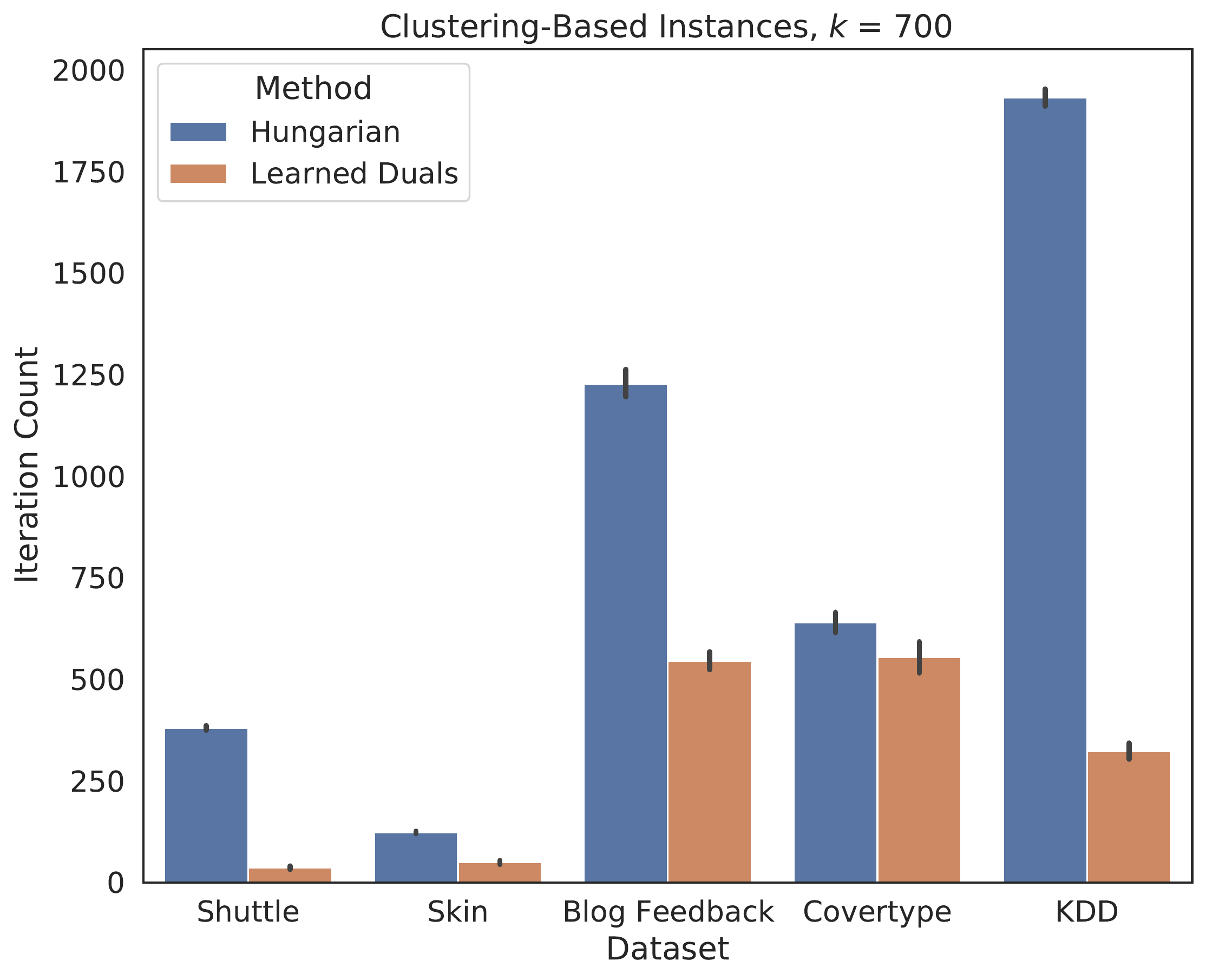} 
    \label{fig:clustering_instances_summary_k_700}
\end{minipage}

\begin{minipage}{.33\textwidth}
    \centering
    \includegraphics[width=\textwidth]{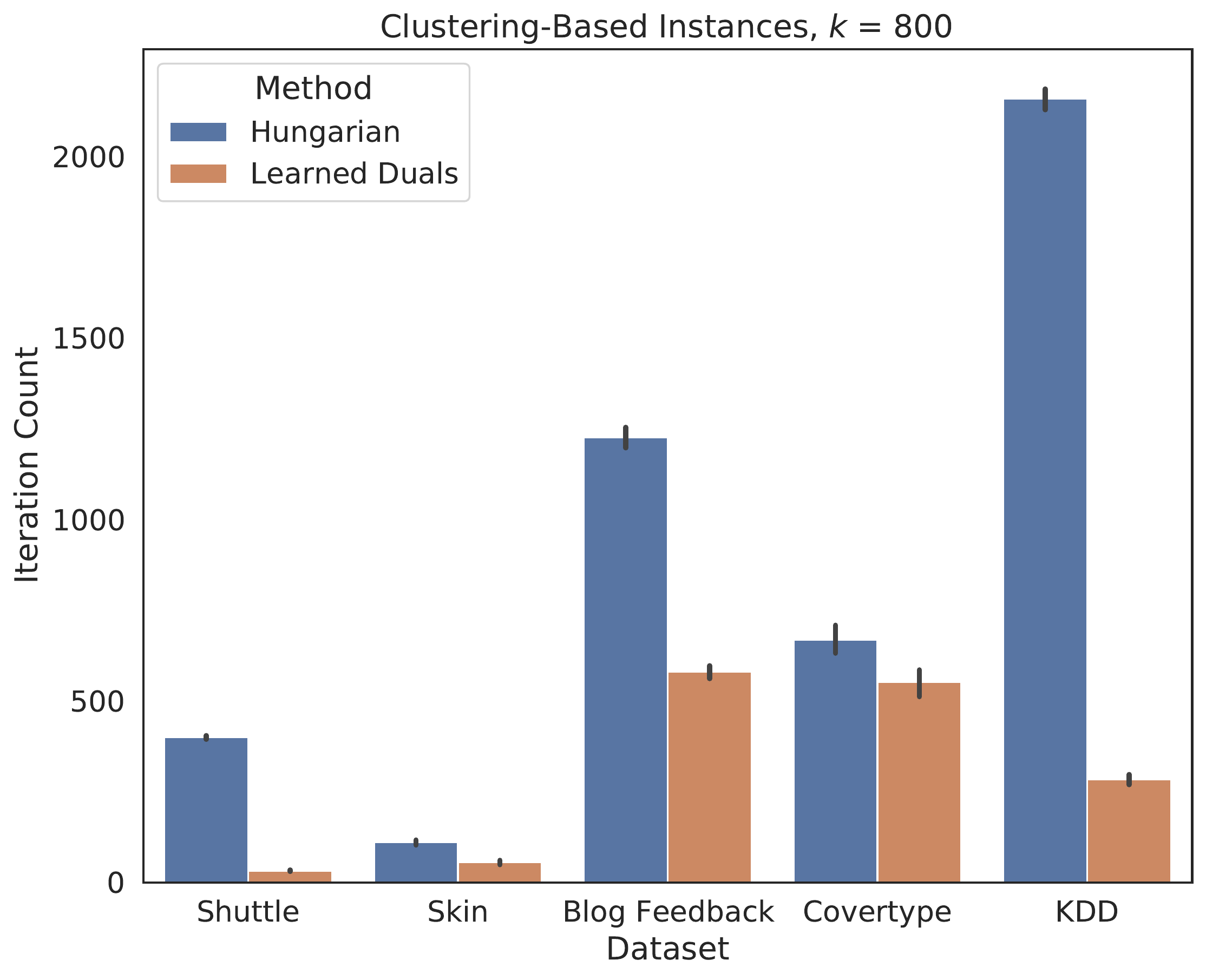}
    \label{fig:clustering_instances_summary_k_800}
\end{minipage}
\begin{minipage}{.33\textwidth}
    \centering
    \includegraphics[width=\textwidth]{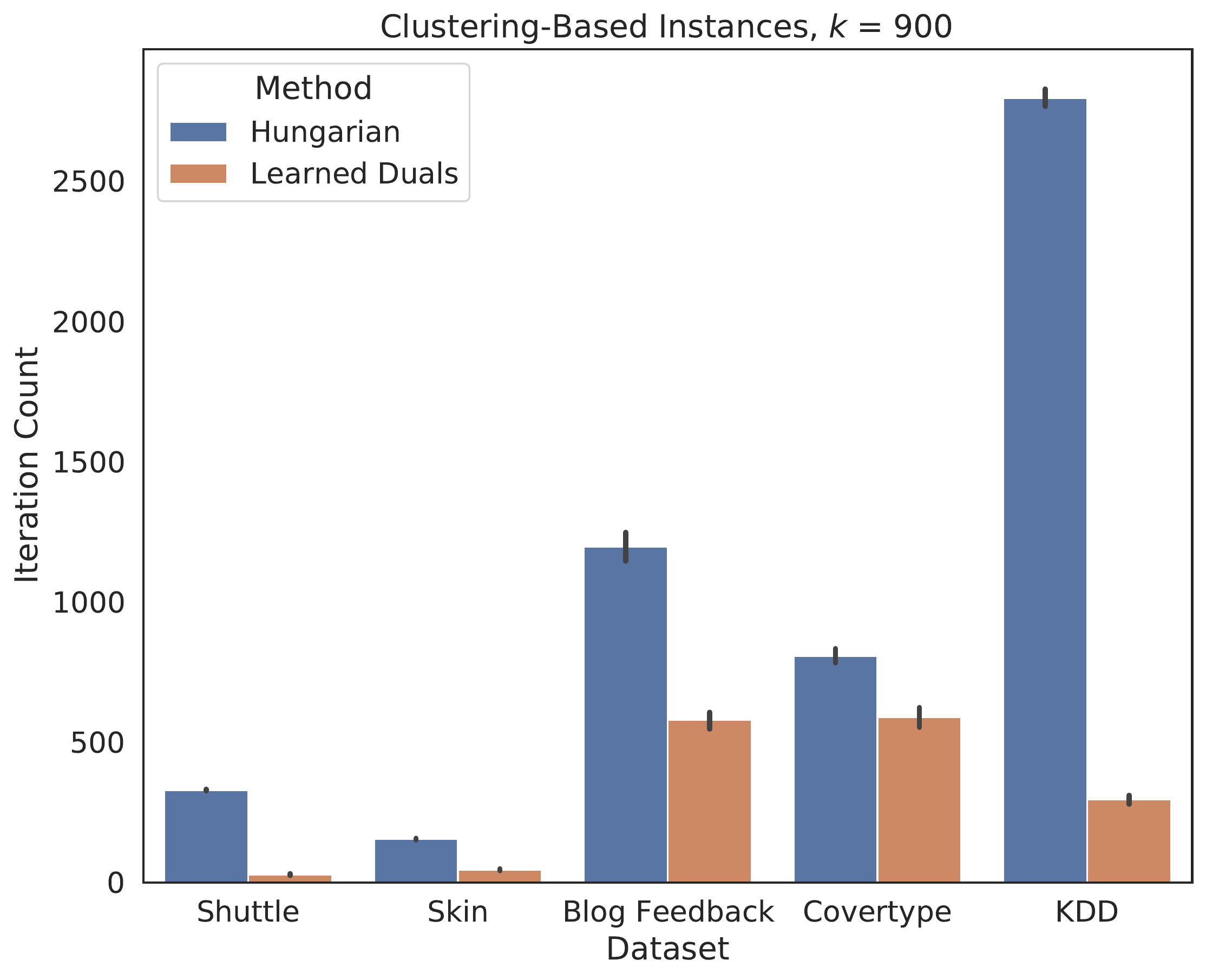}
    \label{fig:clustering_instances_summary_k_900}
\end{minipage}
\begin{minipage}{.33\textwidth}
    \centering
    \includegraphics[width=\textwidth]{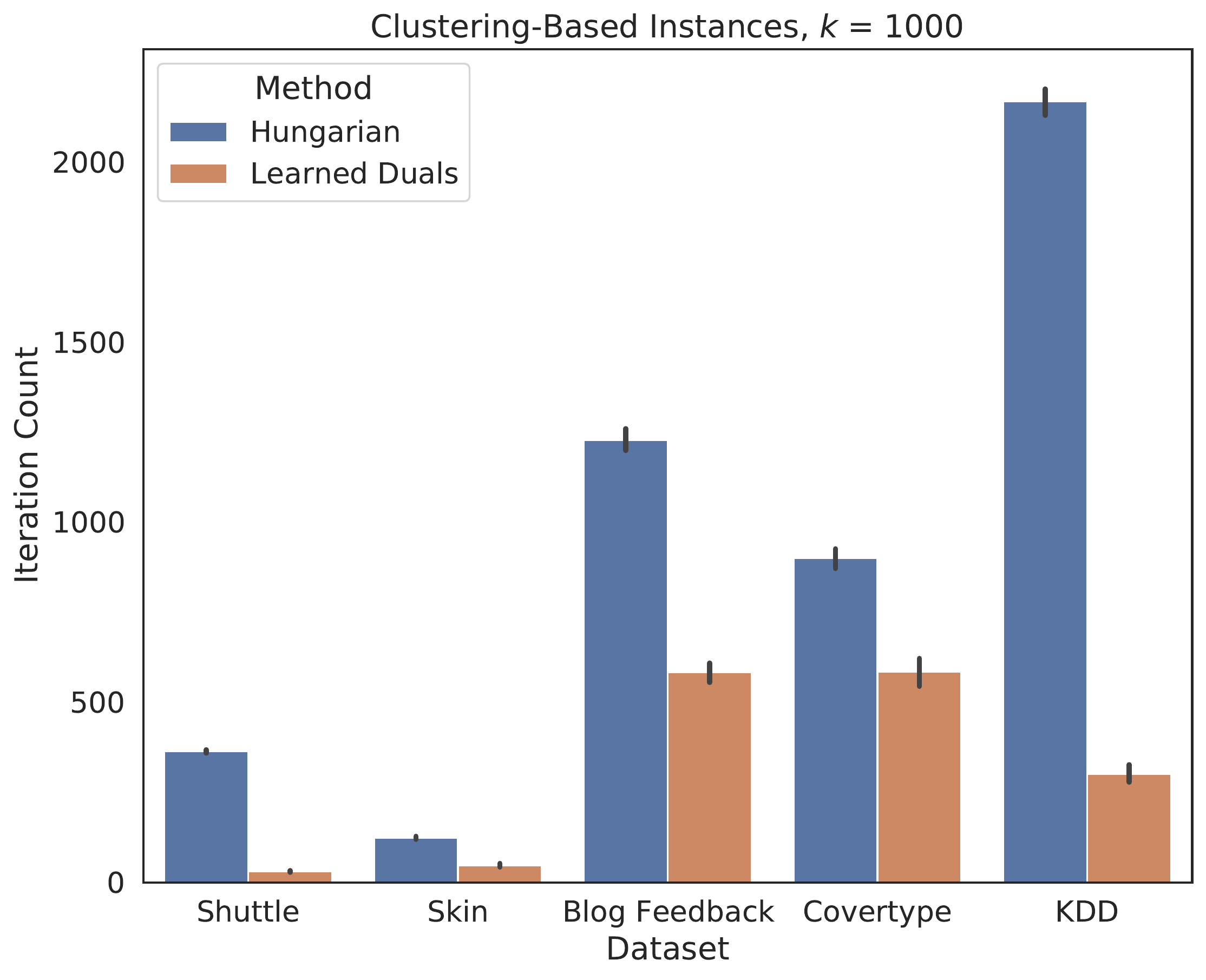}
    \label{fig:clustering_instances_summary_k_1000}
\end{minipage}
    \vspace{-.2cm}
    \caption{
    Iteration count results for clustering derived instances in the Batch setting on other values of $k$.  Here we give the results for each $k$ in $\{100\cdot i \mid 1 \leq i \leq 10\} \setminus \{500\}$. 
    \label{fig:batch_other_k_vals}
    }
\end{figure}

}

\full{
\section{Extending to \texorpdfstring{$b$}{b}-Matching} \label{sec:bmatching}

We now extend the results from Section~\ref{sec:mwpm} to the minimum weight perfect $b$-matching problem on bipartite graphs.
In the extension we are given a bipartite graph $G = (V, E)$, where $V = L \cup R$, a weight vector $c \in \Z_+^E$ and a demand vector $b \in \Z_+^V$. As before, we assume that the primal is feasible for the remainder of this section. Note that the feasibility of the primal can be checked with a single call to a maximum flow algorithm.  

The problem is modeled by the following linear program and its dual linear program.

\begin{equation} \tag{MWBM-P} \label{eqn:mwbm_lp}
    \begin{array}{ccc}
        \min  & \displaystyle \sum_{e \in E}  c_ex_e &  \\
         & \displaystyle \sum_{e \in \delta(i) } x_e = b_i & \forall i \in V\\
         & x_e \geq 0 & \forall e \in E
    \end{array}
\end{equation}
\begin{equation} \tag{MWBM-D} \label{eqn:mwbm_dual}
    \begin{array}{ccc}
        \max  & \displaystyle \sum_{i \in V}  b_iy_i &  \\
         & \displaystyle y_i + y_j \leq c_{ij}  & \forall ij \in E\\
    \end{array}
\end{equation}

   First we show how to project an infeasible dual onto the set of feasible solutions, then we give a simple primal dual scheme for moving to an optimal solution. The end goal of this section is proving the following theorem.

\begin{theorem}\label{thm:b-matchmain}
There exists an algorithm which takes as input a (not necessarily feasible) dual assignment $y$ and finds a minimum weight perfect $b$-matching in $O(mn \|y^* - y\|_1)$ time, where $y^*$ is an optimal dual solution and $\|y^*-y\|_{b,1} := \sum_i b_i |y^*_i - y_i|$.
\end{theorem}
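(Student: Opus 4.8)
The plan is to follow the two-stage template of the ordinary matching case (Section~\ref{sec:mwpm}): given the input dual $y$, first \emph{project} it onto a nearby feasible dual $y'$ for \eqref{eqn:mwbm_dual}, then \emph{warm-start} the $b$-matching primal--dual scheme from $y'$ and bound the number of iterations by $\|y'-y^*\|_{b,1}$. (I read the stated bound as $O(mn\,\|y^*-y\|_{b,1})$; the $b$-weighted distance $\|z\|_{b,1}=\sum_i b_i|z_i|$ is the quantity the $b$-matching dual objective $\sum_i b_iy_i$ is sensitive to.) No learnability machinery is needed here, only the feasibility and optimization pieces re-derived for $b$-matching.

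\textbf{Feasibility (re-deriving Theorem~\ref{thm:matchfeasible}).} Since the dual constraints $y_i+y_j\le c_{ij}$ are unchanged, with $F=\{ij\in E:y_i+y_j>c_{ij}\}$ and $r_{ij}:=y_i+y_j-c_{ij}$, the cheapest perturbation $\delta\ge 0$ with $y-\delta$ feasible solves $\min\{\sum_i b_i\delta_i:\ \delta_i+\delta_j\ge r_{ij}\ \forall ij\in F,\ \delta\ge 0\}$ --- the same vertex-cover-type LP as \eqref{eqn:dist_to_feas_lp}, now with a \emph{vertex-weighted} objective. I would compute an integral $2$-approximation $\delta$ in $O(m+n)$ time, either by reweighting Algorithm~\ref{alg:fast_approx_mwpm} and redoing its primal--dual analysis against the corresponding $b$-capacitated matching dual, or by invoking a textbook linear-time $2$-approximation for weighted vertex cover. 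As in the proof of Theorem~\ref{thm:matchfeasible}, $\delta_i=\max(y_i-y^*_i,0)$ is feasible for the LP (from $y^*_i+y^*_j\le c_{ij}$) with $b$-weighted cost $\le\|y-y^*\|_{b,1}$; hence $\|y'-y\|_{b,1}\le 2\|y-y^*\|_{b,1}$, and the triangle inequality gives $\|y'-y^*\|_{b,1}\le 3\|y-y^*\|_{b,1}$.

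\textbf{Optimization (re-deriving Theorem~\ref{thm:PD-main}).} From the feasible integral dual $y'$, build $G'=(V,E')$ with $E'=\{ij:y'_i+y'_j=c_{ij}\}$; compute a maximum $b$-matching in $G'$ by one max-flow call (source edges of capacity $b_i$ into each $i\in L$, sink edges of capacity $b_j$ out of each $j\in R$, edges of $E'$ uncapacitated), which takes $O(mn)$ time by Orlin / King--Rao--Tarjan. If it saturates all demands, complementary slackness (Lemma~\ref{lem:comp_slackness}; the $b$-matching polytopes are integral) certifies a minimum-weight perfect $b$-matching and we stop. Otherwise the flow value is below $\sum_{i\in L}b_i$, and the minimum cut exposes a set $S\subseteq L$ with $\sum_{i\in S}b_i>\sum_{j\in\Gamma_{G'}(S)}b_j$ --- the max-flow--min-cut form of Hall's deficiency condition for bipartite $b$-matchings --- obtainable in $O(m+n)$ further time. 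Update $y_i\leftarrow y_i+\epsilon$ for $i\in S$ and $y_j\leftarrow y_j-\epsilon$ for $j\in\Gamma_{G'}(S)$ with $\epsilon=\min_{i\in S,\,j\in R\setminus\Gamma_{G'}(S)}(c_{ij}-y_i-y_j)$. The four-case check of Lemma~\ref{lem:dual_always_feas} shows feasibility is preserved; $\epsilon\ge 1$ by integrality (as in Propositions~\ref{prop:mpwm-s-exist} and~\ref{prop:mwpm_eps}); and mirroring Lemmas~\ref{lem:mwmp_pd_terminates}--\ref{lem:mwpm_pred_iter_bound} the dual objective rises by $\epsilon(\sum_{i\in S}b_i-\sum_{j\in\Gamma_{G'}(S)}b_j)\ge 1$ each iteration, so the iteration count is at most $b^\top y^*-b^\top y'\le\|y'-y^*\|_{b,1}\le 3\|y-y^*\|_{b,1}$. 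At $O(mn)$ per iteration plus $O(m+n)$ for the projection, the total is $O(mn\,\|y-y^*\|_{b,1})$.

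\textbf{Expected main obstacle.} The optimization stage is the delicate one: getting the right deficiency statement for bipartite $b$-matchings and reading the violating set $S$ off a min cut in linear time, and ensuring the per-iteration cost is a single strongly polynomial $O(mn)$ max flow rather than something scaling with the possibly large demands $b_i$. The feasibility stage is essentially a transcription of Section~\ref{sec:matching-feasibility}; its one new wrinkle is that the distance-to-feasibility LP becomes vertex-weighted, which affects the analysis of the $2$-approximation but not its linear running time.
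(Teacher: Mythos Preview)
Your two-stage plan matches the paper's, and the optimization stage (max-flow on the tight-edge network, Hall-type deficiency set from the min cut, dual update raising the objective by at least $1$ per iteration) is exactly what the paper does in Algorithm~\ref{alg:mwbm-pd}.

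The one place where you are too optimistic is the feasibility step. You say it is ``essentially a transcription of Section~\ref{sec:matching-feasibility}'' and propose to reweight Algorithm~\ref{alg:fast_approx_mwpm}, but the paper explicitly abandons that algorithm for $b$-matching. The reason is that the Drake--Hougardy path-walk analysis sets $\gamma_{ij}=1/2$ along a path, and the primal-equals-twice-dual bookkeeping (Lemma~\ref{lem:mwpm_approx}) breaks once the primal objective is $\sum_i b_i\delta_i$: the primal increment at a step is $b_i r_{ij}$ while the dual increment is still $r_{ij}/2$, and these no longer match when the $b_i$ vary. Nor does a ``textbook weighted vertex cover'' routine apply directly, since the right-hand sides $r_{ij}$ are arbitrary rather than $1$. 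The paper instead gives a new greedy: sort edges by $r_e$ decreasing, greedily raise $\gamma_e$ subject to the $b$-matching constraints $\sum_{e\in N(i)}\gamma_e\le b_i$, then set $\delta_i=\bigl(\sum_{e\in N(i)}\gamma_e r_e\bigr)/b_i$. A dual-fitting argument shows this is a fractional $2$-approximation; a further rounding step makes $\delta$ integral at the cost of another factor $2$, yielding $\|y'-y^*\|_{b,1}\le 5\|y-y^*\|_{b,1}$ (Theorem~\ref{thm:b-matchfeasible}) in $O(m\log m+n)$ time rather than $O(m+n)$.

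None of this affects the $O(mn\,\|y-y^*\|_{b,1})$ bound of Theorem~\ref{thm:b-matchmain}, since both the constant and the $\log m$ are absorbed; but you should flag the feasibility stage, not the optimization stage, as the place where new work is needed.
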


\subsection{Recovering a Feasible Dual Solution for \texorpdfstring{$b$}{b}-Matching}

As in Section~\ref{sec:mwpm}, our goal now is to find non-negative perturbations $\delta$ such that $\ypred' := \ypred - \delta$ is feasible for \eqref{eqn:mwpm_dual}.  We would like these perturbations to preserve as much of the dual objective value as possible.  Again we define $r_e := \ypred_i + \ypred_j - c_e$ for each edge $e = ij \in E$.  Following the same steps as before, this leads to the following linear program and it's dual.

\begin{equation} \label{eqn:mwbm_dist_to_feas}
    \begin{array}{ccc}
        \min  & \displaystyle \sum_{i \in V}  b_i\delta_i &  \\
         & \displaystyle \delta_i + \delta_j \geq r_e & \forall e=ij \in E\\
         & \delta_i \geq 0 & \forall i \in V
    \end{array}
\end{equation}
\begin{equation}  \label{eqn:mwbm_dist_to_feas_dual}
    \begin{array}{ccc}
        \max  & \displaystyle \sum_{e \in E}  r_e \gamma_e &  \\
         & \displaystyle \sum_{e \in N(i) } \gamma_e \leq b_i & \forall i \in V\\
         & \gamma_e \geq 0 & \forall e \in E
    \end{array}
\end{equation}

Again we are interested in finding a fast approximate solution to this problem.  We develop a new algorithm different than that used in the prior section and show it is a $2$ approximation to  \eqref{eqn:mwbm_dist_to_feas}.  To do so, consider the dual LP above.  This is an instance of the weighted $b$-matching problem where edges can be selected any number of times.  We will first develop a $2$ approximation to this LP in $O(m\log m +n)$ time.  The analysis will be done via a dual fitting analysis.  This analysis will give us the corresponding $2$-approximate fractional primal solution that will be used to construct $\ypred'$. 

Consider the following algorithm for the dual problem.  Sort the edges $e$ in decreasing order of $r_e$.  When considering an edge $e'  =i'j'$ in this order set $\gamma_{e'}$ as large as possible such that $ \sum_{e \in N(i') } \gamma_e \leq b_{i'}$ and $ \sum_{e \in N(j') } \gamma_e \leq b_{j'}$.    Notice the running time of the algorithm is bounded by $O(m \log m + n)$.

When the algorithm terminates we construct a corresponding primal solutions.  For each $i \in V$, set $\delta_i = \frac{\sum_{e \in N(i)}  \gamma_e r_e}{ b_i}$.  That is, $\delta_i$ is the summation of the weights $r$ of the adjacent edges divided by  the $b$-matching constraint value $b_i$.  We will show that $\delta$ is a feasible primal solution.  Moreover that the primal and dual objectives are within a factor two of each other.

\begin{lemma}
The solution $\delta$ is feasible for LP (\ref{eqn:mwbm_dist_to_feas}) and $\gamma$ is feasible for the dual LP (\ref{eqn:mwbm_dist_to_feas_dual}). 
\end{lemma}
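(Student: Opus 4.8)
The plan is to check the two feasibility assertions separately: dual feasibility of $\gamma$ is immediate from the greedy construction, while primal feasibility of $\delta$ needs a short argument that exploits the decreasing-$r$ order in which edges are processed. Throughout I would assume $r_e \ge 0$ for every edge under consideration — an edge with $r_e \le 0$ gives a vacuous covering constraint in~\eqref{eqn:mwbm_dist_to_feas} and may be dropped (equivalently, run the construction on $F = \{e : r_e > 0\}$, exactly as in Section~\ref{sec:mwpm}) — and that $b_i > 0$ for all $i$ (any vertex with $b_i = 0$ can be deleted).

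For $\gamma$: I would argue that feasibility for~\eqref{eqn:mwbm_dist_to_feas_dual} is a loop invariant. It holds at the start ($\gamma \equiv 0$), and each step raises only the single coordinate $\gamma_{e'}$, and only to the largest value keeping both $\sum_{e \in N(i')}\gamma_e \le b_{i'}$ and $\sum_{e \in N(j')}\gamma_e \le b_{j'}$; that value equals $\min\{\, b_{i'} - \sum_{e \in N(i'),\, e \ne e'}\gamma_e,\ b_{j'} - \sum_{e \in N(j'),\, e \ne e'}\gamma_e \,\} \ge 0$ because both constraints held before the step. No other constraint is touched, so $\gamma \ge 0$ and all packing constraints hold at termination.

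For $\delta$: nonnegativity is immediate since $\gamma_e, r_e \ge 0$ and $b_i > 0$. For the covering constraint $\delta_i + \delta_j \ge r_e$ with $e = ij$, the key step I would highlight is that when the greedy fixes $\gamma_e$, it sets it to the smaller of the two residual capacities at $i$ and $j$, so immediately afterward at least one endpoint — say $i$ — is saturated by the mass assigned so far: $\sum_{e' \in N(i),\, e' \preceq e}\gamma_{e'} = b_i$, where $e' \preceq e$ means $e'$ was processed no later than $e$ (this holds even when $\gamma_e = 0$, in which case $i$ was already saturated by strictly earlier edges). Every such $e'$ satisfies $r_{e'} \ge r_e$ by the sorting order, hence
\[
\delta_i = \frac{1}{b_i}\sum_{e' \in N(i)}\gamma_{e'} r_{e'} \;\ge\; \frac{1}{b_i}\sum_{e' \in N(i),\, e' \preceq e}\gamma_{e'} r_{e'} \;\ge\; \frac{r_e}{b_i}\sum_{e' \in N(i),\, e' \preceq e}\gamma_{e'} \;=\; \frac{r_e}{b_i}\cdot b_i \;=\; r_e,
\]
and with $\delta_j \ge 0$ this gives $\delta_i + \delta_j \ge r_e$.

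The one point I would state with care is the saturation claim, since that is the only place the algorithm's behavior really enters; everything else (nonnegativity, dropping nonnegative terms in the displayed inequality, the $b_i>0$ and $r_e \ge 0$ conventions) is routine. This also sets up the next lemma: the primal objective $\sum_i b_i \delta_i = \sum_i \sum_{e \in N(i)} \gamma_e r_e$ double-counts each edge and thus equals exactly $2\sum_e r_e \gamma_e$, twice the dual objective, so weak duality on this primal--dual pair will immediately yield the factor-$2$ guarantee.
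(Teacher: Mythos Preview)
Your proof is correct and follows essentially the same approach as the paper's: dual feasibility of $\gamma$ by construction, and primal feasibility of $\delta$ via the observation that when edge $e$ is processed at least one endpoint is saturated by edges all having $r$-value at least $r_e$, so that endpoint's $\delta$ already meets the $r_e$ threshold on its own. The only cosmetic difference is that the paper uses the equality $\sum_{e'\in N(i)}\gamma_{e'}r_{e'} = \sum_{e'\in N(i),\,e'\preceq e}\gamma_{e'}r_{e'}$ (since later incident edges receive $\gamma=0$ once $i$ is saturated), whereas you drop the later terms as nonnegative; both are fine.
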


\begin{proof}

The feasibility for the dual is by construction, so consider the primal. Consider any edge $e' = i'j'$.  Our goal is to show that $\delta_{i'} +\delta_{j'} \geq r_{e'}$.      Let $A_{e'}$ be the set of edges considered by the algorithm up to edge $e'$ including the edge itself.  These edges have weight at least as large $e'$.  We claim that either   $\sum_{e \in N(i') \cap A_{e'} } \gamma_e = b_{i'}$ or  $\sum_{e \in N(j') \cap A_{e'} } \gamma_e = b_{j'}$.  Indeed, otherwise we would increase $\gamma_{e'}$ until this is true.  Without loss of generality say that $\sum_{e \in N(i') \cap A_{e'} } \gamma_e = b_{i'}$.  We will argue that $\delta_{i'} \geq r_{e'}$.  Knowing that $\delta_{j'}$ is non-negative, this will complete the proof. 

Consider the value of $\delta_{i'}$.  This is  $\frac{\sum_{e \in N(i')} \gamma_e r_e}{ b_{i'}} =  \frac{\sum_{ e \in N(i') \cap  A_{e'}} \gamma_e r_e}{b_{i'}}$.  We know from the above that  $\sum_{e \in N(i') \cap A_{e'} } \gamma_e = b_{i'}$ and every edge in $A_{e'}$ has weight greater than $e'$. Thus, $\frac{\sum_{e \in N(i') \cap A_{e'}} \gamma_e r_e}{ b_{i'}}  \geq r_{e'} \frac{\sum_{e \in N(i') \cap A_{e'}} \gamma_e}{ b_{i'}} = r_{e'}$
\end{proof}

Next we bound the objective of the primal as a function of the dual.

\begin{lemma}
The primal objective is exactly twice the dual objective.
\end{lemma}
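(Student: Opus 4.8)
The plan is to prove this by a direct double-counting argument, since the quantities $\delta_i$ were defined precisely so that this identity falls out. First I would write the primal objective of \eqref{eqn:mwbm_dist_to_feas} under the solution $\delta$ constructed by the algorithm. By definition $\delta_i = \frac{1}{b_i}\sum_{e \in N(i)} \gamma_e r_e$, so $b_i \delta_i = \sum_{e \in N(i)} \gamma_e r_e$ for every vertex $i$, and hence
\[
\sum_{i \in V} b_i \delta_i \;=\; \sum_{i \in V} \sum_{e \in N(i)} \gamma_e r_e .
\]

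The second step is to swap the order of summation on the right-hand side. Each edge $e = ij \in E$ contributes the term $\gamma_e r_e$ exactly twice to this double sum: once from the vertex $i$ (since $e \in N(i)$) and once from the vertex $j$ (since $e \in N(j)$), and from no other vertex, because the graph is bipartite and has no self-loops. Therefore
\[
\sum_{i \in V} \sum_{e \in N(i)} \gamma_e r_e \;=\; 2\sum_{e \in E} \gamma_e r_e ,
\]
and the right-hand side is exactly twice the dual objective $\sum_{e \in E} r_e \gamma_e$ of \eqref{eqn:mwbm_dist_to_feas_dual}, which is the claim.

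I do not expect any genuine obstacle here; it is a one-line calculation once the bookkeeping is set up. The only points that merit a word of care are: (a) the ratio defining $\delta_i$ is well-defined because $b_i \geq 1$ (any vertex with $b_i = 0$ carries no mass in a perfect $b$-matching and can be removed at the outset); and (b) edges $e$ with $r_e \leq 0$ are harmless — these are the already-feasible edges, the algorithm keeps $\gamma_e = 0$ on them (raising $\gamma_e$ on such an edge cannot help the maximization, and the constraint $\delta_i + \delta_j \geq r_e$ holds automatically since $\delta \geq 0$), so they contribute $0$ to both sides. Combined with the preceding lemma (feasibility of both $\delta$ and $\gamma$), weak duality then sandwiches the optimum between the two objectives, so $\delta$ is a $2$-approximation for \eqref{eqn:mwbm_dist_to_feas}, and, after translating back via $\ypred' = \ypred - \delta$ and the triangle inequality as in the MWPM case, one recovers the feasibility guarantee needed for Theorem~\ref{thm:b-matchmain}.
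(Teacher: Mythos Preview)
Your proof is correct and is essentially the same double-counting argument as the paper's: both observe that each edge $e = ij$ contributes $\gamma_e r_e$ once to $b_i\delta_i$ and once to $b_j\delta_j$, hence the primal objective $\sum_i b_i\delta_i$ equals $2\sum_e r_e\gamma_e$. Your extra remarks on well-definedness and on the $2$-approximation consequence are fine but go beyond what the lemma itself asks.
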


\begin{proof}
It suffices to show each edge $e=ij$ contributes twice as much to the primal objective as it does to the dual objective.  First, $e$'s contribution to the dual objective is clearly $r_e \gamma_e$. For the dual, edge $e$ contributes to 
 the summation for both end points. That is, $e$ contributes to $\delta_i$ by $\gamma_e r_e / b_i$ and to $\delta_j$ by $\gamma_e r_e / b_j$. Thus, edge $e$'s contribution to the primal objective is  $b_i \frac{\gamma_er_e}{b_i} + b_j \frac{\gamma_e r_e}{b_j}=   2\gamma_e r_e$, as desired. 
\end{proof}

Thus, we have found a 2-approximate solution to the primal LP (\ref{eqn:mwbm_dist_to_feas}). However, the solution is not necessarily integral. Thus, to make it integral, we do the following simple rounding: 
$$\delta_i \leftarrow 
\begin{cases}
 \lfloor 2\delta_i \rfloor
  &\mbox{if } \delta_i \geq 0.5 \\
0 & \mbox{if } \delta_i \in [0, 0.5)
\end{cases}
$$

Clearly this update can double the cost in the worst case. Hence we only need to check that every constraint remains satisfied. To see this consider an edge $e = ij$ and let $\delta_i$ and $\delta_j$ be the dual values before the update. Note that $r_e$ is an integer assuming that we are given  integer dual values $\hat y$. Assume $r_e \geq 1$ since otherwise the constraint trivially holds true. It is an easy exercise to see that $\lfloor 2x \rfloor \geq x$ for all $x \geq 0.5$. Thus, if $\delta_i, \delta_j \geq 0.5$, then the update only increases the value of $\delta_i$ and $\delta_j$, keeping the constraint satisfied. Further, as $r_e \geq 1$, it must be the case that  $\delta_i \geq 0.5$ or $\delta_j \geq 0.5$. So, we only need to consider the case either $\delta_i \geq 0.5$ and $\delta_j < 0.5$; or $\delta_i < 0.5$ and $\delta_j \geq 0.5$. Assume wlog that the latter is the case. Since $\delta_i \leq \delta_j$, if $\delta_i + \delta_j \geq r_e$, we have $2 \delta_j \geq r_e$. Then, we have $\lfloor 2 \delta_j \rfloor \geq r_e$ as $r_e$ is an integer. Again, the constraint is satisfied.

Thus, we obtain the following which is analogous to Theorem~\ref{thm:matchfeasible}.

\begin{theorem}\label{thm:b-matchfeasible}
There is a $O(m \log m +  n)$ time algorithm that takes an infeasible integer dual $\hat y$ and constructs a feasible integer dual $\hat y'$ such that $ \|\hat y-\hat y'\|_{b,1} \leq 4\|y^* - \hat y\|_{b,1}$ where $y^*$ is the optimal dual solution. Thus, we have $ \|\hat y'- y^*\|_{b,1} \leq 5\|\hat y - y^*\|_{b,1}$.
\end{theorem}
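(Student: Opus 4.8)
The plan is to mirror the proof of Theorem~\ref{thm:matchfeasible}: recast ``find the nearest feasible dual'' as the covering LP~\eqref{eqn:mwbm_dist_to_feas}, solve that LP approximately and quickly through its matching dual~\eqref{eqn:mwbm_dist_to_feas_dual}, round the fractional solution to integers losing only a constant factor, and then compare the resulting dual to $y^*$ via the triangle inequality for $\|\cdot\|_{b,1}$. The feasibility constructions and the two lemmas about the greedy algorithm on~\eqref{eqn:mwbm_dist_to_feas_dual} are already in hand, so the work is in assembling them and tracking constants.

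\textbf{Bounding the LP optimum.} First I would show that the optimum of~\eqref{eqn:mwbm_dist_to_feas} is at most $\|\hat y - y^*\|_{b,1}$. Take $\delta^\circ_i := \max\{0,\hat y_i - y^*_i\}$. Since $y^*$ is feasible for~\eqref{eqn:mwbm_dual}, $y^*_i + y^*_j \le c_{ij}$, and $\hat y_i - \delta^\circ_i \le y^*_i$ for every $i$, so $(\hat y_i - \delta^\circ_i) + (\hat y_j - \delta^\circ_j) \le c_{ij}$; hence $\delta^\circ$ is feasible for~\eqref{eqn:mwbm_dist_to_feas} with cost $\sum_i b_i\delta^\circ_i \le \sum_i b_i |\hat y_i - y^*_i| = \|\hat y - y^*\|_{b,1}$. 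Then I would invoke the already-proved lemmas: the greedy assignment of the $\gamma_e$'s together with $\delta_i = \frac{1}{b_i}\sum_{e \in N(i)}\gamma_e r_e$ is primal/dual feasible and has primal cost exactly $2\sum_e r_e \gamma_e$, which by weak LP duality is at most twice the LP optimum, hence at most $2\|\hat y - y^*\|_{b,1}$. The running time here is $O(m\log m)$ to sort the edges by $r_e$, one $O(m)$ pass to set the $\gamma_e$'s greedily while maintaining residual capacities at the endpoints, and $O(m+n)$ to form the $\delta_i$'s, for a total of $O(m\log m + n)$.

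\textbf{Rounding and assembling.} Next I would apply the stated rounding, $\delta_i \mapsto \lfloor 2\delta_i\rfloor$ if $\delta_i \ge \tfrac12$ and $\delta_i \mapsto 0$ otherwise. Because $\lfloor 2x\rfloor \le 2x$, this at most doubles $\sum_i b_i\delta_i$, so the integral cost is at most $4\|\hat y - y^*\|_{b,1}$. Feasibility survives: an edge $ij$ with $r_{ij}\le 0$ imposes no constraint, and for $r_{ij}\ge 1$ (an integer since $\hat y$ and $c$ are integral) the larger of $\delta_i,\delta_j$ is at least $\tfrac12$ and, from $\delta_i+\delta_j \ge r_{ij}$, is at least $r_{ij}/2$, so its rounded value $\ge \lfloor r_{ij}\rfloor = r_{ij}$. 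Setting $\hat y' := \hat y - \delta$ with this integral $\delta$ then yields an integral dual (as $\hat y$ is integral) that is feasible for~\eqref{eqn:mwbm_dual} by construction of~\eqref{eqn:mwbm_dist_to_feas}, with $\|\hat y - \hat y'\|_{b,1} = \sum_i b_i\delta_i \le 4\|\hat y - y^*\|_{b,1}$. Finally, $\|\hat y' - y^*\|_{b,1} \le \|\hat y' - \hat y\|_{b,1} + \|\hat y - y^*\|_{b,1} \le 5\|\hat y - y^*\|_{b,1}$, and the overall running time is $O(m\log m + n)$.

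\textbf{Main obstacle.} The only genuinely delicate step is verifying that the integral rounding preserves the covering constraints. This hinges on two points: that each $r_e$ is an integer, and that any edge with $r_e \ge 1$ must have an endpoint whose fractional $\delta$-value is at least $\tfrac12$, so that doubling and flooring the larger endpoint value still dominates $r_e$. Everything else---the $2$-approximation of the fractional solution and the comparison to $y^*$---is just weak LP duality plus the triangle inequality, exactly as in the perfect-matching case, with the extra factor of $2$ (and hence the $4$ and $5$ in place of $2$ and $3$) coming purely from the rounding.
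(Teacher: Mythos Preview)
Your proposal is correct and follows essentially the same route as the paper: cast the projection as the weighted covering LP~\eqref{eqn:mwbm_dist_to_feas}, obtain a fractional $2$-approximation via the greedy $b$-matching on the dual LP~\eqref{eqn:mwbm_dist_to_feas_dual} (invoking the two lemmas already proved in the paper), apply the $\lfloor 2\delta_i\rfloor$ rounding to lose another factor of $2$, and finish with the triangle inequality for $\|\cdot\|_{b,1}$. Your rounding-feasibility argument (the larger endpoint alone covers $r_{ij}$ after rounding) is in fact a slight streamlining of the paper's case analysis, and you make explicit the step that $\delta^\circ_i = \max\{0,\hat y_i - y^*_i\}$ witnesses the LP optimum being at most $\|\hat y - y^*\|_{b,1}$, which the paper leaves implicit.
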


\subsection{Converting a Feasible Dual Solution to an Optimal Primal Solution}

Now we consider taking a feasible dual $y$ and moving to an optimal solution for the $b$-matching problem.  The algorithm we use is a simple primal-dual scheme that generalizes Algorithm~\ref{alg:mpwm-pd}.  See Algorithm~\ref{alg:mwbm-pd} for details.  Below we give a brief analysis of this algorithm.  The objective is to establish a running time in terms of the following distance $\|y^*-y\|_{b,1} := \sum_i b_i |y^*_i - y_i|$.  One can view this distance as the $\ell_1$ norm distance where each coordinate axis is given a different level of importance by the $b_i$ values.

\begin{algorithm} 
\caption{\label{alg:mwbm-pd}Simple Primal-Dual Scheme for MWBM}
\begin{algorithmic}[1]
\Procedure{MWBM-PrimalDual}{$G = (V,E),c,y$}
\State $E' \gets \{ ij \in E \mid y_i + y_j = c_{ij}$ \}
\Comment{Set of tight edges in the dual}
\State $G' \gets (L\cup R \cup \{s,t\}, E' \cup \{ si \mid i \in L\} \cup \{jt \mid j \in R\})$ \Comment{Network of tight edges}
\State $\forall e  \in E(G')$ s.t. $e = si$ or $e= it$, $u_e \gets b_i$ 
\State $u_e \gets \infty$ for all other edges of $G'$
\State $f \gets $ Maximum $s-t$ flow in $G'$ with capacities $u$
\While{Value of $f$ is $< \sum_{i \in L} b_i$}
\State Find a set $S \subseteq L$ such that $\sum_{i \in S} b_i > \sum_{j \in \Gamma(S)} b_j$ \Comment{Exists by Lemma~\ref{lem:b-matching-augment-set}}
\Statex \Comment{Can be found in $O(m+n)$ time}
\State $\epsilon \gets \min_{i \in S,j \in R\setminus \Gamma(S)} \{ c_{ij} - y_i - y_j \}$
\State $\forall i \in S$, $y_i \gets y_i + \eps$
\State $\forall j \in \Gamma(S)$, $y_j \gets y_j - \eps$
\State Update $E',G',u$
\State $f \gets $ Maximum $s-t$ flow in $G'$ with capacities $u$
\EndWhile
\State $x \gets f$ restricted to edges of $G$
\State Return $x$
\EndProcedure
\end{algorithmic}
\end{algorithm}

First we consider the correctness of the algorithm.  As before, we need to show that the update rule is well defined.  The following is a well known generalization of Hall's theorem, showing that line 8 is well defined. Further, the step can be implemented efficiently given $f$. The proof closely follows that of Proposition~\ref{prop:mpwm-s-exist} -- the only difference is factoring $b$ in the matching size and vertex cover size.

\begin{proposition} \label{lem:b-matching-augment-set} 
Let $G'$ be the flow network defined in Algorithm~\ref{alg:mwbm-pd} with capacities $\rho$ and let $f$ be the maximum $s-t$ flow in $G'$  if the value of $f$ is less than $\sum_{i \in L} b_i$ then there exists $S \subseteq L$ such that $\sum_{i \in S} b_i > \sum_{j \in \Gamma(S)} b_j$. Further, such $S$ can be found in $O(m+n)$ time. 
\end{proposition}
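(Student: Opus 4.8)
The plan is to prove this exactly as in the bipartite case (Proposition~\ref{prop:mpwm-s-exist}), replacing ``matching size'' and ``vertex cover size'' by their $b$-weighted analogues. Rather than invoking the K\"onig-type matching/cover duality, I would argue directly through the max-flow min-cut theorem, which is convenient here since Algorithm~\ref{alg:mwbm-pd} has already computed a maximum $s$–$t$ flow $f$ in $G'$, and the desired set will essentially be read off from a minimum cut.

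First I would apply max-flow min-cut: since the value of $f$ is strictly less than $\sum_{i\in L} b_i$, there is an $st$-cut $(A,B)$ with $s\in A$, $t\in B$, of capacity $< \sum_{i\in L} b_i$. The key structural point is that every tight (``internal'') edge $ij\in E'$ has capacity $\infty$ in $G'$, so a finite-capacity cut crosses none of them; hence if $i\in L\cap A$ then every tight neighbor $j$ of $i$ also lies in $A$. Setting $S := L\cap A$, this says $\Gamma(S)\subseteq R\cap A$, where $\Gamma(S)$ denotes the neighborhood in the tight-edge graph $G'$ (the same set $\Gamma(S)$ used in lines~9--11 of the algorithm). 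Note also $S\neq\emptyset$: if $S=\emptyset$ then all source edges $si$ would be cut, forcing the cut capacity to be at least $\sum_{i\in L} b_i$, a contradiction.

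The second step is a short counting argument on the cut capacity. The edges crossing $(A,B)$ are precisely the source edges $si$ with $i\in L\setminus S$ and the sink edges $jt$ with $j\in R\cap A$, so the capacity equals $\sum_{i\in L\setminus S} b_i + \sum_{j\in R\cap A} b_j = \sum_{i\in L} b_i - \sum_{i\in S} b_i + \sum_{j\in R\cap A} b_j$. Since this is $< \sum_{i\in L} b_i$, we get $\sum_{j\in R\cap A} b_j < \sum_{i\in S} b_i$, and therefore $\sum_{j\in\Gamma(S)} b_j \le \sum_{j\in R\cap A} b_j < \sum_{i\in S} b_i$, which is exactly the claimed inequality. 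For the running-time claim, recall that given the maximum flow $f$ the minimum-cut side $A$ is simply the set of vertices reachable from $s$ in the residual network of $f$, which a single BFS/DFS computes in $O(m+n)$ time; outputting $S = L\cap A$ then also takes $O(m+n)$.

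The only real obstacle is the bookkeeping around the infinite-capacity internal edges: this is what guarantees the minimum cut is ``clean'' (touching only source and sink edges) and what pins down $\Gamma(S)\subseteq R\cap A$ with the correct inequality direction. Once that observation is in place, the remainder is just the same $b$-weighted counting that replaces vertex counts in Proposition~\ref{prop:mpwm-s-exist}. (Alternatively one could state and apply a deficiency version of Hall's theorem for degree-constrained bipartite subgraphs, but the flow formulation avoids re-deriving it and directly reuses the flow the algorithm already maintains.)
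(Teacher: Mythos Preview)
Your argument is correct. The paper itself only sketches the proof, saying it ``closely follows that of Proposition~\ref{prop:mpwm-s-exist}'' with $b$-weights replacing vertex counts; that route would invoke a $b$-weighted K\"onig-type duality (max $b$-matching $=$ min $b$-vertex-cover), take $S = L \setminus C$ for a minimum $b$-cover $C$, and mimic the counting in Proposition~\ref{prop:mpwm-s-exist}. You instead go directly through max-flow min-cut: read off the $s$-side $A$ of a minimum cut in $G'$, set $S = L \cap A$, and use the infinite capacities on tight edges to force $\Gamma(S) \subseteq R \cap A$, after which the inequality drops out of the cut-capacity bound. The two arguments are of course equivalent at the level of LP duality, but yours is arguably the more natural fit here since Algorithm~\ref{alg:mwbm-pd} already maintains a maximum flow $f$, so the min-cut side $A$ is immediately available via one BFS in the residual network---you don't need to separately state or justify a weighted K\"onig theorem or explain how to extract a minimum $b$-cover from $f$. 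Either way the $O(m+n)$ running-time claim is established in the same manner.
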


The following is analogous to Proposition~\ref{prop:mwpm_eps} in Section~\ref{sec:mwpm}.

\begin{proposition}
Let $y$ be dual feasible and suppose that $S \subseteq L$ with $\sum_{i \in S} b_i > \sum_{j \in \Gamma(S)} b_j$ in $G'$.  Let $\epsilon = \min_{i \in S,j \in R\setminus \Gamma(S)} \{ c_{ij} - y_i - y_j \}$.  Then as long as $c$ and $y$ are integers we have $\epsilon \geq 1$.
\end{proposition}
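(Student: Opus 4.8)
The plan is to adapt, essentially verbatim, the two-case argument from the proof of Proposition~\ref{prop:mwpm_eps}, with vertex cardinalities replaced by the $b$-weighted sizes $\sum_{i\in S} b_i$ and $\sum_{j\in\Gamma(S)} b_j$. Throughout, $\Gamma(S)$ denotes the neighborhood of $S$ in the tight-edge graph $G'$, not in the full graph $G$; making this distinction explicit is the only point that needs any care.

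First I would argue that every edge $ij$ that appears in the minimization defining $\eps$ satisfies $c_{ij} - y_i - y_j \geq 1$. Such an edge has $i \in S$ and $j \in R \setminus \Gamma(S)$. Since $j \notin \Gamma(S)$, the edge $ij$ is not tight in $G'$, i.e., $ij \notin E'$, so $y_i + y_j \neq c_{ij}$; combined with dual feasibility $y_i + y_j \leq c_{ij}$ this gives $c_{ij} - y_i - y_j > 0$. As $c$ and $y$ are integral, $c_{ij} - y_i - y_j$ is a positive integer, hence at least $1$, and taking the minimum over all such edges yields $\eps \geq 1$.

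Second, I would rule out the degenerate possibility that there is no edge of $G$ between $S$ and $R \setminus \Gamma(S)$ at all, which would make the minimum range over an empty set. In that case the neighborhood of $S$ in the full graph $G$ coincides with $\Gamma(S)$, so $\sum_{j \in \Gamma_G(S)} b_j = \sum_{j \in \Gamma(S)} b_j < \sum_{i \in S} b_i$. By the generalized Hall condition (equivalently, this exhibits an $s$-$t$ cut in the flow network of Algorithm~\ref{alg:mwbm-pd} with capacity strictly below $\sum_{i \in L} b_i$), no perfect $b$-matching exists, contradicting our standing assumption that \eqref{eqn:mwbm_lp} is feasible. Hence this case cannot occur, $\eps$ is well-defined, and $\eps \geq 1$. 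I do not anticipate any genuine obstacle here; apart from the $G$ versus $G'$ bookkeeping, the argument is a direct transcription of the perfect-matching case with $b$-weighted sizes in place of cardinalities.
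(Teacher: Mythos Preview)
Your proposal is correct and matches the paper's approach exactly: the paper states this proposition without a separate proof, noting only that it is analogous to Proposition~\ref{prop:mwpm_eps}, whose two-case argument (edges in the minimum are non-tight hence have integer slack at least $1$; an empty minimum would witness primal infeasibility) you have faithfully transcribed with $b$-weighted sizes in place of cardinalities. The explicit $G$ versus $G'$ bookkeeping you include is the only added care needed, and you handle it correctly.
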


Additionally, we need to establish that $y$ remains feasible throughout the execution of the algorithm.  This is nearly identical to the corresponding lemma in Section~\ref{sec:mwpm} so we state it as the following lemma without proof.

\begin{lemma} \label{lem:mwbm_dual_always_feas}
If Algorithm~\ref{alg:mwbm-pd} is given an initial dual feasible $y$, then $y$ remains dual feasible throughout its execution.
\end{lemma}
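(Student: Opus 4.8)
The plan is to mirror the proof of Lemma~\ref{lem:dual_always_feas} almost verbatim. The crucial observation is that the dual constraints of \eqref{eqn:mwbm_dual} are exactly $y_i + y_j \le c_{ij}$ for $ij \in E$, which do not involve the demands $b$; hence the demands play no role in maintaining feasibility (they only affect which set $S$ is chosen in line~8 and therefore the objective progress). I would argue by induction on the number of iterations of the while-loop, where the only variable that changes is $y$ (recomputing $E'$ and $G'$ in line~12 clearly cannot affect whether $y$ satisfies the dual constraints). The base case is the hypothesis that the initial $y$ is dual feasible, so it suffices to show that a single pass through lines~10--11 preserves dual feasibility.

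Fix an iteration; let $S \subseteq L$ be the set found in line~8 (which exists by Proposition~\ref{lem:b-matching-augment-set}), let $\Gamma(S)$ be its neighborhood in the current tight-edge graph $G'$, let $\eps$ be as in line~9, and let $y'$ denote the dual obtained after lines~10--11. First, $\eps$ is well-defined, finite, and in fact $\eps \ge 1$ by the $b$-matching analog of Proposition~\ref{prop:mwpm_eps}: every edge $ij$ with $i \in S$ and $j \in R \setminus \Gamma(S)$ has $ij \notin E'$, so $c_{ij} - y_i - y_j \ge 1$ by integrality of $c$ and $y$; and at least one such edge exists, for otherwise the neighborhood of $S$ in $G$ would coincide with $\Gamma(S)$ and then $\sum_{i \in S} b_i > \sum_{j \in \Gamma(S)} b_j$ would witness infeasibility of \eqref{eqn:mwbm_lp}, contradicting the standing assumption that the primal is feasible.

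Now take any edge $ij \in E$ with $i \in L$, $j \in R$, and do the four-way case analysis on whether $i \in S$ and whether $j \in \Gamma(S)$. If $i \notin S$ and $j \notin \Gamma(S)$, then $y'_i = y_i$, $y'_j = y_j$, so $y'_i + y'_j \le c_{ij}$ by the inductive hypothesis. If $i \notin S$ and $j \in \Gamma(S)$, then $y'_i + y'_j = y_i + (y_j - \eps) \le y_i + y_j \le c_{ij}$. If $i \in S$ and $j \in \Gamma(S)$, the two changes cancel: $y'_i + y'_j = (y_i + \eps) + (y_j - \eps) = y_i + y_j \le c_{ij}$. The only interesting case is $i \in S$ and $j \notin \Gamma(S)$: then $ij$ is not tight (otherwise $j$ would lie in the $G'$-neighborhood of $S$), so $ij$ is one of the edges over which the minimum defining $\eps$ in line~9 is taken, whence $\eps \le c_{ij} - y_i - y_j$ and therefore $y'_i + y'_j = (y_i + \eps) + y_j \le c_{ij}$. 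In every case $y'$ is dual feasible, which completes the induction.

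I do not anticipate a genuine obstacle; the only point needing care is the last case, where one must recall that $j \notin \Gamma(S)$ forces $ij \notin E'$, so that the slack on $ij$ is indeed one of the quantities bounding $\eps$ from above — together with the (already-cited) fact that at least one such edge exists, which is what guarantees $\eps$ is finite and positive. Everything else is a direct transcription of the matching argument, with $b$ appearing nowhere in the feasibility check.
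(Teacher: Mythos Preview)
Your proof is correct and is exactly the approach the paper takes: the paper states the lemma \emph{without proof}, noting it is ``nearly identical'' to Lemma~\ref{lem:dual_always_feas}, and your argument is precisely the four-case analysis from that lemma carried over to the $b$-matching setting. Your added remark that the dual constraints $y_i+y_j\le c_{ij}$ do not involve $b$ cleanly explains why the transcription goes through unchanged.
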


The above statements can be combined to give the following theorem.

\begin{theorem}
There exists an algorithm for minimum weight perfect $b$-matching in bipartite graphs which runs in time $O(nm\|y^*-y\|_{b,1})$, where $y^*$ is an optimal dual solution and $y$ is the initial dual feasible solution passed to the algorithm.
\end{theorem}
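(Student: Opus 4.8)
The plan is to mirror the analysis of Theorem~\ref{thm:PD-main}, now tracking the $b$-weighted dual objective $\sum_{i\in V} b_i y_i$ of \eqref{eqn:mwbm_dual} in place of $\sum_i y_i$. First I would establish correctness of Algorithm~\ref{alg:mwbm-pd}. Lemma~\ref{lem:mwbm_dual_always_feas} asserts that $y$ remains feasible for \eqref{eqn:mwbm_dual} throughout the execution; Proposition~\ref{lem:b-matching-augment-set} guarantees that whenever the flow value is below $\sum_{i\in L} b_i$ the required set $S$ exists and is computable from $f$ in $O(m+n)$ time; and the proposition following it (the $b$-matching analogue of Proposition~\ref{prop:mwpm_eps}) gives $\eps\geq 1$ and, in particular, that $R\setminus\Gamma(S)$ is nonempty so that $\eps$ is well defined. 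Hence every step of the while loop is well defined. When the loop terminates, the maximum flow $f$ has value $\sum_{i\in L} b_i$; as all capacities are integral we may take $f$ integral, and the capacities $u_{si}=b_i$ (for $i\in L$) and $u_{jt}=b_j$ (for $j\in R$) together with flow conservation force the restriction $x$ of $f$ to $E(G)$ to satisfy $\sum_{e\in\delta(i)} x_e = b_i$ for every $i\in V$ — that is, $x$ is a perfect $b$-matching supported only on tight edges $E'$. Therefore $x$ and $y$ satisfy complementary slackness for the pair \eqref{eqn:mwbm_lp}/\eqref{eqn:mwbm_dual}, and Lemma~\ref{lem:comp_slackness} certifies that $x$ is a minimum weight perfect $b$-matching.

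Next I would bound the number of iterations. In a single iteration only the coordinates in $S$ and $\Gamma(S)$ move: $y_i \mapsto y_i+\eps$ for $i\in S$ and $y_j\mapsto y_j-\eps$ for $j\in\Gamma(S)$, and all other coordinates are unchanged. Thus the dual objective changes by exactly $\eps\bigl(\sum_{i\in S} b_i - \sum_{j\in\Gamma(S)} b_j\bigr)$, which is at least $\eps\geq 1$ since $\sum_{i\in S} b_i > \sum_{j\in\Gamma(S)} b_j$ and both sides are integers. Since $y$ stays feasible, its objective never exceeds the optimal dual value $\sum_{i\in V} b_i y^*_i$, so the number of iterations is at most $\sum_{i\in V} b_i y^*_i - \sum_{i\in V} b_i y_i \leq \sum_{i\in V} b_i |y^*_i - y_i| = \|y^*-y\|_{b,1}$.

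Finally I would account for the per-iteration cost. The network $G'$ has $O(n)$ vertices and $O(m)$ edges, so a maximum $s-t$ flow can be computed in $O(nm)$ time using the algorithm of Orlin~\cite{Orlin} (or King, Rao, and Tarjan~\cite{KRT}); the remaining bookkeeping — locating $S$ from $f$ and rebuilding $E'$, $G'$, $u$ — costs $O(m+n)$. Multiplying the per-iteration cost $O(nm)$ by the iteration bound $\|y^*-y\|_{b,1}$ yields the claimed running time $O(nm\,\|y^*-y\|_{b,1})$.

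I expect the only delicate point to be the termination step of the correctness argument: verifying that a maximum flow of value $\sum_{i\in L} b_i$ in the tight-edge network genuinely certifies a perfect (integral) $b$-matching rather than merely a fractional one, which relies on integrality of maximum flow under integral capacities and on the capacity pattern on the source and sink arcs; and checking that $R\setminus\Gamma(S)\neq\emptyset$ whenever such an $S$ exists — otherwise $\Gamma(S)=R$ would give $\sum_{i\in S} b_i > \sum_{j\in R} b_j = \sum_{i\in L} b_i \geq \sum_{i\in S} b_i$, a contradiction, where we used that primal feasibility forces $\sum_{i\in L} b_i = \sum_{j\in R} b_j$. Everything else is a direct transcription of the MWPM argument with the $b_i$ weights inserted.
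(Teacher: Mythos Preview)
Your proposal is correct and follows essentially the same route as the paper: correctness via dual feasibility plus complementary slackness on the tight-edge flow network, an iteration bound from the fact that each update raises the $b$-weighted dual objective by at least $1$, and an $O(nm)$ per-iteration max-flow cost. If anything, you supply more detail than the paper does (integrality of the terminating flow, nonemptiness of $R\setminus\Gamma(S)$), all of which is sound.
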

\begin{proof}
The correctness of the algorithm is implied by Lemma~\ref{lem:mwbm_dual_always_feas} and the fact that the flow network $G'$ ensures that the resulting solution $x$ that it finds satisfies complementary slackness with $y$.  Thus we just need to establish the running time.

Note that it suffices to bound the number of iterations in terms of $O(\|y^* - y\|_{b,1})$ since the most costly step of each iteration is finding the maximum flow in the network $G'$, which can be done in time $O(nm)$.  The two propositions above state that the net increase in the dual objective is always at least $1$, and so the number of iterations is at most $\sum_i b_i y^*_i - \sum_i b_i y_i \leq \sum_i b_i \|y^*_i - y_i\| = \|y^* - y\|_{b,1}$.
\end{proof}

This theorem, combined with Theorem~\ref{thm:b-matchfeasible}, gives Theorem~\ref{thm:b-matchmain}, as desired.

\subsection{Learning the Dual Prices}

In this section we extend the results from Section~\ref{sec:pm_learning} to the case of $b$-matching.  As before, we consider a graph with fixed demands $b$ and an unknown distribution $\cD$ over the edge costs $c$.  We are interested in learning a fixed set of prices $y$ which is in some sense best for this distribution.  Since the running time of the algorithms we consider depends on $\|y^* - y\|_{b,1}$ it is natural to choose this as our loss function with respect to the learning task.  Thus we define $L_b(y,c) = \|y- y^*(c)\|_{b,1}$, where again $y^*(c)$ is a fixed optimal dual vector for costs $c$.  Our goal is to perform well against the best choice for the distribution.  Formally, let $y^* := \arg\min_y \E_{c \sim \cD}[L_b(y,c)]$.  Additionally, let $C$ be a bound on the edge costs and $B = \max_{i \in V} b_i$ be a bound on the demands.  We have the following result which is analogous to Theorem~\ref{thm:learning-PM-main}.

\begin{theorem} \label{thm:learning-bm-main}
There is an algorithm that after $s = O\left( \left( \frac{nCB}{\epsilon}\right)^2 (n\log n + \log(1/\rho) \right)$ samples returns integer dual values $\ypred$ such that $\E_{c \sim \cD}[L_b(\ypred,c)] \leq \E_{c \sim \cD} [L(y^*,c] + \epsilon$ with probability at least $1-\rho$.  The algorithm runs in time polynomial in $n,m$ and $s$.
\end{theorem}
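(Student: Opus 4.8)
The plan is to mirror the proof of Theorem~\ref{thm:learning-PM-main} almost verbatim, with the $b$-weighting carried through every step. First I would set up the learning-theoretic machinery exactly as before: for each dual assignment $y \in \R^V$ define $g_y : \R^E \to \R$ by $g_y(c) = L_b(y,c) = \|y - y^*(c)\|_{b,1}$ and let $\cH_b = \{g_y \mid y \in \R^V\}$. By the uniform convergence theorem (Theorem~\ref{thm:uniform_convergence}) together with its corollary (Corollary~\ref{cor:erm}), it suffices to (i) bound the pseudo-dimension of $\cH_b$, (ii) bound the range $H$ of functions in $\cH_b$ on the domain we care about, and (iii) give a polynomial-time empirical risk minimizer. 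As in the unweighted case, I restrict attention to $y \in [-C,C]^V$ without loss of generality, since $y^*(c)$ lies in this box for every cost vector with edge costs bounded by $C$.

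For the range bound, each coordinate $j$ contributes $b_j |y_j - y^*_j(c)| \le B \cdot O(C)$ to $L_b(y,c)$, so summing over the $n$ coordinates gives $H = O(nCB)$; this is the source of the extra factor of $B$ in the sample complexity relative to Theorem~\ref{thm:learning-PM-main}. For the pseudo-dimension, the key observation is that the $b$-weighted $\ell_1$ ``ball'' $\{z : \sum_j b_j |z_j - x_j| \le r\}$ is still a polytope with at most $2^n$ facets, each a halfspace, and the axis-perpendicular hyperplanes through each $x^i$ still cut $\R^n$ into cells within which each such weighted ball restricts to a single halfspace. Concretely, I would either (a) reprove the analogue of Theorem~\ref{thm:pseudo1} for the class $\cH_{n,b} = \{f_y \mid f_y(x) = \|y-x\|_{b,1}\}$ by copying Lemmas~\ref{lem:cell-region-intersection} and~\ref{lem:regions-per-cell} with $|x^i_j - y_j|$ replaced by $b_j|x^i_j - y_j|$ (the signs, and hence the cell decomposition, are unchanged by positive scaling), or (b) observe that when all $b_i > 0$ the map $z \mapsto (b_j z_j)_j$ is a diagonal linear isomorphism taking weighted $\ell_1$ balls to ordinary $\ell_1$ balls, so shattering by $\cH_{n,b}$ is equivalent to shattering by $\cH_n$ and Theorem~\ref{thm:pseudo1} applies directly, giving pseudo-dimension $O(n\log n)$; the reduction lemma analogous to Lemma~\ref{lem:expanded-class-pd} then transfers this to $\cH_b$. (One must handle $b_i = 0$ coordinates, but those simply drop out of the loss and can be ignored.) Plugging $d_\cF = O(n\log n)$ and $H = O(nCB)$ into Theorem~\ref{thm:uniform_convergence} yields the stated sample bound $s = O\big((nCB/\eps)^2(n\log n + \log(1/\rho))\big)$.

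For the ERM step, the empirical loss $\frac1s \sum_{i=1}^s \|y - y^*(c_i)\|_{b,1}$ decouples across coordinates: for each $j$ we minimize $b_j \sum_{i=1}^s |y_j - x^i_j|$ where $x^i = y^*(c_i)$, and since $b_j$ is a fixed positive constant this is minimized (over the integers, recalling $y^*(c_i)$ is integral) at the coordinate-wise median $y_j = \med(x^1_j, \dots, x^s_j)$, exactly as in the unweighted case. Computing the $s$ optimal duals requires $s$ min-cost $b$-matching solves (polynomial in $n,m$) and the median is trivial, so the total running time is polynomial in $n,m,s$.

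I expect the only real subtlety — and the thing worth stating carefully rather than hand-waving — is the pseudo-dimension argument, specifically making precise that the weighted-$\ell_1$ balls behave like the unweighted ones for the purposes of the cell decomposition. The cleanest route is the diagonal-rescaling reduction in (b) above, which makes the transfer to Theorem~\ref{thm:pseudo1} immediate and avoids rewriting Lemmas~\ref{lem:cell-region-intersection}–\ref{lem:regions-per-cell}; the rescaling argument only needs the $b_i$ to be strictly positive, which can be assumed after discarding zero-demand vertices. Everything else is a routine substitution of $\|\cdot\|_{b,1}$ for $\|\cdot\|_1$ and of $O(nCB)$ for $O(nC)$ in the range bound.
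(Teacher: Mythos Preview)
Your proposal is correct and matches the paper's approach almost exactly: the paper also reduces to Theorem~\ref{thm:uniform_convergence} via the range bound $H = O(nCB)$, the coordinate-wise median ERM, and a pseudo-dimension bound obtained precisely by your route (b), i.e., the diagonal rescaling $\phi(y)_i = b_i y_i$ that sends $\|\cdot\|_{b,1}$-balls to $\|\cdot\|_1$-balls (the paper's Lemma~\ref{lem:lb1_to_l1}), followed by the analogue of Lemma~\ref{lem:expanded-class-pd} and an appeal to Theorem~\ref{thm:pseudo1}. Your handling of the $b_i = 0$ case is a small extra care the paper omits.
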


At a high level, we can prove this theorem by again applying Theorem~\ref{thm:uniform_convergence} and Corollary~\ref{cor:erm}.  To do this we define the following family of functions $\cH_{b} = \{ g_y \mid y \in \R^V\}$ where $g_y = \|y-y^*(c)\|_{b,1}$.  We need to verify the following: (1) the range of these functions are bounded in $[0,H]$ for some $H = O(nCB)$, (2) minimizing the empirical loss can be done efficiently, and (3) the pseudo-dimension of $\cH_b$ is bounded by $O(n \log n)$.  Applying similar arguments as in Sections~\ref{sec:learning_details_range} and~\ref{sec:learning_details_erm} give us the first two points.  Here we focus on the last point, bounding the pseudo-dimension.

Note that for $b\in \R_+^n$, $\|\cdot \|_{b,1}$ is a norm.  Intuitively, the geometry induced by $\| \cdot \|_{b,1}$ is the same as the geometry induced by $\|\cdot\|_{1}$ except some axes are stretched by an appropriate amount.  This should imply that the functions in $\cH_b$ should not be more complicated than the functions in $\cH$.  We make this intuition more formal by arguing that we can map from one setting to the other while preserving membership in the respective balls induced by these norms.  The following key lemma will imply that the pseudo-dimension of $\cH_b$ is no larger than the pseudo-dimension of $\cH$.

\begin{lemma} \label{lem:lb1_to_l1}
Let $B_{b,1}(x,r) = \{ y \mid \|x-y\|_{b,1} \leq r\}$ and $B_1(x,r) = \{ y \mid \|x-y\|_{1} \leq r \}$ be the balls of radius $r$ under each norm, respectively.  There is a mapping $\phi: \R^n \to \R^n$ such that $y \in B_{b,1}(x,r)$ if and only if $\phi(y) \in B_{1}(\phi(x),r)$.
\end{lemma}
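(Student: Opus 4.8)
The plan is to exhibit $\phi$ explicitly as a coordinate-wise rescaling. Since $\|z\|_{b,1} = \sum_i b_i|z_i|$ is nothing but the $\ell_1$ norm with each coordinate reweighted by the nonnegative factor $b_i$, the natural candidate is the linear map $\phi(z) = (b_1 z_1, b_2 z_2, \ldots, b_n z_n)$, i.e., multiplication by the diagonal matrix with entries $b_1,\dots,b_n$.

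First I would verify the norm identity $\|\phi(x) - \phi(y)\|_1 = \|x - y\|_{b,1}$ for all $x,y \in \R^n$. This is immediate: $\|\phi(x) - \phi(y)\|_1 = \sum_{i=1}^n |b_i x_i - b_i y_i| = \sum_{i=1}^n b_i |x_i - y_i| = \|x - y\|_{b,1}$, where the middle equality pulls $b_i$ out of the absolute value using $b_i \geq 0$. The claimed equivalence then follows directly from the definitions of the balls: $y \in B_{b,1}(x,r)$ means $\|x-y\|_{b,1} \leq r$, which by the identity holds iff $\|\phi(x)-\phi(y)\|_1 \leq r$, i.e., iff $\phi(y) \in B_1(\phi(x),r)$.

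There is essentially no obstacle here; the only point needing a moment of care is the degenerate case $b_i = 0$, but the identity above is unaffected (such a coordinate contributes $0$ to both sides), so $\phi$ need not be invertible for the stated biconditional to hold. (In our setting $b \in \Z_+^V$, so in fact every $b_i \geq 1$ and $\phi$ is even a bijection, though this is not needed.) With the lemma in hand, a set $\hat S$ shattered by $\cH_b$ with witnesses $r_1,\dots,r_s$ maps under $\phi$ to a set $\phi(\hat S)$ shattered by $\cH_n$ with the \emph{same} witnesses $r_1,\dots,r_s$, so the pseudo-dimension of $\cH_b$ is at most that of $\cH_n$, and Theorem~\ref{thm:pseudo1} yields the desired $O(n\log n)$ bound.
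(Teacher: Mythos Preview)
Your proof is correct and essentially identical to the paper's: both define $\phi(z)_i = b_i z_i$ and verify the one-line identity $\|\phi(x)-\phi(y)\|_1 = \sum_i |b_i x_i - b_i y_i| = \sum_i b_i|x_i - y_i| = \|x-y\|_{b,1}$. Your additional remarks on the degenerate $b_i=0$ case and on how the lemma transfers shattering to $\cH_n$ are fine and match the paper's subsequent use in Lemma~\ref{lem:pseudo-dimension_b_matching}.
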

\begin{proof}
Define $\phi(y)_i = b_iy_i$ for $i = 1,2,\ldots,n$.  Now we have the following which implies the lemma.
\[
\begin{split}
    \|x-y\|_{b,1} &= \sum_i b_i|x_i -y_i| = \sum_i |b_ix_i - b_iy_i| \\&= \|\phi(x) - \phi(y)\|_1
\end{split}
\]
Thus one of these is at most $r$ if and only if the other is.
\end{proof}

Now define the family of functions $\cH_{b,n} = \{f_y :\R^n \to \R \mid y \in \R^n, f_y(x) = \|y-x\|_{b,1}\}$, we have the following which is analogous to Lemma~\ref{lem:expanded-class-pd}.

\begin{lemma} \label{lem:expanded_class_pdim_bm}
The pseudo-dimension of $\cH_b$ is at most the pseudo-dimension of $\cH_{b,n}$
\end{lemma}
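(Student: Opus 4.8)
The plan is to mimic the proof of Lemma~\ref{lem:expanded-class-pd} verbatim, replacing $\|\cdot\|_1$ by $\|\cdot\|_{b,1}$ throughout; the argument never used any property of $\|\cdot\|_1$ beyond the identity relating the two function classes. Concretely, the key observation is that for every $y \in \R^V$ and every cost vector $c$ we have $g_y(c) = \|y - y^*(c)\|_{b,1} = f_y(y^*(c))$, where $f_y \in \cH_{b,n}$ is the map $x \mapsto \|y-x\|_{b,1}$. Thus evaluating a function of $\cH_b$ at a cost vector $c$ is exactly the same as evaluating the corresponding function of $\cH_{b,n}$ at the point $y^*(c) \in \R^n$. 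I would prove the contrapositive: given a set of cost vectors shattered by $\cH_b$, I produce a set of points in $\R^n$ of the same size shattered by $\cH_{b,n}$.

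In detail: suppose $S = \{c_1,\dots,c_s\}$ is shattered by $\cH_b$, witnessed by thresholds $r_1,\dots,r_s \in \R$, so that for every $S' \subseteq [s]$ there is some $y_{S'} \in \R^V$ with $g_{y_{S'}}(c_i) \le r_i \iff i \in S'$, i.e.\ $\|y_{S'} - y^*(c_i)\|_{b,1} \le r_i \iff i \in S'$. Set $\hat S = \{y^*(c_1),\dots,y^*(c_s)\} \subseteq \R^n$ and keep the same thresholds $r_1,\dots,r_s$. For any $S' \subseteq [s]$, the function $f_{y_{S'}} \in \cH_{b,n}$ satisfies $f_{y_{S'}}(y^*(c_i)) = \|y_{S'} - y^*(c_i)\|_{b,1} = g_{y_{S'}}(c_i)$, hence $f_{y_{S'}}(y^*(c_i)) \le r_i \iff i \in S'$. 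Therefore $\hat S$ is shattered by $\cH_{b,n}$, and its cardinality equals $|S|$, giving $\mathrm{pdim}(\cH_b) \le \mathrm{pdim}(\cH_{b,n})$.

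There is essentially no hard step here; the only point requiring a word of care is that $\hat S$ genuinely contains $s$ distinct points. This follows automatically from shattering: if $y^*(c_i) = y^*(c_j)$ for some $i \ne j$, then taking $S' = \{i\}$ forces $\|y_{S'} - y^*(c_i)\|_{b,1} \le r_i$ and simultaneously $\|y_{S'} - y^*(c_i)\|_{b,1} > r_j$, so $r_j < r_i$, and by symmetry (using $S' = \{j\}$) also $r_i < r_j$, a contradiction; so distinctness is free, exactly as in Lemma~\ref{lem:expanded-class-pd}. Note finally that this lemma does \emph{not} itself invoke Lemma~\ref{lem:lb1_to_l1}; that lemma is used in the subsequent step, where the volume-preserving change of variables $\phi$ transfers the $O(n\log n)$ pseudo-dimension bound of $\cH_n$ (Theorem~\ref{thm:pseudo1}) over to $\cH_{b,n}$, and hence to $\cH_b$ by the inequality proved here.
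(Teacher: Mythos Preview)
Your proposal is correct and follows exactly the approach the paper takes: the paper's proof simply says it is ``nearly identical to that of Lemma~\ref{lem:expanded-class-pd} but with $\|\cdot\|_1$ replaced with $\|\cdot\|_{b,1}$,'' which is precisely what you carried out in detail. Your extra paragraph verifying that the points $y^*(c_i)$ are distinct is a nice addition that the paper's original Lemma~\ref{lem:expanded-class-pd} glosses over, and your closing remark correctly locates where Lemma~\ref{lem:lb1_to_l1} actually enters (in Lemma~\ref{lem:pseudo-dimension_b_matching}, not here).
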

\begin{proof}
Nearly identical to that of Lemma~\ref{lem:expanded-class-pd} but with $\|\cdot\|_1$ replaced with $\|\cdot\|_{b,1}$.
\end{proof}

We can now prove that the pseudo-dimension of $\cH_b$ is bounded by $O(n \log n)$.

\begin{lemma} \label{lem:pseudo-dimension_b_matching}
The pseudo-dimension of $\cH_b$ is at most $O(n \log n)$.
\end{lemma}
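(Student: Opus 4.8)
The plan is to chain together the three ingredients already assembled. By Lemma~\ref{lem:expanded_class_pdim_bm} it suffices to bound the pseudo-dimension of the function class $\cH_{b,n}$, and I would in turn reduce this to the pseudo-dimension of $\cH_n$ --- which Theorem~\ref{thm:pseudo1} already bounds by $O(n\log n)$. The bridge is the coordinatewise rescaling $\phi$ of Lemma~\ref{lem:lb1_to_l1}, namely $\phi(v)_i = b_i v_i$, which identifies the $\|\cdot\|_{b,1}$-ball $B_{b,1}(x,r)$ with the $\|\cdot\|_1$-ball $B_1(\phi(x),r)$ for the \emph{same} radius $r$. Since $b \in \Z_+^V$ we may assume $b_i \ge 1$ for all $i$ (vertices with $b_i = 0$ contribute nothing to either norm and can be deleted), so $\phi$ is a bijection of $\R^n$.

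First I would take a set $P = \{x^1, \dots, x^s\} \subseteq \R^n$ shattered by $\cH_{b,n}$, witnessed by thresholds $r_1, \dots, r_s$, so that for every $S \subseteq [s]$ there is $y_S$ with $\|y_S - x^i\|_{b,1} \le r_i \iff i \in S$. Applying Lemma~\ref{lem:lb1_to_l1} to each pair $(x^i, y_S)$ with radius $r_i$ turns this into $\|\phi(y_S) - \phi(x^i)\|_1 \le r_i \iff i \in S$. Hence $\phi(P) = \{\phi(x^1), \dots, \phi(x^s)\}$ --- a set of $s$ distinct points, by injectivity of $\phi$ --- is shattered by $\cH_n$ using the identical thresholds $r_1, \dots, r_s$, with the shattering function for $S$ being $f_{\phi(y_S)} \in \cH_n$. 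This shows the pseudo-dimension of $\cH_{b,n}$ is at most that of $\cH_n$; feeding in Theorem~\ref{thm:pseudo1} and Lemma~\ref{lem:expanded_class_pdim_bm} then gives the $O(n \log n)$ bound on the pseudo-dimension of $\cH_b$.

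I do not expect any genuine obstacle: the whole proof is a single transfer of a shattered configuration through the ``isometry up to axis rescaling'' map $\phi$, with the thresholds carried over unchanged. The only details worth double-checking are that $\phi$ is a bijection (this is exactly why the reduction needs $b_i \ne 0$, which holds after pruning zero-demand vertices) and that, as asserted in Lemma~\ref{lem:lb1_to_l1}, membership in the source ball and membership in the target ball hold for the same radius --- so that the witnessing thresholds $r_i$ need no modification. An entirely parallel remark handles the range-boundedness and empirical-risk-minimization prerequisites (as noted in the paragraph preceding the lemma), so those do not enter this particular argument.
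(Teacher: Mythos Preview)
Your proposal is correct and follows essentially the same approach as the paper: chain Lemma~\ref{lem:expanded_class_pdim_bm}, then transfer a set shattered by $\cH_{b,n}$ through $\phi$ to a set shattered by $\cH_n$, and finish with Theorem~\ref{thm:pseudo1}. Your explicit check that $\phi$ is injective (and the pruning of zero-demand vertices) is a nice detail that the paper's proof glosses over.
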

\begin{proof}
By Lemma~\ref{lem:expanded_class_pdim_bm} we have that the pseudo-dimension of $\cH_b$ is at most $\cH_{b,n}$.  We now show that the pseudo-dimension of $\cH_{b,n}$ is at most the pseudo-dimension of $\cH_n$ using Lemma~\ref{lem:lb1_to_l1}.  Let $x^1,\ldots,x^k\in \R^n$ be given.  Now consider $y^j = \phi(x^j)$ for $j=1,\ldots,k$.  By Lemma~\ref{lem:lb1_to_l1} we can see that $x^1,\ldots,x^k$ are shattered by $\cH_{b,n}$ if and only if $y^1,\ldots,y^k$ are shattered by $\cH_n$.  Thus the pseudo-dimension of $\cH_{b,n}$ is at most $\cH_n$ and then the lemma follows by Theorem~\ref{thm:pseudo1}.
\end{proof}

}
\section{Conclusion and Future Work} \label{sec:conclusion}

In this work we showed how to use learned predictions to warm-start primal-dual algorithms for weighted matching problems to improve their running times.  
  We identified three key challenges of feasibility, learnability and optimization, for any such scheme, and showed that by working in the dual space we could give rigorous performance guarantees for each. Finally, we showed that our proposed methods are not only simpler, but also more efficient in practice. 
  
An immediate avenue for future work is to extend these results to other combinatorial optimization problems. The key ingredient is identifying an appropriate intermediate representation: it must be simple enough to be learnable with small sample complexity, yet sophisticated enough to capture the underlying structure of the problem at hand.

\bibliographystyle{plainnat}  
\bibliography{runtime_preds}

\appendix

\full{
\section{Additional Experimental Results} \label{sec:more_exp}

Here we present additional experimental results that were omitted from Section~\ref{sec:exp}.   First we present our results while looking at the running time as opposed to the number of primal dual iterations.

\subsection{Running Time}

Figure~\ref{fig:batch_runtime} gives running time results for the batch setting, while Figure  ~\ref{fig:online2_runtime} give the results for the online setting.  Finally, Figure~\ref{fig:batch_other_k_vals_runtime} looks at the clustering derived instances for other values of $k$.  We see similar performance improvements for Learned Duals against the standard Hungarian algorithm, showing that the impact of running Algorithm~\ref{alg:fast_approx_mwpm} to make the predicted duals feasible is minimal.

\begin{figure}[ht]
\begin{minipage}{.33\textwidth}
    \centering
    \includegraphics[width=\textwidth]{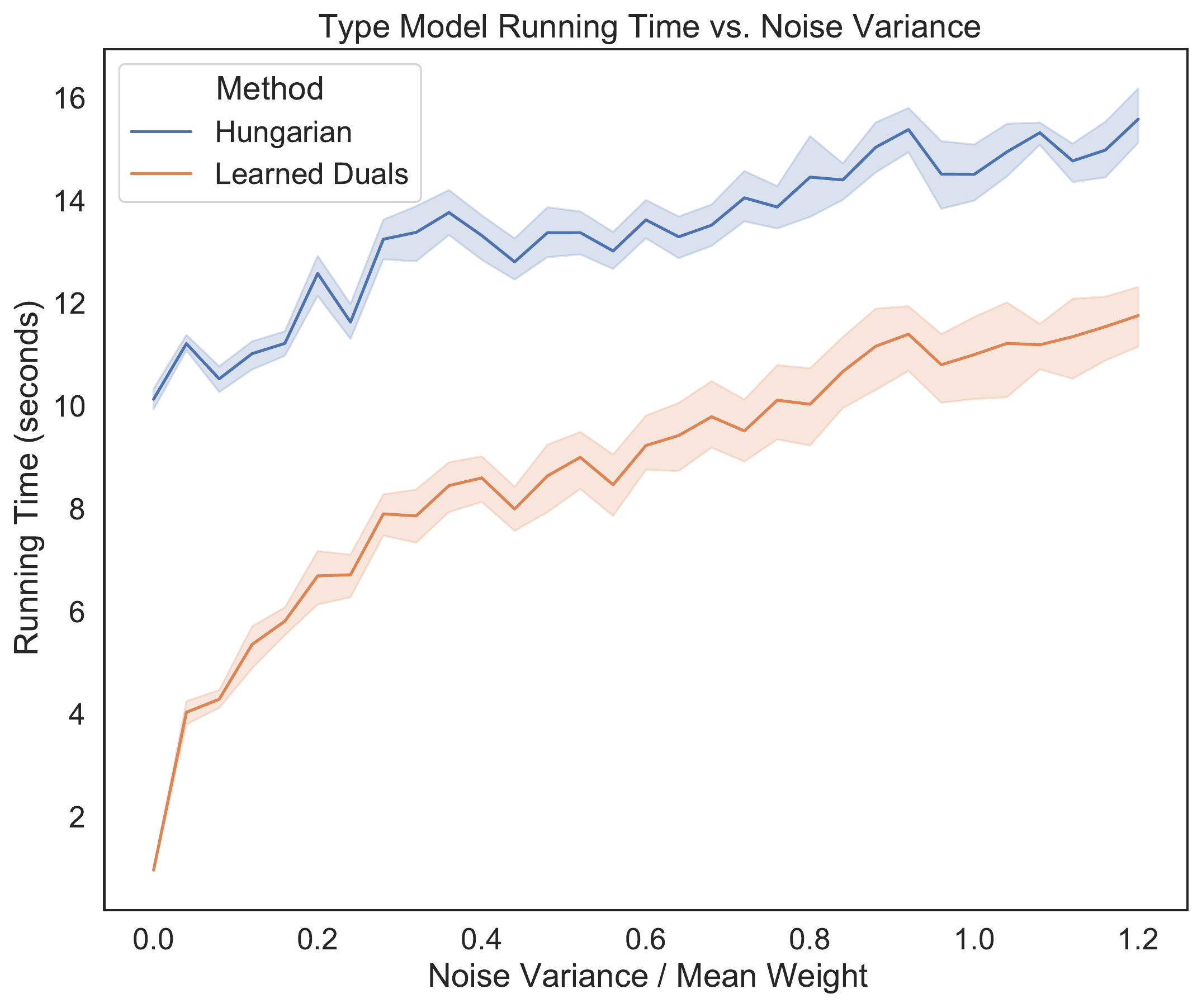}
    \label{fig:type_model_vary_noise_linear_runtime}
    \end{minipage}
\begin{minipage}{.33\textwidth}
    \centering
    \includegraphics[width=\textwidth]{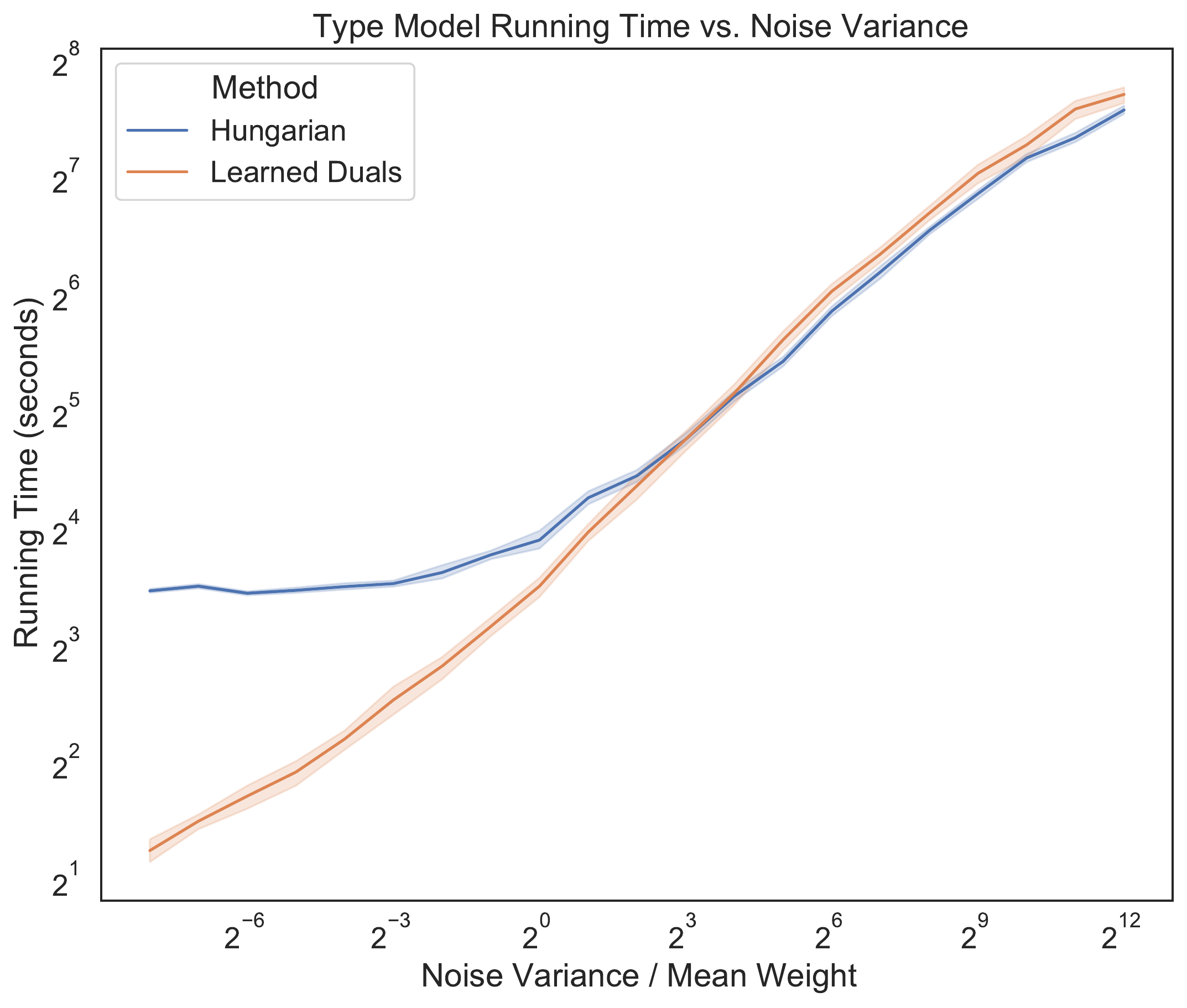}
    \label{fig:type_model_vary_noise_exponential_runtime}
    \end{minipage}
\begin{minipage}{.33\textwidth}
    \centering
   \includegraphics[width=\textwidth]{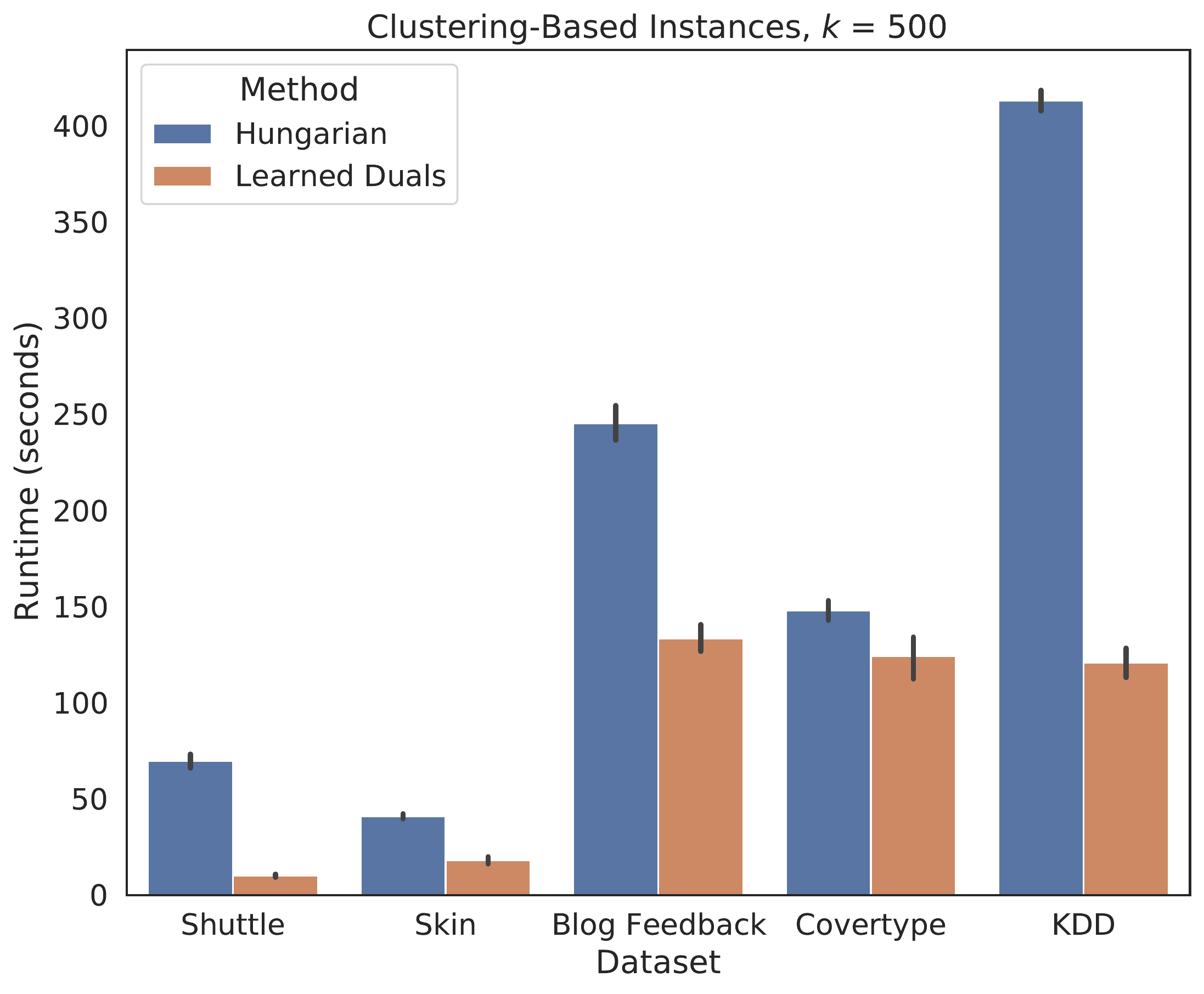}
    \end{minipage}
        \vspace{-.2cm}
        \caption{Running time results (in seconds) for the Batch setting.  The left figure gives the iteration count for the type model (synthetic data) versus linearly increasing $v$, while the middle geometrically increases $v$.  The right figure summarizes the results for clustering based instances (real data) in the batch setting. \label{fig:batch_runtime}  \vspace{-.2cm}}
\end{figure}

\begin{figure}[ht!]
\begin{minipage}{.33\textwidth}
    \centering
        \includegraphics[width=\textwidth]{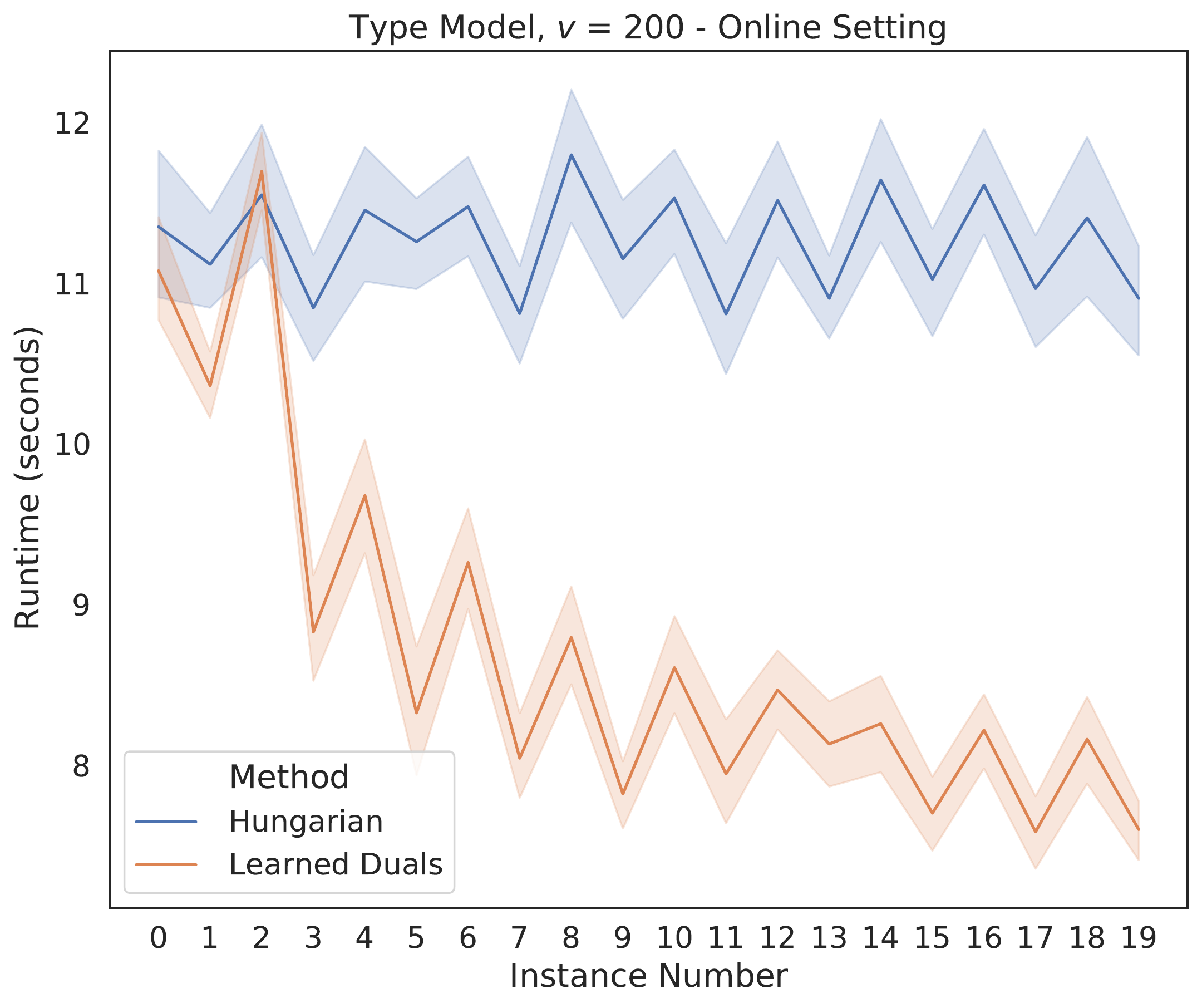}
\end{minipage}
\begin{minipage}{.33\textwidth}
    \centering
    \includegraphics[width=\textwidth]{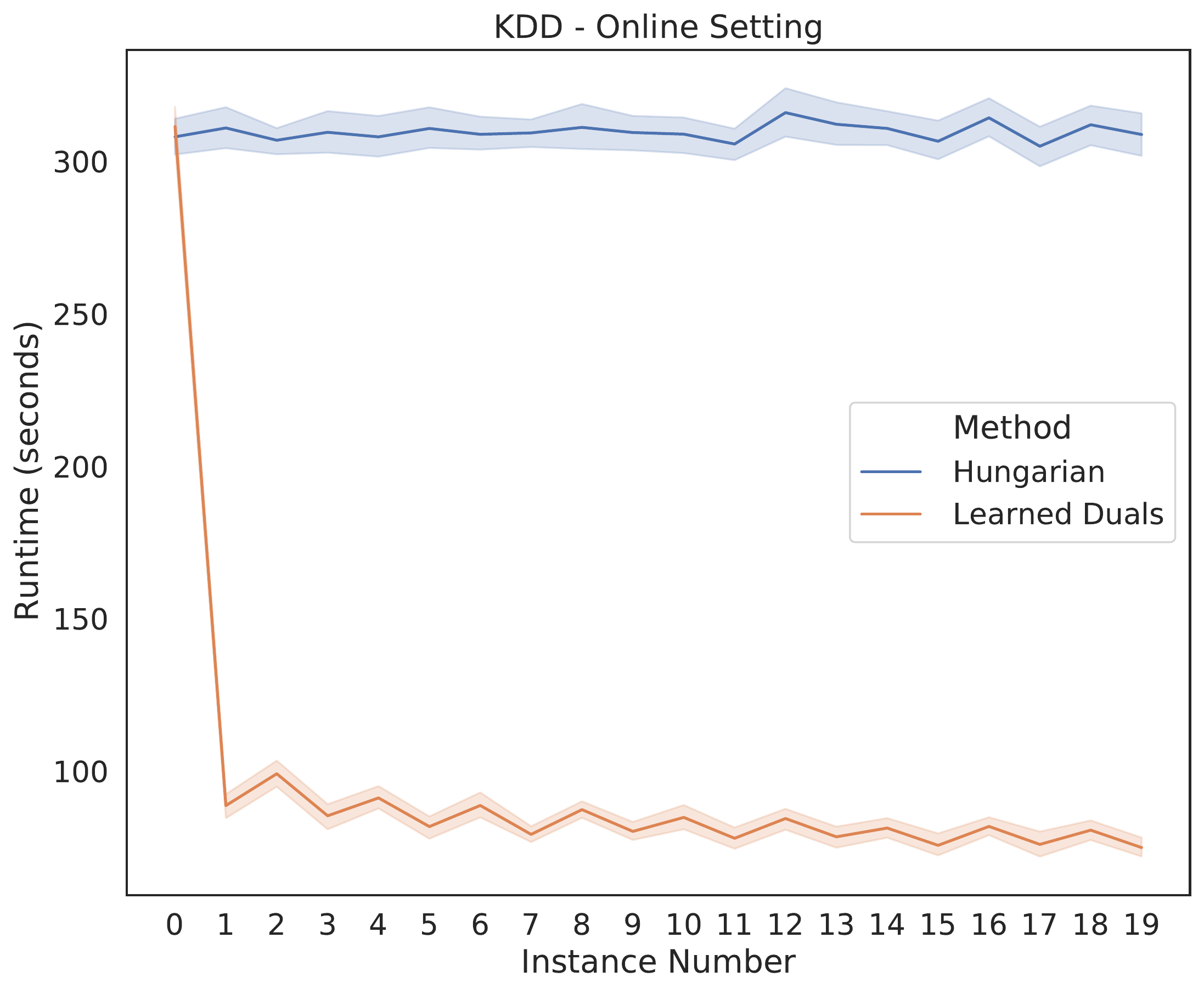}
\end{minipage}
\begin{minipage}{.33\textwidth}
    \centering
    \includegraphics[width=\textwidth]{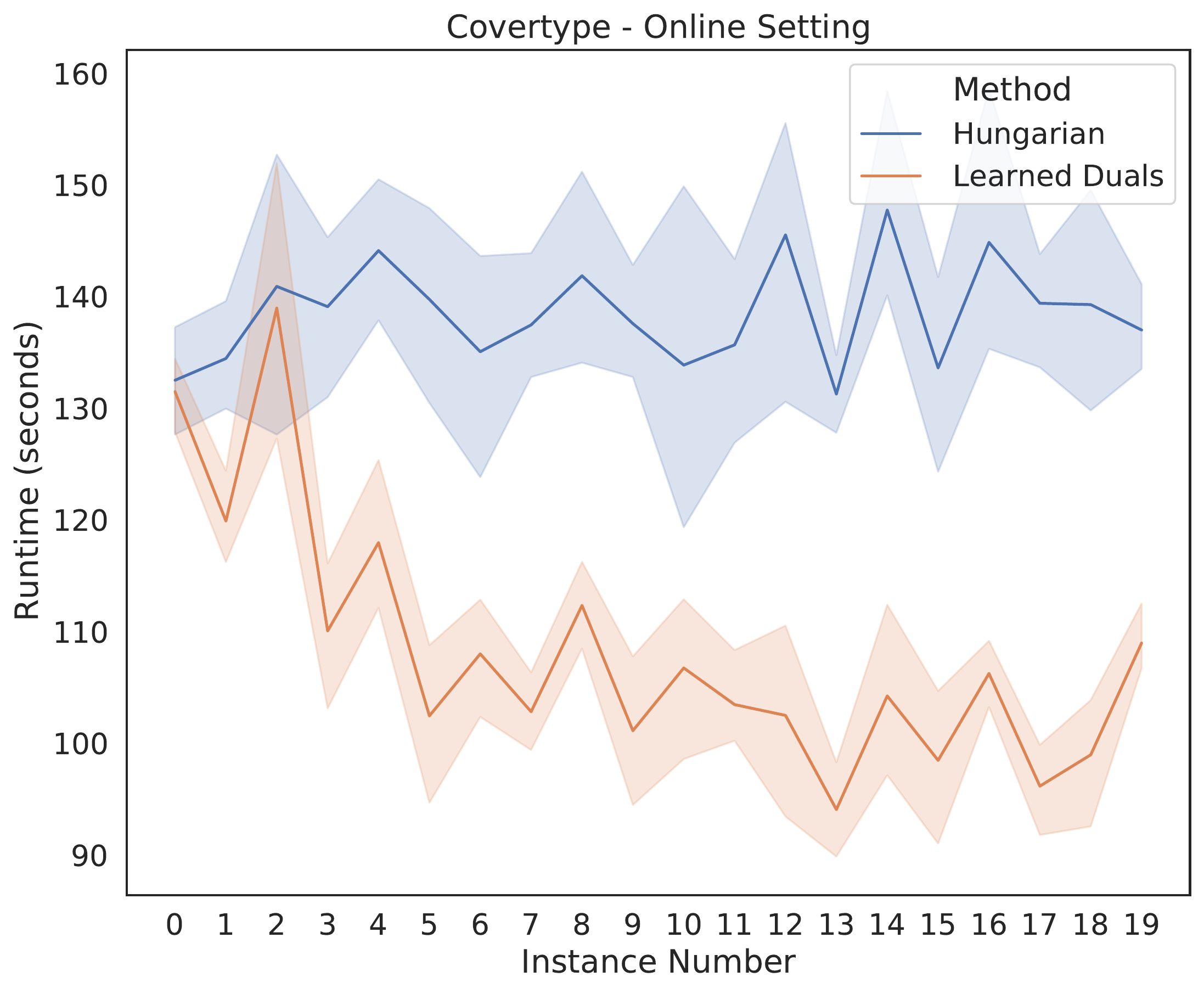}
\end{minipage}

\begin{minipage}{.33\textwidth}
    \centering
        \includegraphics[width=\textwidth]{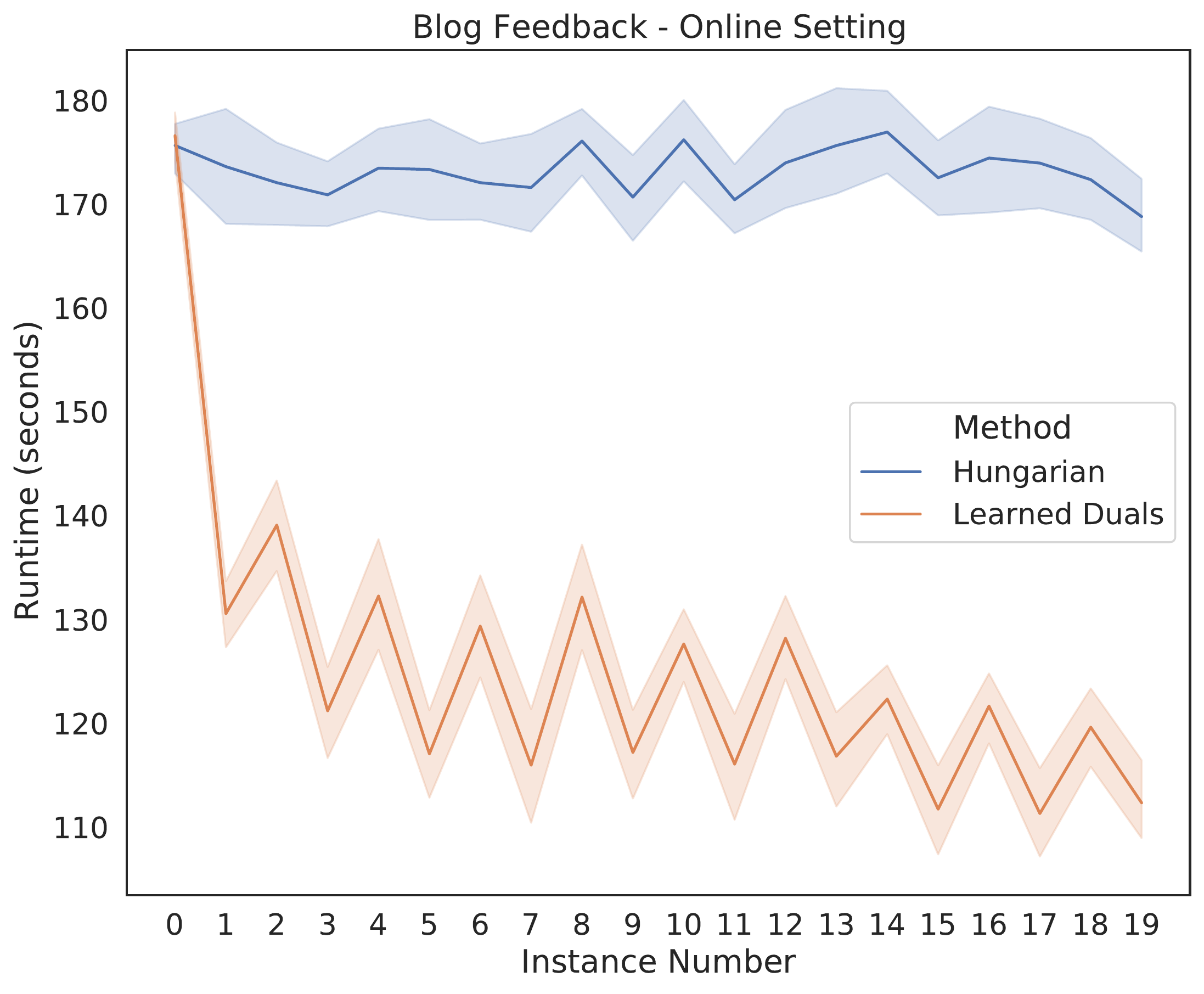}
\end{minipage}
\begin{minipage}{.33\textwidth}
    \centering
    \includegraphics[width=\textwidth]{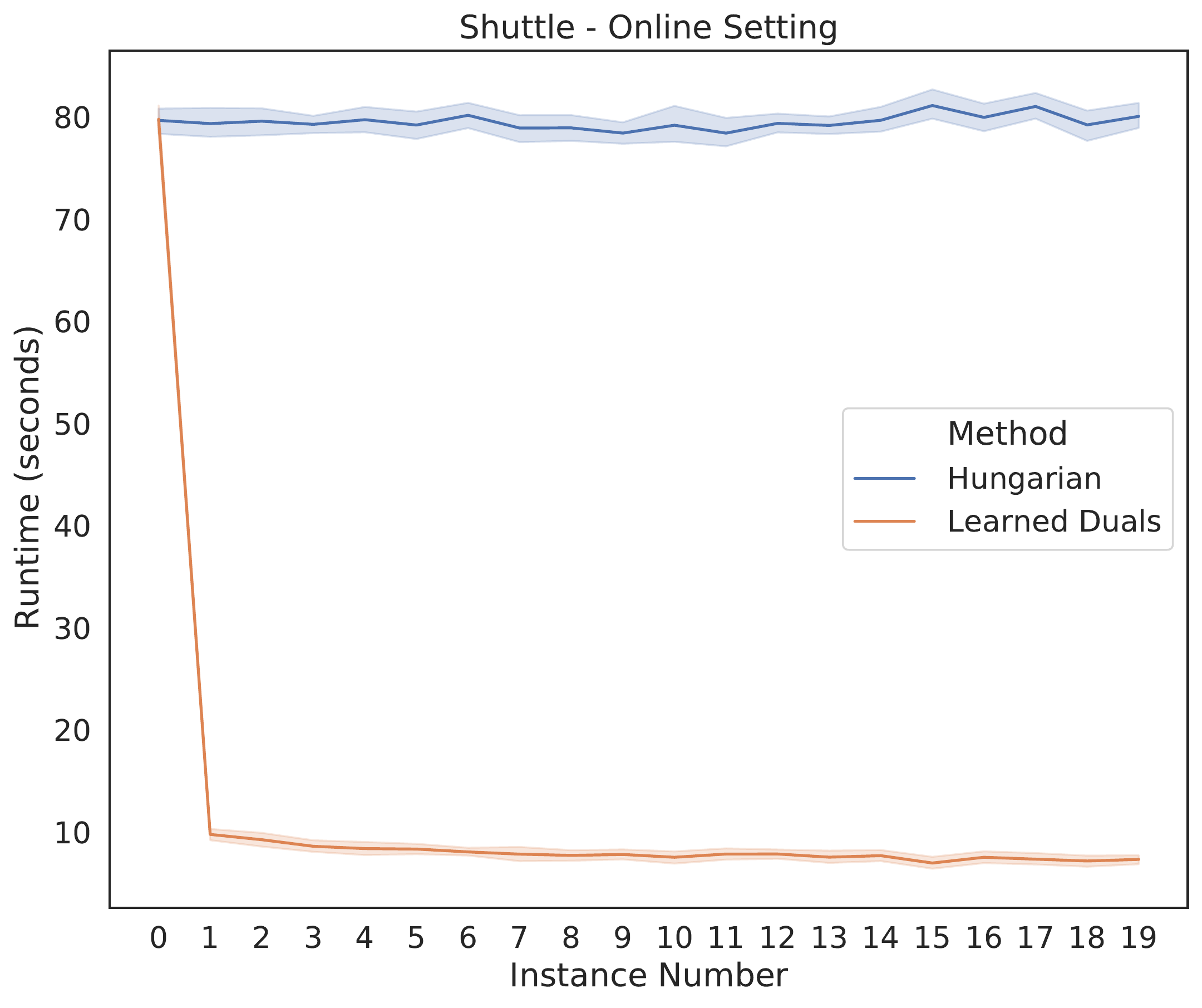}
\end{minipage}
\begin{minipage}{.33\textwidth}
    \centering
    \includegraphics[width=\textwidth]{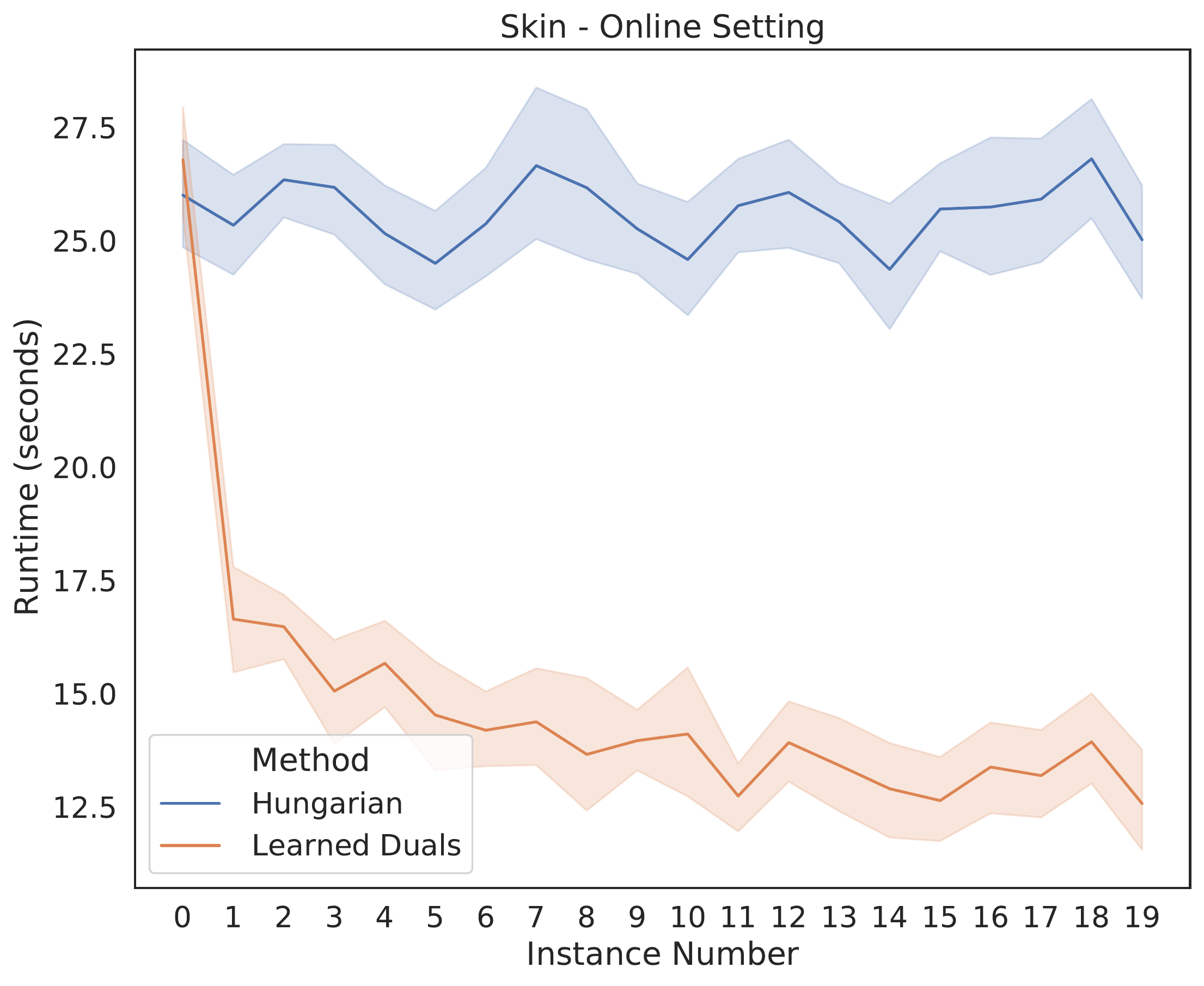}
\end{minipage}
    \caption{
    Running time results for the Online setting. The top left figure is for the type model (synthetic data). The rest, in order, are KDD and Covertype, Blog Feedback, Shuttle, and Skin. All use  $k= 500$.
    \label{fig:online2_runtime}
    }

\end{figure}

\begin{figure}[ht!]

\begin{minipage}{.33\textwidth}
    \centering
    \includegraphics[width=\textwidth]{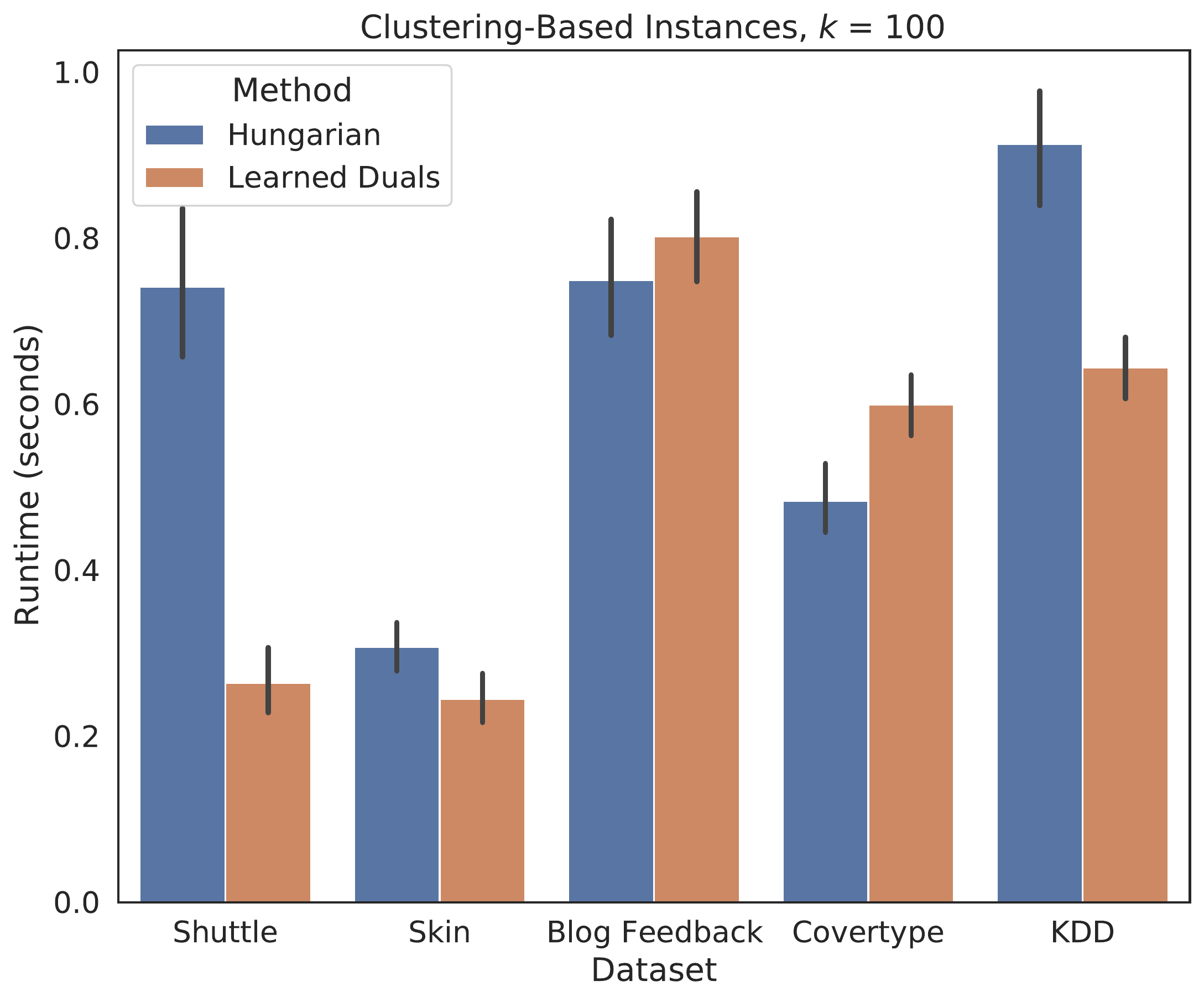}
\end{minipage}
\begin{minipage}{.33\textwidth}
    \centering
    \includegraphics[width=\textwidth]{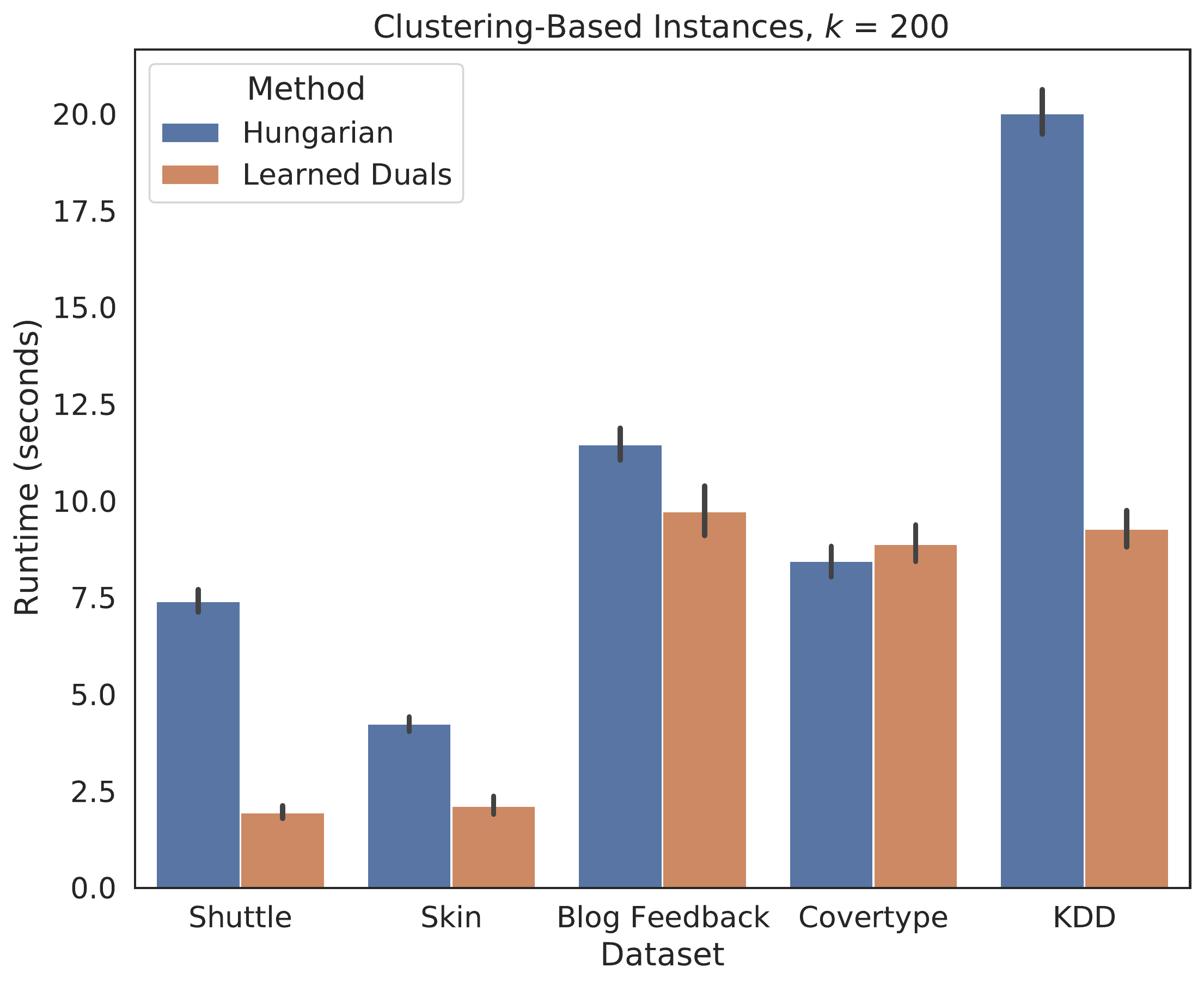}
\end{minipage}
\begin{minipage}{.33\textwidth}
    \centering
    \includegraphics[width=\textwidth]{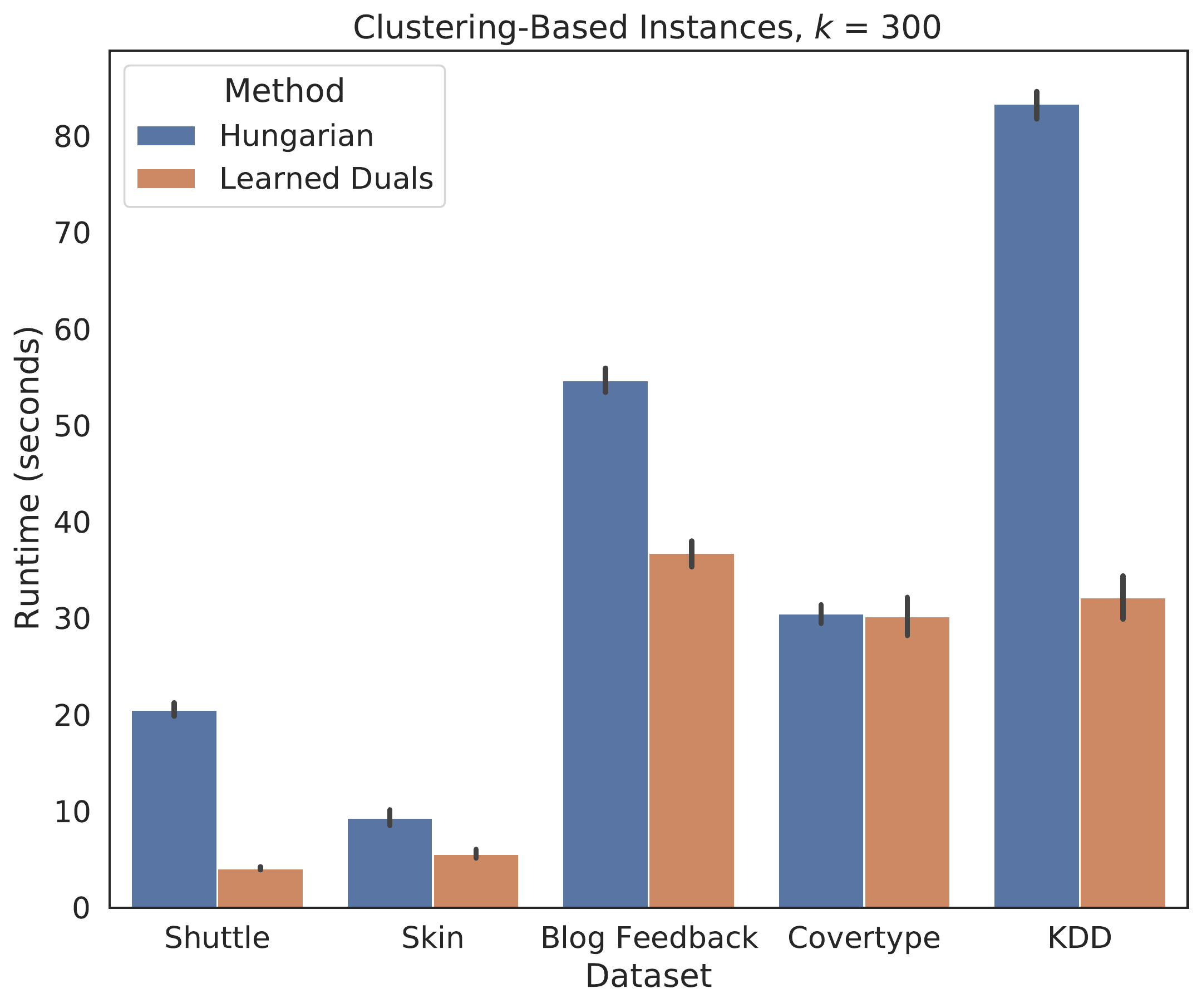}
\end{minipage}

\begin{minipage}{.33\textwidth}
    \centering
    \includegraphics[width=\textwidth]{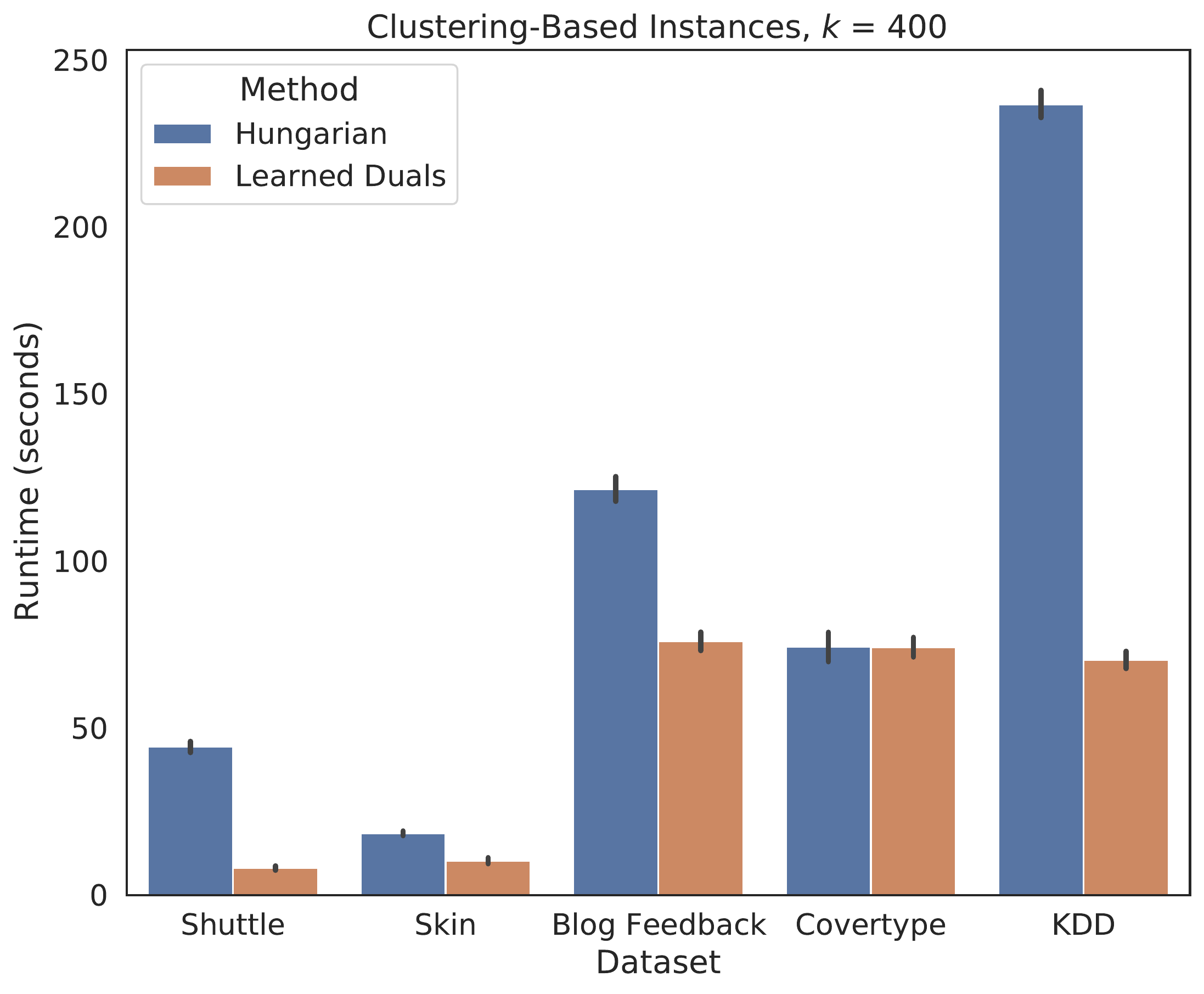}
\end{minipage}
\begin{minipage}{.33\textwidth}
    \centering
    \includegraphics[width=\textwidth]{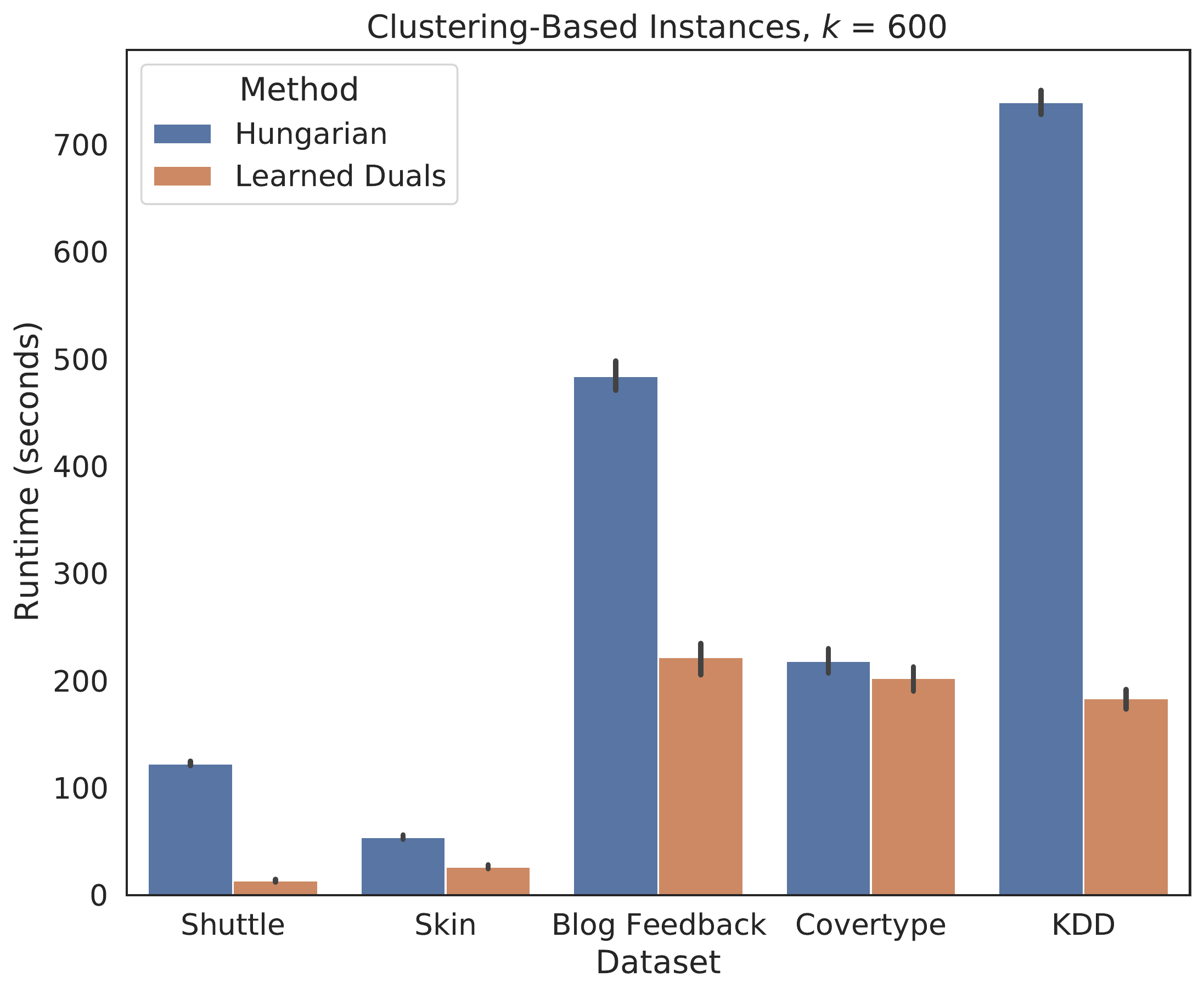}
\end{minipage}
\begin{minipage}{.33\textwidth}
    \centering
    \includegraphics[width=\textwidth]{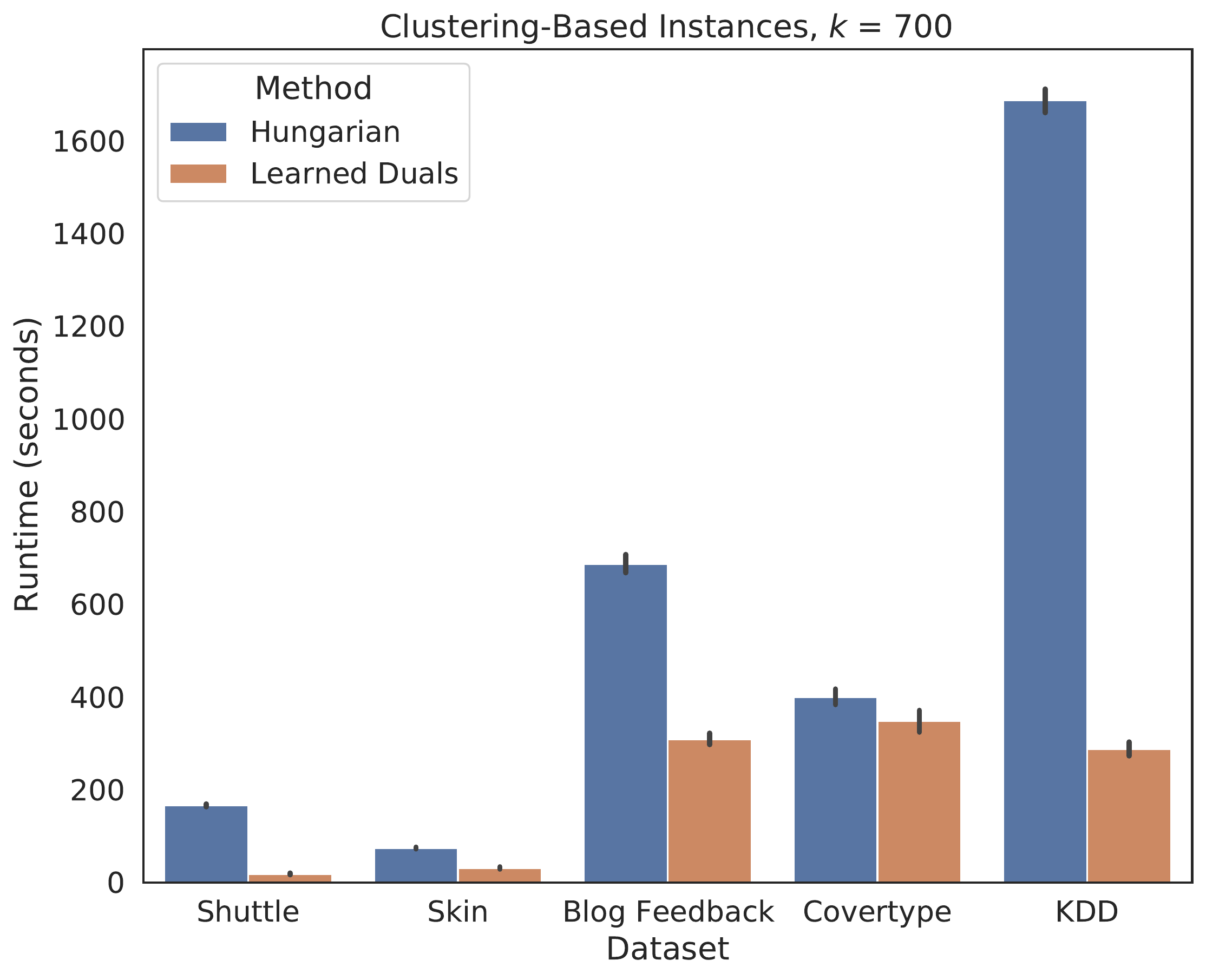}    
\end{minipage}

\begin{minipage}{.33\textwidth}
    \centering
    \includegraphics[width=\textwidth]{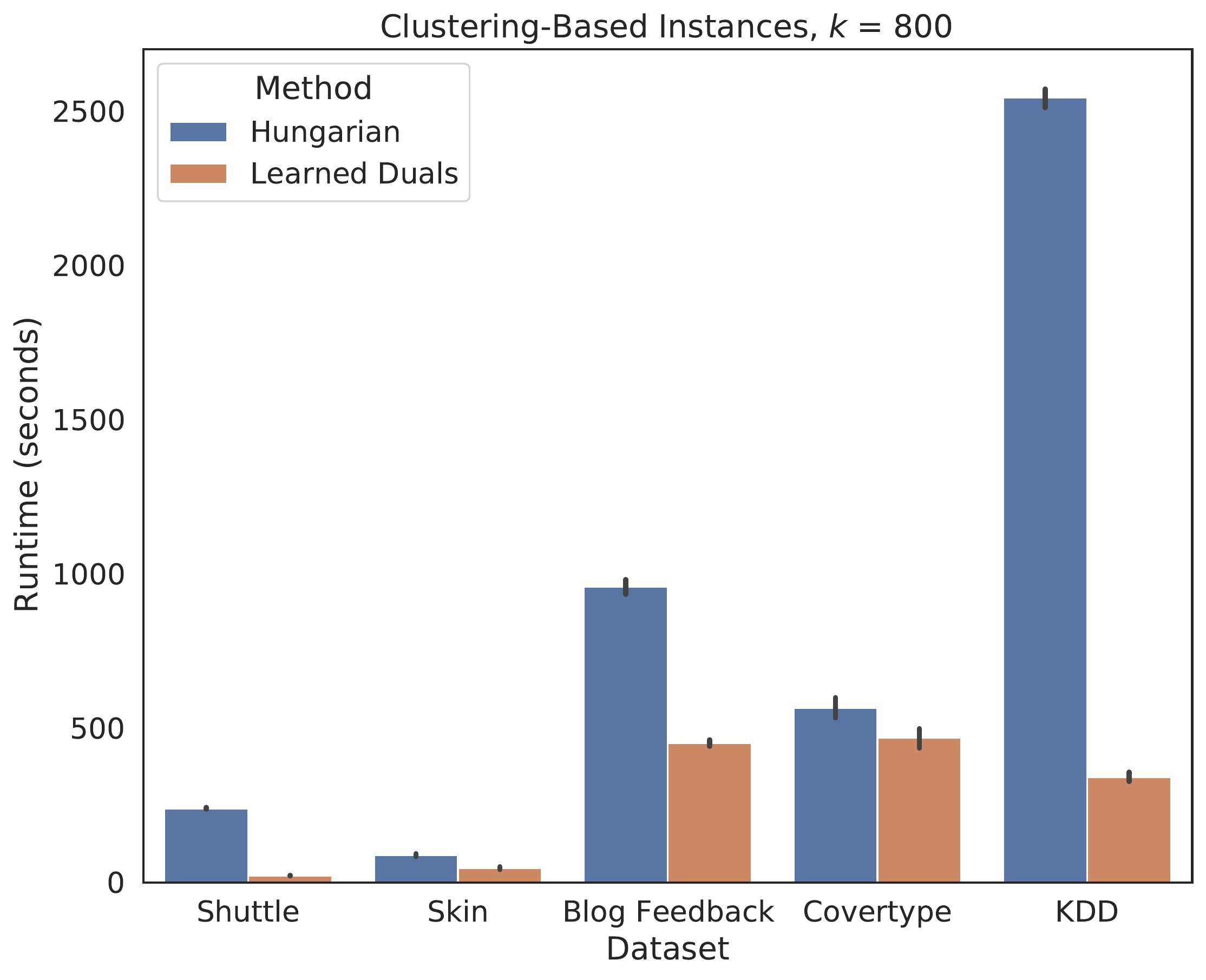}
\end{minipage}
\begin{minipage}{.33\textwidth}
    \centering
    \includegraphics[width=\textwidth]{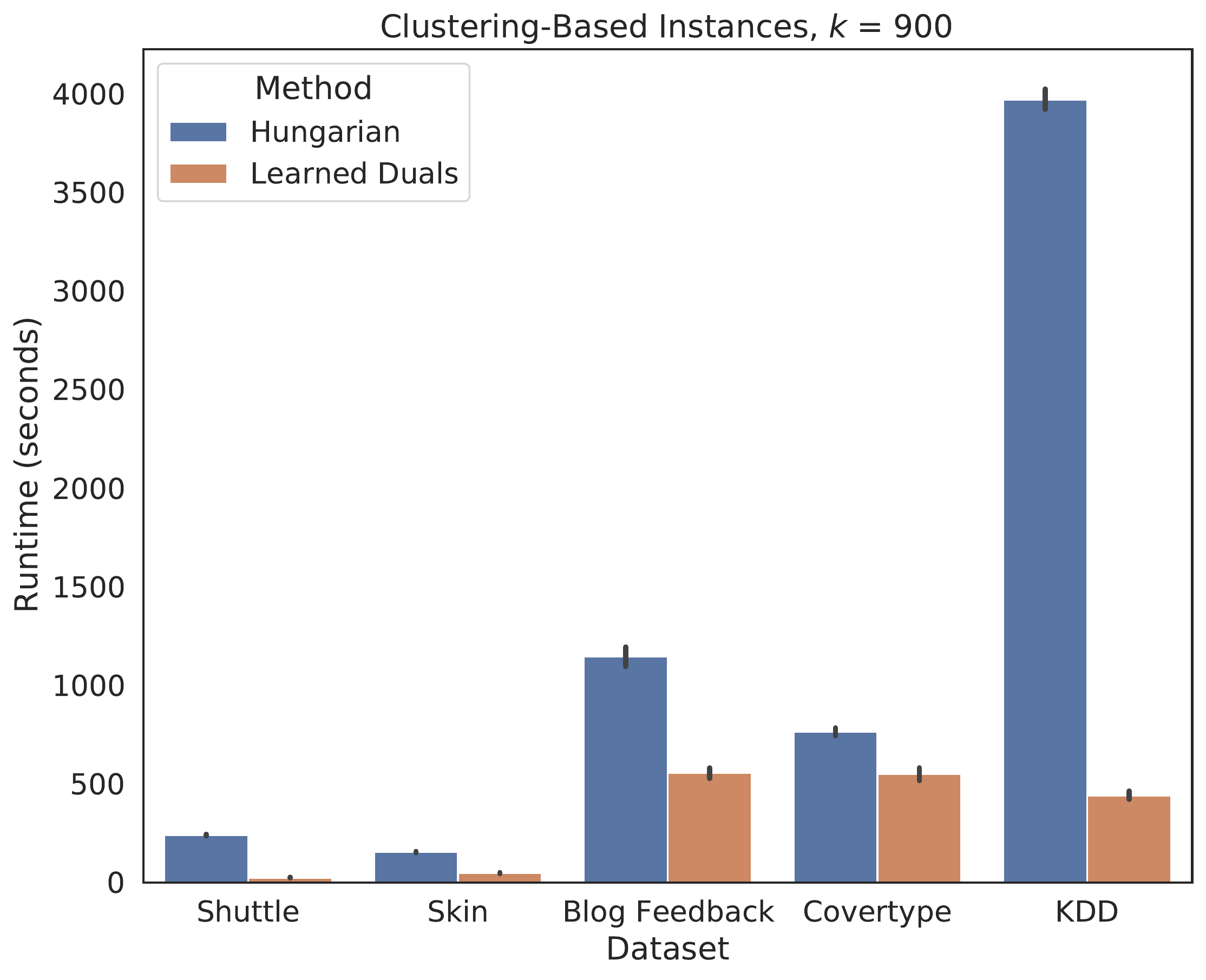}
\end{minipage}
\begin{minipage}{.33\textwidth}
    \centering
    \includegraphics[width=\textwidth]{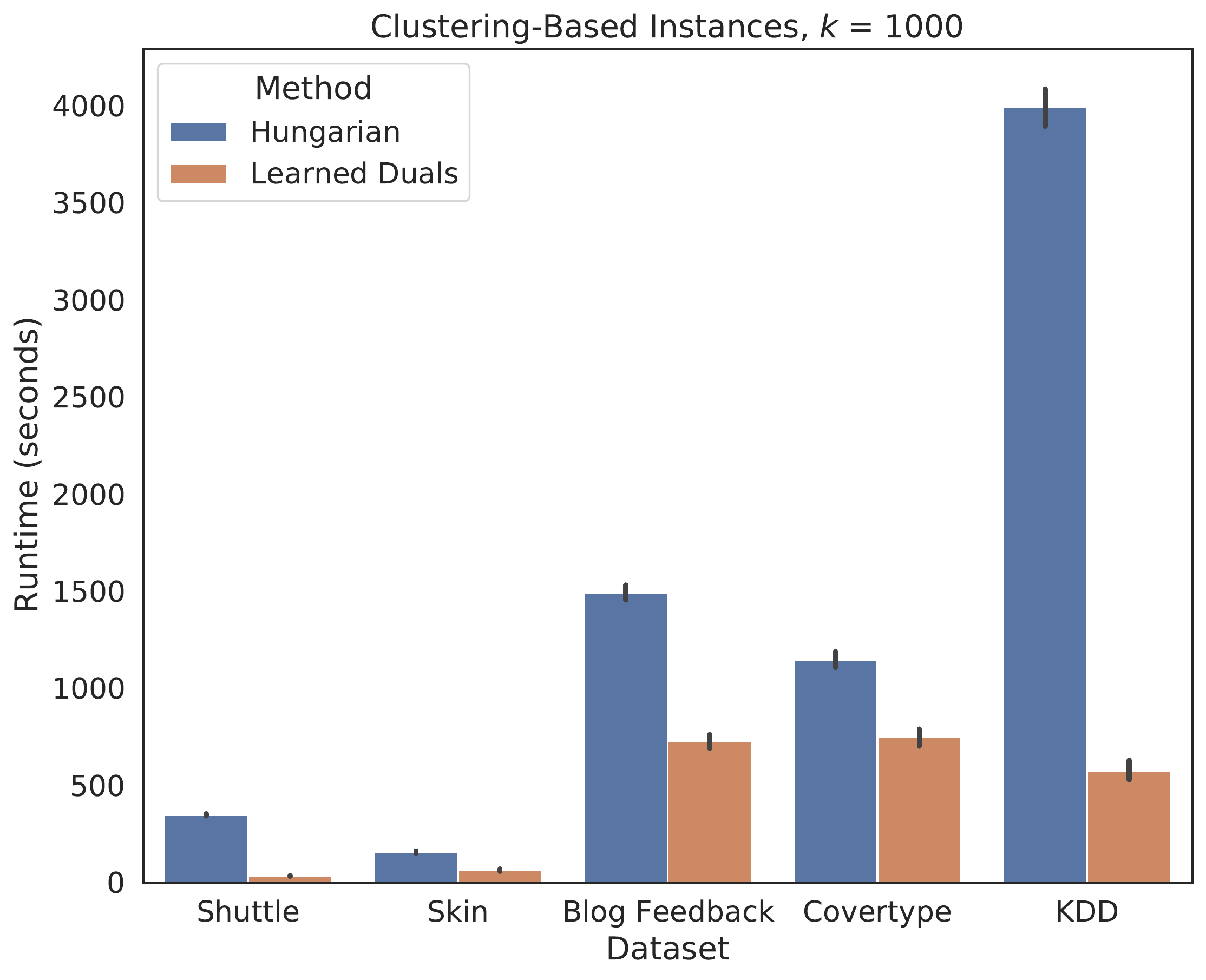}

\end{minipage}
    \vspace{-.2cm}
    \caption{
    Running time results (in seconds) for clustering derived instances in the Batch setting on other values of $k$.  Here we give the results for each $k$ in $\{100\cdot i \mid 1 \leq i \leq 10\} \setminus \{500\}$. 
    \label{fig:batch_other_k_vals_runtime}
    }
\end{figure}
}

\end{document}